\documentclass[10pt]{article}

\DeclareMathSizes{9}{8}{7}{5}
\setlength{\textwidth}{\paperwidth}
\addtolength{\textwidth}{-6cm}
\setlength{\textheight}{\paperheight}
\addtolength{\textheight}{-6cm}
\addtolength{\textheight}{-1.1\headheight}
\addtolength{\textheight}{-\headsep}
\addtolength{\textheight}{-\footskip}
\setlength{\oddsidemargin}{0.5cm}
\setlength{\evensidemargin}{0.5cm}

\newlength{\widebarargwidth}
\newlength{\widebarargheight}
\newlength{\widebarargdepth}

  \makeatletter
\long\def\@makecaption#1#2{
        \vskip 0.8ex
        \setbox\@tempboxa\hbox{\small {\bf #1:} #2}
        \parindent 1.5em  
        \dimen0=\hsize
        \advance\dimen0 by -3em
        \ifdim \wd\@tempboxa >\dimen0
                \hbox to \hsize{
                        \parindent 0em
                        \hfil 
                        \parbox{\dimen0}{\def\baselinestretch{0.96}\small
                                {\bf #1.} #2
                                } 
                        \hfil}
        \else \hbox to \hsize{\hfil \box\@tempboxa \hfil}
        \fi
        }
\makeatother

\frenchspacing 
\date{}
\setlength{\parindent}{1em}
\clubpenalty = 10000
\widowpenalty = 10000
\hfuzz = 2pt  
\usepackage[utf8]{inputenc} 
\usepackage[style = alphabetic,citestyle=alphabetic,maxbibnames=99,backend=biber,sorting=nyt,natbib=true,backref=true]{biblatex}
\addbibresource{refs.bib}
\DefineBibliographyStrings{english}{%
  backrefpage = {Cited on page},
  backrefpages = {Cited on pages},
}

\usepackage[T1]{fontenc}    
\usepackage{url}            
\usepackage{booktabs}       
\usepackage{amsfonts}       
\usepackage{nicefrac}       
\usepackage{microtype}      

\usepackage{algorithmic}
\usepackage[ruled,vlined]{algorithm2e}

\usepackage[utf8]{inputenc} 
\usepackage[T1]{fontenc}    
\usepackage[usenames,dvipsnames]{xcolor}
\usepackage{multirow}
\usepackage{mathtools}
\usepackage{amsfonts}
\usepackage{amsthm}
\usepackage{amsmath}
\usepackage{amssymb}
\usepackage{scalerel}
\usepackage{nccmath}
\usepackage{lipsum}
\usepackage{verbatim}
\usepackage{epstopdf}
\epstopdfsetup{update}
\usepackage{tikz}
\usepackage{xspace}
\usepackage{enumitem}
\usepackage{wrapfig}
\usepackage{subcaption}

\usepackage[font=small]{caption}
\setlist[itemize]{align=parleft,left=5pt..1.5em}

\usepackage[colorlinks,linkcolor = RawSienna, urlcolor  = RedViolet, citecolor = RoyalBlue, anchorcolor = ForestGreen,bookmarks=False]{hyperref}

\newcommand{\alg}{\textsc{Tart}\xspace}

\newcommand{\real}{\ensuremath{\mathbb{R}}}
\newcommand{\En}{\ensuremath{\mathbb{E}}}

\newcommand{\defn}{:\,=}
\newtheorem{theorem}{Theorem}

\newtheorem{assumption}{Assumption}

\newcommand{\1}{\ensuremath{{\sf (i)}}}
\newcommand{\2}{\ensuremath{{\sf (ii)}}}

\newcommand\inner[2]{\langle #1, #2 \rangle}

\newcommand{\red}[1]{\textcolor{red}{#1}}
\renewcommand{\red}{}

\newcommand{\gap}{\Delta}

\newcommand{\gaprep}{\gap_{{\sf rep}}}
\newcommand{\gapres}{\gap_{{\sf reas}}}
\newcommand{\gapperf}{\gap_{{\sf perf}}}

\newcommand{\normal}{\mathcal{N}}
\newcommand{\sigmoid}{\sigma}
\newcommand{\param}{w}
\newcommand{\x}{x}
\newcommand{\y}{y}

\newcommand{\dx}{d}
\newcommand{\weight}{\alpha}
\newcommand{\loss}{\ell}
\newcommand{\klr}{k}

\newcommand{\T}{\mathcal{T}}
\newcommand{\Tres}{T_S}
\newcommand{\nsyn}{n_{{\sf syn}}}
\newcommand{\Psyn}{P_{{\sf syn}}}
\newcommand{\Pnl}{P_{{\sf NL}}}
\newcommand{\err}{{\sf err}}

\newcommand{\Tr}{T}
\newcommand{\paramT}{\theta}
\newcommand{\lossce}{\loss_{{\sf CE}}}
\newcommand{\seq}{s}
\newcommand{\samp}{S}
\newcommand{\joint}{\gamma}

\newcommand{\gptsmall}{\textsc{GPT-Neo (125M)}\xspace}
\newcommand{\gptmedium}{\textsc{GPT-Neo (1.3B)}\xspace}
\newcommand{\gptlarge}{\textsc{GPT-Neo (2.7B)}\xspace}

\newcommand{\gptj}{\textsc{GPT-J (6B)}\xspace}
\newcommand{\pythiasmall}{\textsc{Pythia (160M)}\xspace}
\newcommand{\pythiamedium}{\textsc{Pythia (1.4B)}\xspace}
\newcommand{\pythialarge}{\textsc{Pythia (2.8B)}\xspace}

\newcommand{\bloomsmall}{\textsc{Bloom (560M)}\xspace}
\newcommand{\bloommedium}{\textsc{Bloom (1.7B)}\xspace}
\newcommand{\bloomlarge}{\textsc{Bloom (3B)}\xspace}
\newcommand{\whisper}{\textsc{Whisper-large}\xspace}
\newcommand{\vit}{\textsc{ViT-large-patch16-224-in21k}\xspace}

\newcommand{\bloomhuge}{\textsc{Bloom (176B)}\xspace}
\newcommand{\opthuge}{\textsc{OPT (175B)}\xspace}

\newcommand{\gpt}{\textsc{GPT-3 (175B)}\xspace}
\newcommand{\neo}{\textsc{GPT-Neo}\xspace}
\newcommand{\pythia}{\textsc{Pythia}\xspace}
\newcommand{\bloom}{\textsc{Bloom}\xspace}

\newcommand{\acclr}{\text{Acc}_{{\sf LR}}}
\newcommand{\accft}{\text{Acc}_{{\sf FT}}}
\newcommand{\accicl}{\text{Acc}_{{\sf ICL}}}

\newcommand{\kt}{k}
\newcommand{\dt}{d}
\newcommand{\PT}{\Theta}
\newcommand{\Pd}{P}
\usepackage[verbose=true,letterpaper]{geometry}

\AtBeginDocument{
  \newgeometry{
    textheight=9in,
    textwidth=5.96in,
    top=1in,
    bottom=1.5in,
    headheight=12pt,
    headsep=25pt,
    footskip=30pt,
    footnotesep=13pt,
  }
}
\setlength{\oddsidemargin}{0.5cm}
\setlength{\evensidemargin}{0.5cm}

\title{\textbf{\alg: A plug-and-play Transformer module for\\ 
task-agnostic reasoning}}

\author{%
  Kush Bhatia$^{\dagger}$\thanks{Equal Contribution}
  \and 
  Avanika Narayan$^{\dagger *}$
  \and 
  Christopher De Sa$^{\ddagger}$
  \and Christopher R\'e$^{\dagger}$
}

\begin{document}
\maketitle
\vspace{-10mm}
\begin{center}
\text{  $^\dagger$ Department of Computer Science, Stanford University}\\
  \text{$^\ddagger$ Department of Computer Science, Cornell University}\\
  \vspace{2mm}
   \texttt{\{kushb, avanika, chrismre\}@cs.stanford.edu, cdesa@cs.cornell.edu} 
\end{center}
\vspace{5mm}
\begin{abstract}
Large language models (LLMs) exhibit in-context learning abilities which enable the same model to perform several tasks without any task-specific training. In contrast, traditional adaptation approaches, such as fine-tuning, modify the underlying models for \emph{each} specific task. In-context learning, however, consistently underperforms task-specific tuning approaches \emph{even} when presented with the same examples. While most existing approaches (e.g., prompt engineering) focus on the LLM's learned representations to patch this performance gap, our analysis actually reveal that LLM representations contain sufficient information to make good predictions. As such, we focus on the LLM's reasoning abilities and demonstrate that this performance gap exists due to their inability to perform simple probabilistic reasoning tasks. 
This raises an intriguing question: Are LLMs actually capable of learning how to reason in a task-agnostic manner? We answer this in the affirmative and propose \alg which generically improves an LLM's reasoning abilities using a synthetically trained Transformer-based reasoning module. \alg trains this reasoning module in a task-agnostic manner using \emph{only synthetic} logistic regression tasks and composes it with an arbitrary real-world pre-trained model without any additional training. With a single inference module, \alg improves performance across different model families (\neo, \pythia, \bloom), model sizes (100M - 6B), tasks (14 NLP binary classification tasks), and even across different modalities (audio and vision). Additionally, on the RAFT Benchmark, \alg improves \gptsmall's performance such that it outperforms \bloomhuge, and is within $4\%$ of \gpt.\footnote{Our code and model is available at \url{https://github.com/HazyResearch/TART}}
\end{abstract}
\section{Introduction}\label{sec:intro}
Large language models (LLMs) 
show in-context learning capabilities which enable them to perform a task given only a few examples, without updating the model parameters~\citep{brown2020language, bommasani2021opportunities}. This task-agnostic capability allows for a single model to be applied to a wide range of tasks~\citep{agrawal2022large, wei2022emergent, narayandata}. In contrast, traditional task adaptation approaches, such as fine-tuning, update the model parameters for each specific task.

Despite being task-agnostic, in-context learning is seldom the practitioner's method of choice since it consistently underperforms task-specific adaptation approaches~\citep{lester2021power, brown2020language}. Most existing works attribute this performance gap to the limited context window of LLMs which can only accommodate a few task examples~\citep{kocon2023chatgpt, huyenchip:2023, liu2022few}. However, we show that this gap between in-context learning and fine-tuning approaches exists \emph{even} when presented with the same task examples. 


This observation raises the question whether this performance gap is a generic limitation of task-agnostic methods for adaptation or is it specific to in-context learning? Specifically, can we design adaptation approaches which satisfy the following desiderata:
\begin{itemize}
    \item \emph{Task-agnostic}: The same model generalizes across several different tasks.\vspace{-1mm}
    \item \emph{Quality}: Achieves accuracy competitive with task-specific methods across these different tasks.\vspace{-1mm}
    \item \emph{Data-scalable}: Learning ability improves with increasing number of task examples.
\end{itemize}

We first investigate why this quality gap exists. We decompose an LLM's in-context learning capability into two abilities: learning good \emph{representations} for the task and performing probabilistic inference, or \emph{reasoning}, over these  representations~\citep{jaynes2003probability}. Is the gap because the representations do not contain sufficient information or because the LLMs are unable to reason over them? 
%
\begin{figure}
    \centering
    \includegraphics[width = \textwidth]{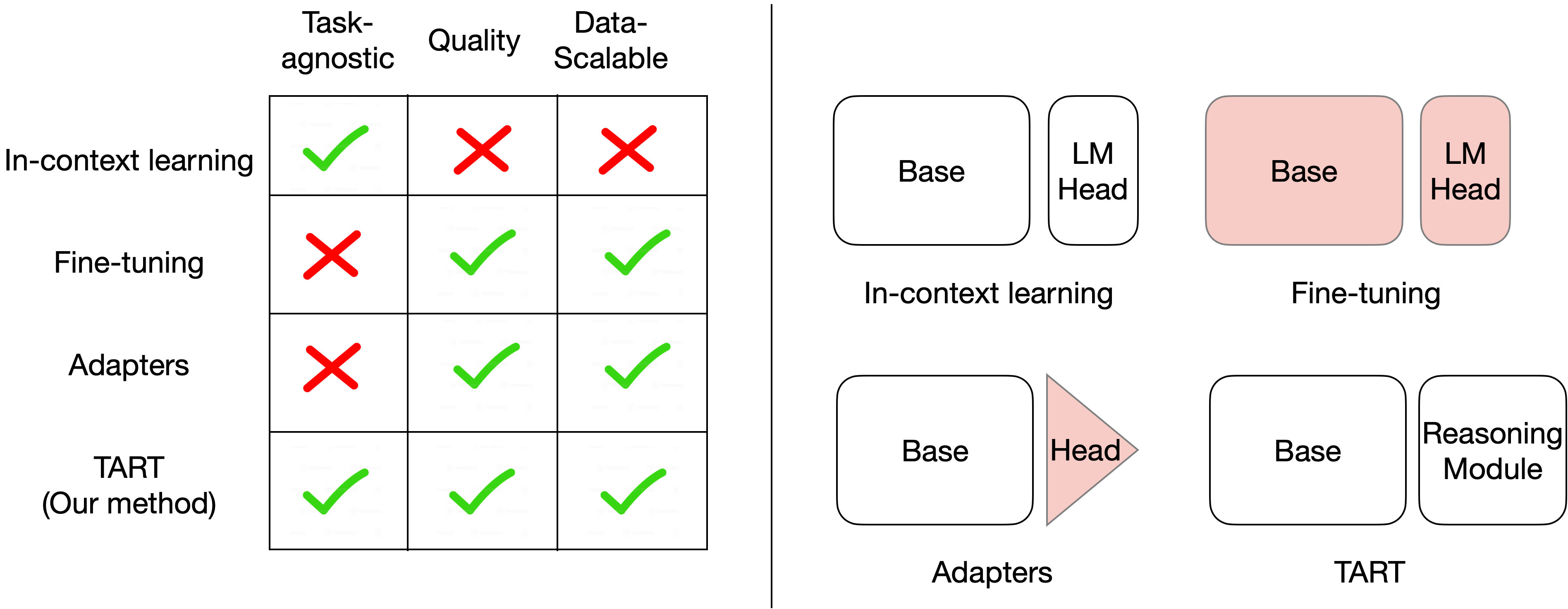}
    \caption{\textbf{Taxonomy of task adaptation strategies.} (Left) Comparison of different adaptation strategies across three desiderata: task-agnostic, quality, scalability. (Right) Parameter updates across adaptations strategies, colored regions represent parameter changes as a result of the adaptation strategy.}
    \label{fig:algs}
\end{figure}
We explore this hypothesis experimentally in Section~\ref{sec:strats} by measuring both the reasoning and the representation gaps across a variety of LLM families (\neo~\citep{gpt-neo}, \pythia~\citep{biderman2023pythia}, \bloom~\citep{scao2022bloom}) over a suite of binary classification tasks. We conclude that LLMs possess good representations, and the majority of the quality gap (up to $79$\%) can be attributed to their insufficient reasoning ability. We further find that fine-tuning improves the base model on both these axes, but primarily improve the task specific reasoning ability which accounts for $72$\% of the gained performance.

Rather surprisingly, most existing techniques for improving the performance gap, such as prompt engineering or active example selection, focus entirely on the LLM's learned representations. In contrast, our work explores the orthogonal direction of improving the LLM's reasoning abilities. 
%
As a first step, we fine-tune LLMs using synthetically generated probabilistic inference tasks to improve their reasoning capabilities. While this approach provides an improvement over the model's base in-context learning performance (up to $19$\%, see Figure~\ref{fig:ft-nl-app} in App.~\ref{app:ft-nl}), this approach requires one to fine-tune each LLM individually. Taking a step further, we consider the possibility of whether one can improve the reasoning capabilities in a manner that is agnostic to \emph{both} tasks and models.



We show that it is indeed possible to improve the reasoning capabilities in a completely agnostic manner. We propose \alg  which improves upon an LLM's reasoning abilities using a synthetically trained reasoning module (see Figure~\ref{fig:main}). \alg trains a Transformer-based reasoning module using only synthetically generated logistic regression tasks independent of the downstream task or the base LLM. This inference module can be composed, \emph{without any additional training}, with the embeddings of an LLM to improve upon its reasoning abilities. 
Notably, \alg satisfies the desired objectives:
\begin{itemize}
    \item \emph{Task-agnostic}: \alg's inference module is only trained once using synthetic data.\vspace{-1mm}
    \item \emph{Quality}: Outperforms base LLM on all tasks and closes gap to task specific fine-tuning methods.\vspace{-1mm}
    \item \emph{Data-scalable}: Can accommodate 10x more examples than in-context learning.
\end{itemize}
\alg is \emph{task, model, and domain} agnostic. Using a single inference module trained on synthetic data, we exhibit that \alg not only generalizes across three model families (\neo, \pythia, \bloom) over 14 NLP classification tasks, but even across different domains (vision and speech; see Figure~\ref{fig:modalities}).
%
%
In terms of quality, we show that \alg's performance is \red{18.4\%}  better than in-context learning, \red{3.4\%} better than task-specific adapters, and is within \red{3.1\%} of full task-specific fine-tuning across a suite of NLP tasks. On the RAFT Benchmark~\citep{alexraft}, \alg improves \gptsmall's performance such that it outperforms \bloomhuge, and is within $4\%$ of \gpt. 
\alg is data-scalable and overcomes the limited context length bottleneck of in-context learning. While each example spans multiple tokens in an LLM, often spanning hundreds of tokens, \alg's reasoning module encodes each example using only two tokens -- one for the context and the other for the label. 
This data-scalability can lead to improvements of up to \red{$6.8\%$} (see Figure~\ref{fig:context-window}).

From a theoretical standpoint, we show that the generalization abilities of \alg depends mainly on the distribution shift between the natural text embedding distribution produced by the LLM and the synthetic data distribution, measured in terms of the Wasserstein-1 metric (Theorem~\ref{thm:gen}).

To summarize, our main contributions are as follows:\vspace{-1mm}
\begin{itemize}
\item Study why in-context learning does not perform as well as task-specific fine-tuning despite having access to the same information, via a representation-reasoning decomposition.\vspace{-1mm}
\item Propose a new task-agnostic method, \alg, which bridges the performance gap to task-specific methods and is trained using only synthetic data.\vspace{-1mm}
\item Demonstrate that \alg works across different NLP tasks for a range of model families. The same inference module generalizes to vision and speech domains as well.
\end{itemize}

\begin{figure}[t!]
    \centering
    \includegraphics[width=1\textwidth]{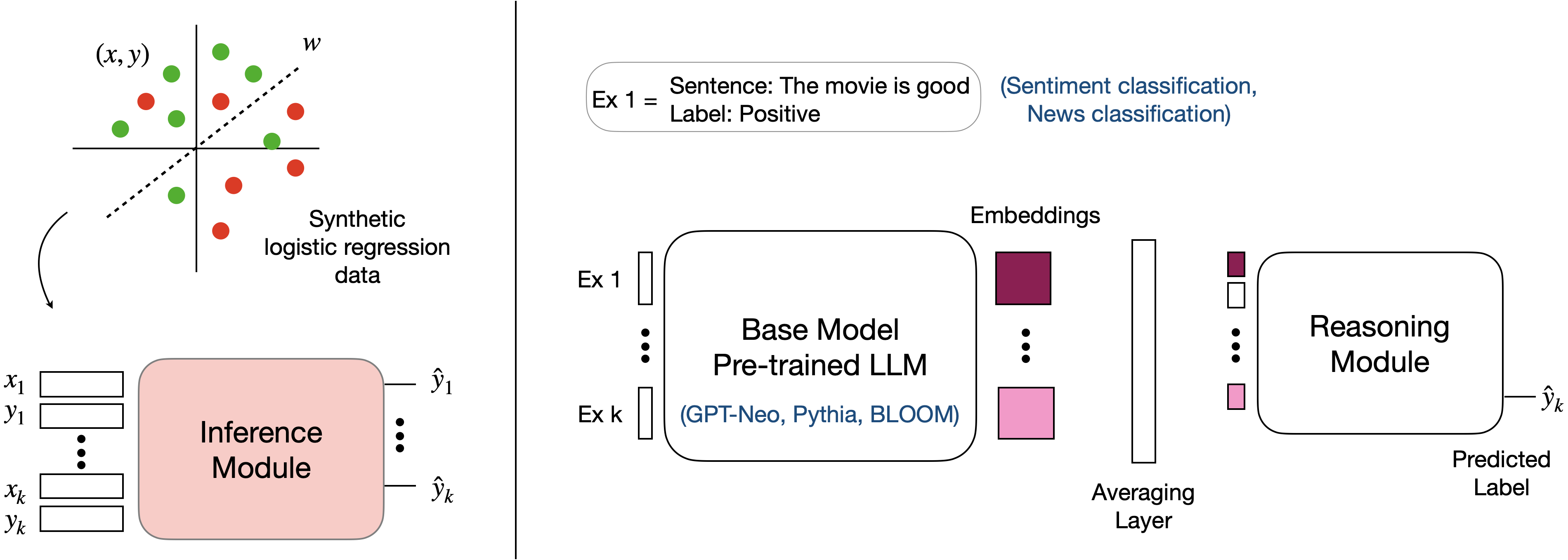}
    \caption{\textbf{\alg}. (Left) Inference module training procedure: The inference module is trained on sequences of synthetically generated logistic regression tasks. (Right) End-to-end framework: \alg composes a pre-trained LLM with the inference module. \alg uses the LLM to embed the input text. These embeddings, along with the train labels, are passed as a sequence to the inference module which generates a final prediction.}
    \label{fig:main}
\end{figure}
\section{Related work}
\label{sec:rel-work}





Prompt engineering focuses on improving the in-context task adaptation abilities of LLMs by modifying prompts. A line of work improves performance by carefully designing the natural language task specifications ~\cite{arora2022ask, wei2022chain} while others improve performance by optimizing the examples chosen for the prompt~\cite{diao2023active, liu2022makes}, encouraging the models to sequentially reason~\cite{kojima2022large, wei2022chain, zelikmanstar} and aggregating prompts~\citep{wang2022self, wang2022rationale}. Unfortunately, prompt-based task adaptation is noisy~\citep{lu2022fantastically}. 
Alternatively, prompt tuning improves the in-context abilities of models by training a small amounts of learnable vectors~\citep{li2021prefix,lester2021power, liu-etal-2022-p} for specific tasks. While these methods have been shown to improve in-context learning performance, they require task-specific fine-tuning and are not task-agnostic. 

Recent works seek to understand the in-context learning property of LLMs by presenting mechanistic interpretations of in-context learning~\citep{von2022transformers}, performing exploratory analysis of in-context learning behaviors~\citep{wei2023larger}, and explaining it as implicit Bayesian inference~\cite{xie2021explanation}. Existing literature demonstrates that LLMs can learn simple function classes in-context~\citep{garg2022can} and propose that LLMs are performing gradient descent when learning tasks in-context~\citep{von2022transformers}. 
Complementary to these, our work provides insights on the mechanisms of in-context learning and its deficiencies. 
 %
 Furthermore, task transfer strategies adapt LLMs to a pre-specified target task. Strategies range from  parameter efficient finetuning (PEFT)~\citep{houlsby2019parameter, zhang2023llamaadapter} to Low-Rank adaptation (LoRA)~\citep{hu2022lora} which introduces trainable rank decomposition matrices into each layer to combining linear probing and fine-tuning~\cite{kumar2022fine}. While they have good performance, these methods require training models on a task-by-task basis in contrast to \alg. 

\section{Task adaptation strategies: Taxonomy and evaluation}\label{sec:strats}
We begin by describing the problem of adapting pre-trained language models for a collection of downstream tasks while being task-agnostic, competent in performance, and data-scalable. Given these criteria, we evaluate existing task adaptation approaches and propose a representation-reasoning decomposition to understand their relative performances.  

\subsection{Problem statement and evaluation criteria}
Our focus is on methods for adapting pre-trained large language models (LLMs) for downstream tasks. Specifically, given an LLM and limited labeled data for a task, how does one adapt the model to the task? 
When evaluating a task adaptation strategy, we care about the following properties: 

\paragraph{Task-agnostic.} Given the general capabilities of pre-trained LLMs, we strive to utilize the same model across different tasks without requiring any task-specific training. With the increase in model sizes, the cost of deploying task-specific models increase both during training (expensive hyper-parameter search) as well as during inference (deploying several models). In general, task-agnostic methods will scale better with increasing model sizes by side-stepping both these costs.

\paragraph{Performance quality.} We would like the adaptation approaches to be competitive in performance when compared with task-specific approaches across a wide range of tasks. For the binary classification tasks, the method should have accuracy comparable with task-specific approaches. 

\paragraph{Data-scalable.} The task adaptation method should be scalable with the number of labeled task examples. In particular, the method should be capable of learning from large datasets, and continually improve its performance quality.

\subsection{Taxonomy of task adaptation strategies}
\label{sec:taxonomy}
We can broadly taxonomize the existing task adaptation strategies for LLMs as in-context learning, fine-tuning the model, and training task-specific adapters (see Figure~\ref{fig:algs}).

\paragraph{In-context learning.} In-context learning allows for adapting the model without updating any model parameters, by simply providing a few demonstrations of the task in the LLM prompt. 
In-context learning is completely task-agnostic since the same model can be used across tasks since no weights are updated at inference time. However, its performance is usually not at par when compared with task-specific methods and it does not scale well with data since the number of examples that can be utilized is bottlenecked by the context length of the model.

\paragraph{Fine-tuning.} This traditional class of methods update the model weights to adapt it specifically for the task, typically by performing gradient descent over the labeled dataset. 
Fine-tuning methods are not task-agnostic since they change the underlying model significantly but usually achieve state-of-the-art performance for any given task and are data scalable.

\paragraph{Adapters.} Adapters adapt the underlying LLM to a specific task by composing the LLM base model with an additional set of parameters which are optimized for the task. In contrast to fine-tuning which performs updates to the base model, adapters keep the base model frozen and only update the additional parameters. Performance of adapters is usually competitive with full fine-tuning. 

\subsection{Understanding performance via Representation-Reasoning decomposition}\label{sec:rep-reas}
From the taxonomy of task adaptation approaches, only in-context learning satisfies the task-agnostic property but it consistently underperforms the task-specific tuning approaches. This section investigates why this performance gap exists. We hypothesize that it is either because (a) the representations learned by the LLM are insufficient to learn a good predictor for the specific task, or (b) the LLM lacks the capability to reason over these representations to make good predictions for the task.

To understand whether the representations have sufficient information, we train a task-specific linear classifier using these representations, also known as linear probing, and evaluate its accuracy. Let $\accft$, $\accicl$, and $\acclr$ denote the accuracies obtained by fine-tuning, in-context learning, and by linear probing respectively.  Using this as an intermediate, we decompose the performance gap
{
\begin{equation}\label{eq:rep-decomp}
\gapperf \defn \accft - \accicl = \underbrace{\accft - \acclr}_{\gaprep} +\underbrace{\acclr - \accicl}_{\gapres}
\end{equation}
}
where $\gaprep$ represents the gap in performance which can be attributed to insufficient representation capacity and $\gapres$ is the performance gap due to insufficient reasoning abilities. Using this decomposition, we consider the following hypotheses:\vspace{-2mm}
\begin{enumerate}
    \item[H1.] LLM representations have enough information to perform the task in-context, but they lack the reasoning abilities to perform the task well.
    \item[H2.] Fine-tuning affects both the representations and reasoning but the improvement in reasoning abilities primarily leads to better performance.
    \item[H3.] Fine-tuning and adapters are not task-agnostic because the task-specific training hurts their ability to transfer reasoning.
\end{enumerate}
We now analyze each of the task adaptation approaches through the lens of the above hypotheses. We perform all experiments with three different classes of language models (\neo, \pythia, \bloom) across a collection of 6  binary classification tasks. 
See Appendix~\ref{app:sec-2} for further details.

\begin{figure*}
    \centering
    \begin{subfigure}[b]{0.31\textwidth}
        \centering
        \includegraphics[width=1\textwidth]{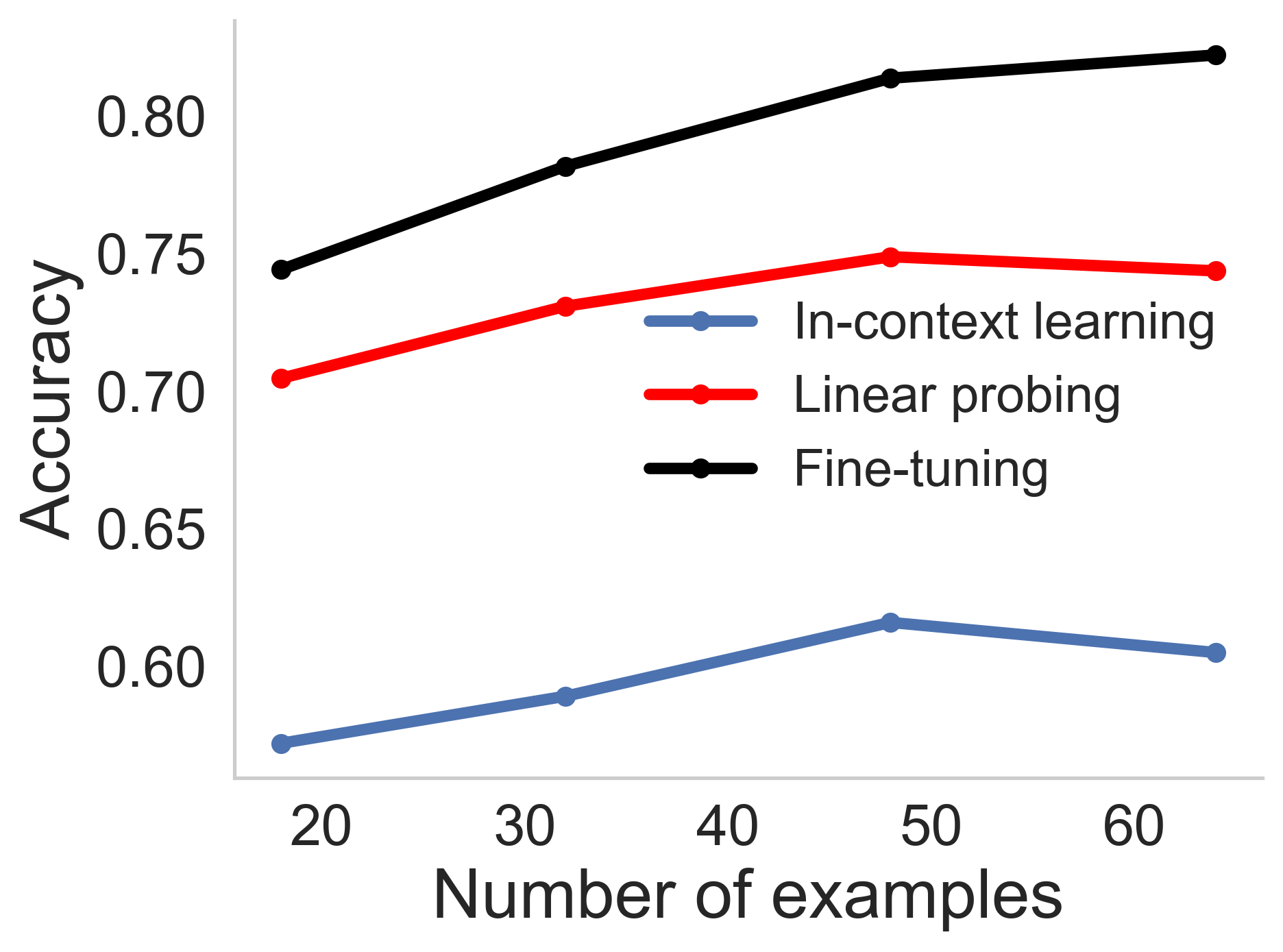}
        \subcaption{}
        \label{fig:icl-micro-main}
    \end{subfigure}
    \begin{subfigure}[b]{0.31\textwidth}
        \centering
        \includegraphics[width=0.6\textwidth]{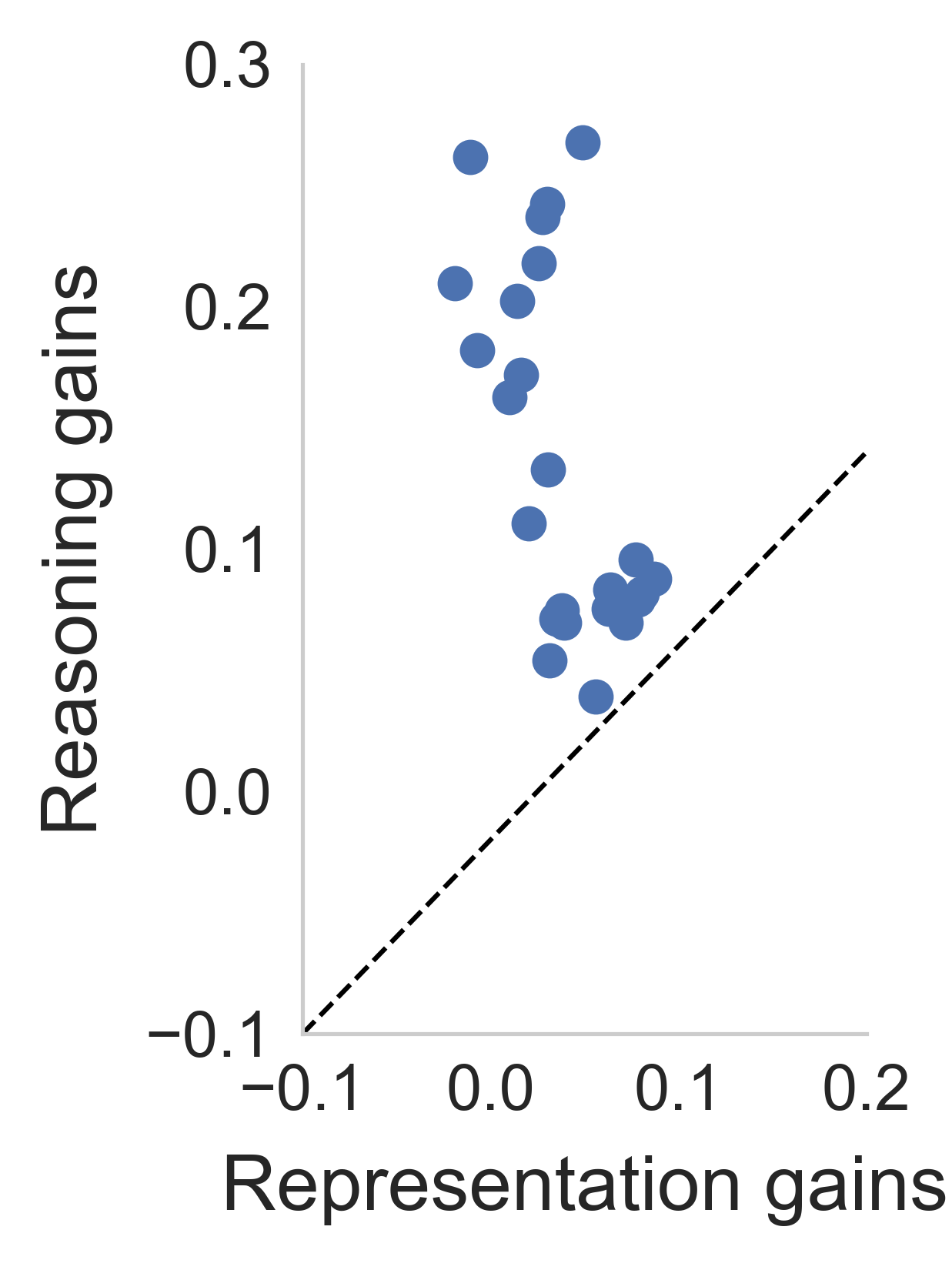}
        \subcaption{}
        \label{fig:rep-reas-main}
    \end{subfigure}
     \begin{subfigure}[b]{0.31\textwidth}
        \centering
        \includegraphics[width=1\textwidth]{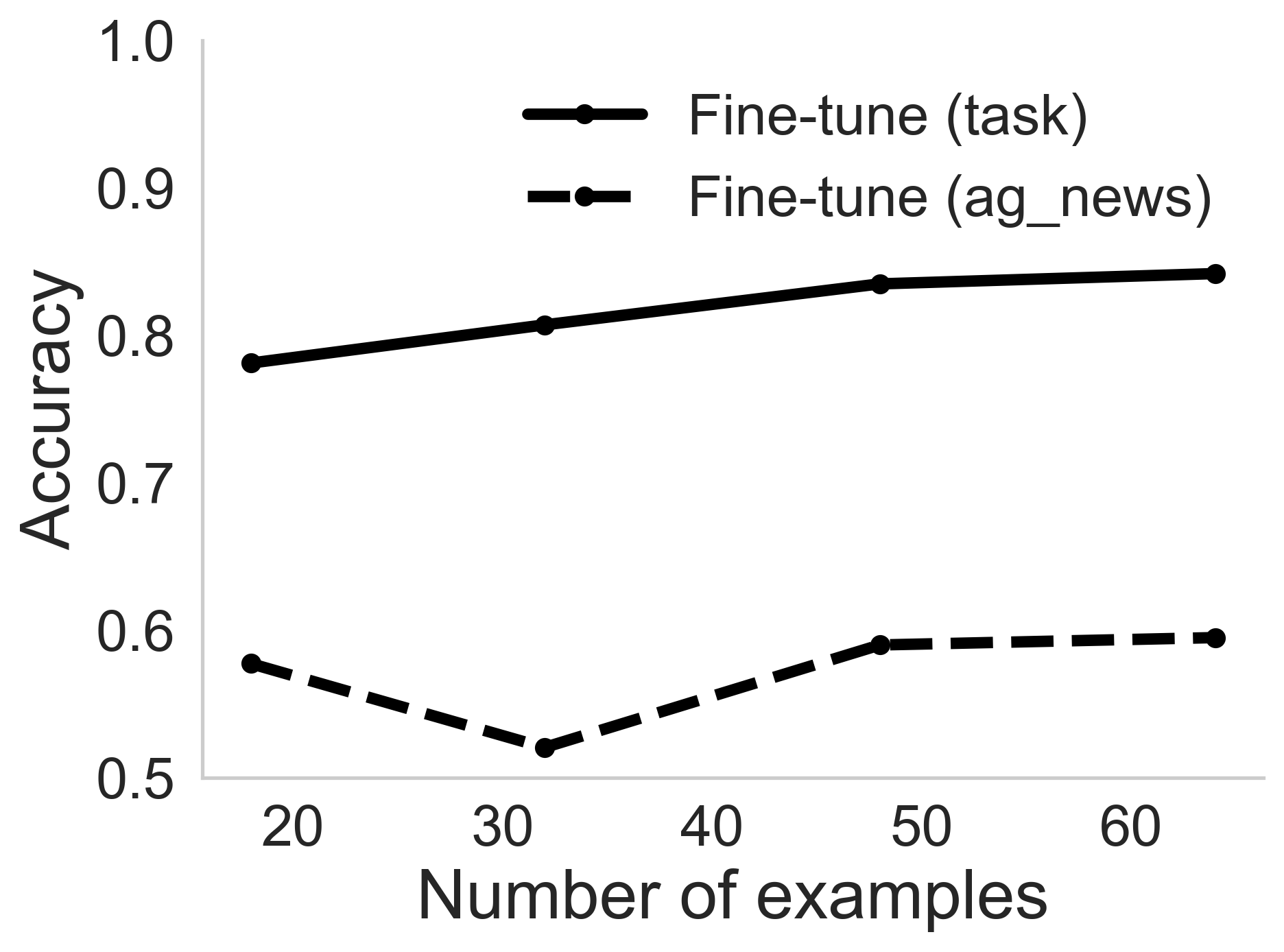}
        \subcaption{}
        \label{fig:agnostic-m2-main}
    \end{subfigure}
    \caption{All results for \gptsmall. (a) Accuracy of in-context learning vs. linear probing on model embeddings: representations have sufficient information. (b) Scatter plot showing the representation and reasoning gains (see eq.~\eqref{eq:rep-decomp}) across different NLP datasets for a fine-tuned model when compared to base in-context learning. Fine-tuning majorly improves task-specific reasoning across datasets. (c) Accuracy (averaged across $6$ datasets) of model fine-tuned on AGNews and tested on a separate task X vs model fine-tuned on task X and tested on the same task X. On average, fine-tuning hurts task-agnosticity can can be up to $25\%$ off from fine-tuning for the specific task.}
\end{figure*}

\paragraph{In-context learning: LLMs lack reasoning abilities.}
We begin by studying the representation and reasoning gaps, as defined in eq.~\eqref{eq:rep-decomp}, for in-context learning. In Figure~\ref{fig:icl-micro-main}, we plot the average accuracy across datasets for in-context learning, task-specific fine-tuning, and linear probing. We see that across models and different numbers of in-context examples, the reasoning gap $\gapres$ accounts for up to $79.11$\% of the performance gap between in-context learning and fine-tuning. This indicates that the LLM representations have sufficient information but  lack the ability to reason over them.

\paragraph{Fine-tuning: Improves task-specific reasoning.}
We next investigate how fine-tuning for a specific task affects the performance of the base model. In Figure~\ref{fig:rep-reas-main}, we show a scatter plot of the gains that can be attributed to improved representations against the reasoning gains. We see that, across models, reasoning improvements accounts for $73.06$\% of the improvements. This indicates that while fine-tuning improves both reasoning and representations of the LLM, the gains are predominantly due to improvements in task-specific reasoning. Furthermore, this task-specific fine-tuning of the LLM hurts its performance on other tasks. In Figure~\ref{fig:agnostic-m2-main}, we show that the accuracy of a model fine-tuned on the AGNews dataset~\citep{zhang2015character}, leads to an average decrease of $25.77$\% on other tasks. Furthermore, this drop in accuracy can be attributed to the drop in task-specific reasoning capabilities---these account for $72.58$\% of the drop (see Appendix~\ref{app:sec-2} for more details).

\paragraph{Adapters: Impairs task-agnosticity via reasoning.}
Task-specific adapters do not change the underlying representation ability of the model. To study their ability to generalize across tasks, we train an adapter for the AGNews dataset and evaluate it on other tasks. In Appendix~\ref{app:sec-2}, we show that the performance drops across tasks by an average of $19.8$\%, indicating that adapters only learn task-specific reasoning abilities.

\section{\alg: Task-Agnostic Reasoning Transformers}\label{sec:fluffy}
The above analysis showed how it is the effective reasoning capabilities of the LLMs which limits its performance when compared with task-specific adaptation approaches. Building on this insight, we propose \alg, which learns a general-purpose reasoning module completely agnostic to the underlying base LLM and when composed with any LLM via its embeddings, generically improves upon its reasoning abilities. \alg is a completely task-agnostic method which works across a suite of tasks without any task-specific training. 

\alg comprises of two components: a generic task-agnostic reasoning module, and embeddings from the base LLM. The reasoning module is trained using only synthetic data (Gaussian logistic regression problems), agnostic of the auto-regressively trained language model, with the objective of learning to perform probabilistic inference (Section~\ref{sec:res-mod}). This learned transformer module is then composed with the base LLM, without any training, by simply aggregating the output embedding and using those as an input along with the class label (Section~\ref{sec:rep-emb}). Together, these components make \alg task-agnostic, boost performance quality by improving reasoning, and make the approach data-scalable by aggregating input embeddings into a single vector.

Intuitively, the Gaussian logistic regression task is a simple probabilistic reasoning task wherein the objective is to regress a given feature vector to a discrete binary label. Teaching an independent module to perform a family of these tasks and composing them with pre-trained language models can be seen as a way to generically improve upon the LLM's reasoning abilities by making them perform such regression better.

\subsection{Reasoning module: Can Transformers learn probabilistic inference?}\label{sec:res-mod}
\alg's reasoning module is a Transformer-based model which is trained to perform probabilistic inference in-context using only synthetically generated data.

\subsubsection{Training the reasoning module}\label{sec:training}
The reasoning module is a Transformer model which is auto-regressively trained on a \emph{family} of logistic regression tasks, with each input sequence corresponding to a different logistic regression problem. We next describe the model architecture and the training procedure. \vspace{-3mm}

\paragraph{Model architecture.} The reasoning module is based on the standard decoder-only Transformer architecture from the GPT-2 family (see Appendix~\ref{app:tr-reas-det} for details). The architecture takes as input a sequence of vectors and is trained to predict the next vector in the sequence. The input sequence consists of $k$ pairs of labeled examples $(\x_1, \y_1), (\x_2, \y_2), \ldots, (\x_k, \y_k)$, with each example $z_i = (\x_i, \y_i)$ using only two input positions of the transformer --  one for the covariates $x$ and the other for the label $y$. This is in contrast to standard LLMs where each example is spread over multiple tokens which limits how many examples can be put in the context. For example, with a context window of 2048, our module can support 1024 examples while the base model can support only 10 examples, assuming each demonstration comprises 200 natural language tokens. \vspace{-3mm}



\paragraph{Training procedure.} This module is trained using gradient descent to minimize the population loss
{
\begin{equation}
\loss(\Tr_\paramT) \defn \En_{\x, \y}\left[\frac{1}{k} \sum_{i=1}^k \lossce (\Tr_\paramT(z_{1:i-1}, x_i), \y_i)\right]\;,
\end{equation}}
where $z_{1:i-1}$ corresponds to the first $i-1$ examples and $\lossce$ is the cross-entropy loss evaluated on the transformer prediction and the true $y_i$. Each training sequence $\seq_t$ used to update the parameters $\param$ comprises a different $\dx$-dimensional logistic regression problem, sampled as
{
\begin{equation}\label{eq:data-gen-syn}
\text{Sequence } \seq_t: \param_t \sim \normal(0, I_\dx), \quad \x_{i,t}\sim \normal(0, I_\dx), \quad \y_{i,t} \sim \sigmoid(\weight\inner{\x_{i,t}}{\param_t}) \qquad \text{for }i \in [\klr]\;,
\end{equation}
}
where $\sigmoid$ represents the sigmoid function and the multiplier $\weight$ determines the noise level of the problem. We train our model with $\dx=16$ and $\klr = 256$. Observe that the loss is only computed on the predicted output of the features $\x$ in the sequence. We describe the model hyper-parameters and the training procedure in more detail in Appendix~\ref{app:tr-reas-det}. 

We trained the reasoning module with input dimension set to 16 with the labels $\y$ encoded in this space using a one-hot encoding by appending the true label with zeros. While most base models produce representations which are much higher dimensional (ranging from 784 to 2048). In order to reduce the dimensionality of these representations, we perform PCA on the output embeddings of the base model, learning the components using only the training points available for that specific task. The test examples are then projected onto these principal components to produce 16 dimensional input representations.

\subsubsection{Properties of reasoning module}
The task-agnostic reasoning module described above is trained to perform well on a family of logistic regression tasks. We study some properties of the reasoning module, in particular how well it learns to perform the task at an instance level and how robust is it to variations in the noise level $\weight$. 

\paragraph{Accuracy of probabilistic inference.} 
For understanding the instance level performance of our reasoning module, we evaluate it on a sample of $64$ different logistic regression problems, sampled according to eq.~\eqref{eq:data-gen-syn}. For each problem, we train task-specific linear classifiers using logistic regression and compare them with our task-agnostic reasoning module. In Figure~\ref{fig:tartlrcomp} we plot the deviation of the predicted probabilities (averaged over the $64$ problems) from the true probabilities for our reasoning module and the task-specific logistic solvers as a function of the number of examples used for predictions. We observe that the error for our reasoning module decreases as a function of the number of in-context examples and is within $2\%$ of the task-specific logistic function.


\paragraph{Robustness to noise level.}
We study the robustness of the learned module to the noise levels, $\weight$, of the logistic regression problem. Recall that we trained our inference module by fixing the noise level $\weight = 10$. At inference time, we vary the noise level to $[0.5, 1, 10, 20]$, where lower values corresponds to noisier problem. The reasoning module generalizes to easier problem without any drop in accuracy but as we make the problem harder $(\weight = [0.5, 1])$, the error increases progressively (see Figure~\ref{fig:noise}). 


\begin{figure}[t!]
    \centering
    \begin{subfigure}[b]{0.3\textwidth}
        \centering
    \includegraphics[width=0.9\textwidth]{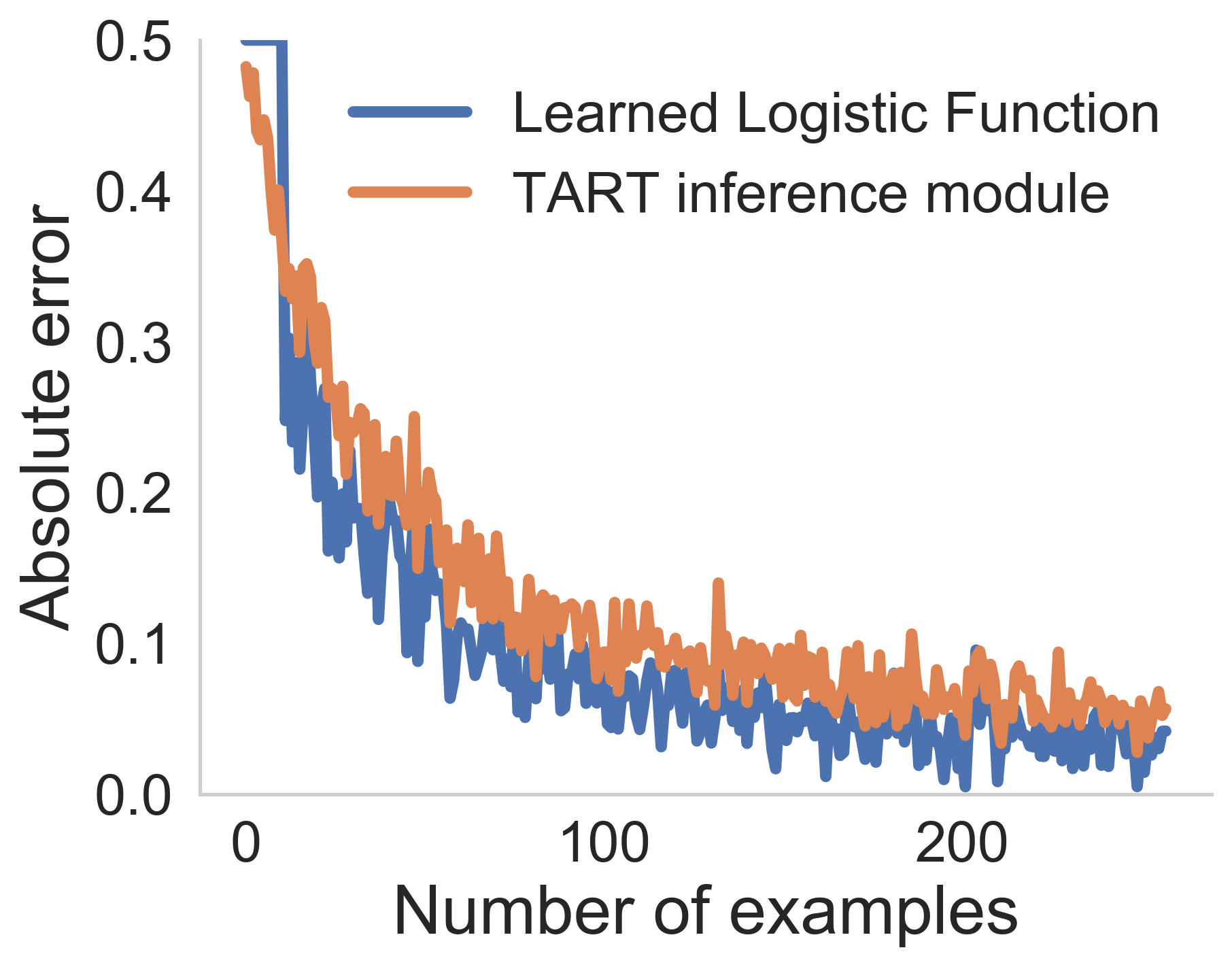}
    \subcaption{}
    \label{fig:tartlrcomp}
    \end{subfigure}
    \begin{subfigure}[b]{0.3\textwidth}
        \centering
    \includegraphics[width=0.9\textwidth]{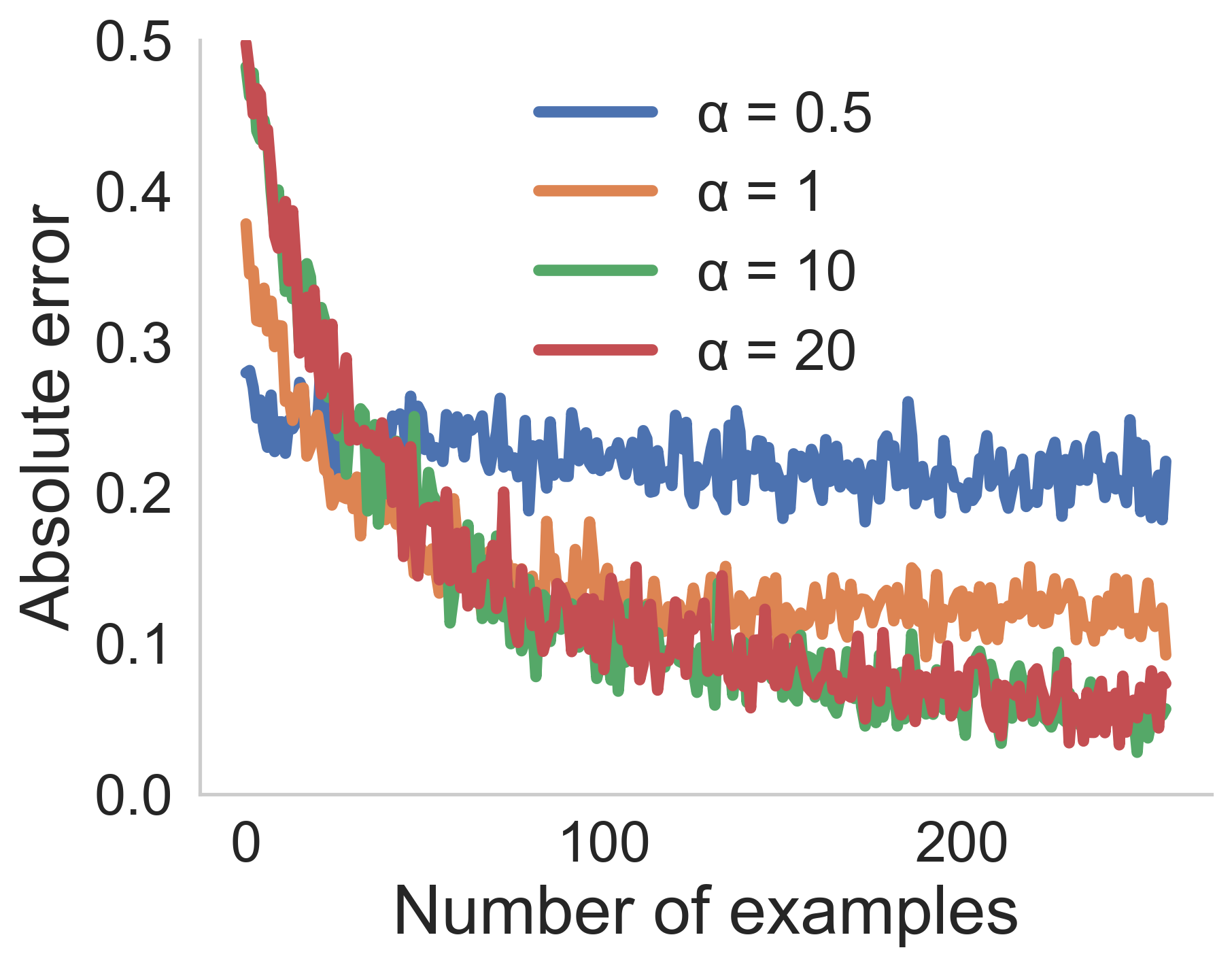}
    \subcaption{}
    \label{fig:noise}
    \end{subfigure}
    \begin{subfigure}[b]{0.3\textwidth}
        \centering
    \includegraphics[width=0.9\textwidth]{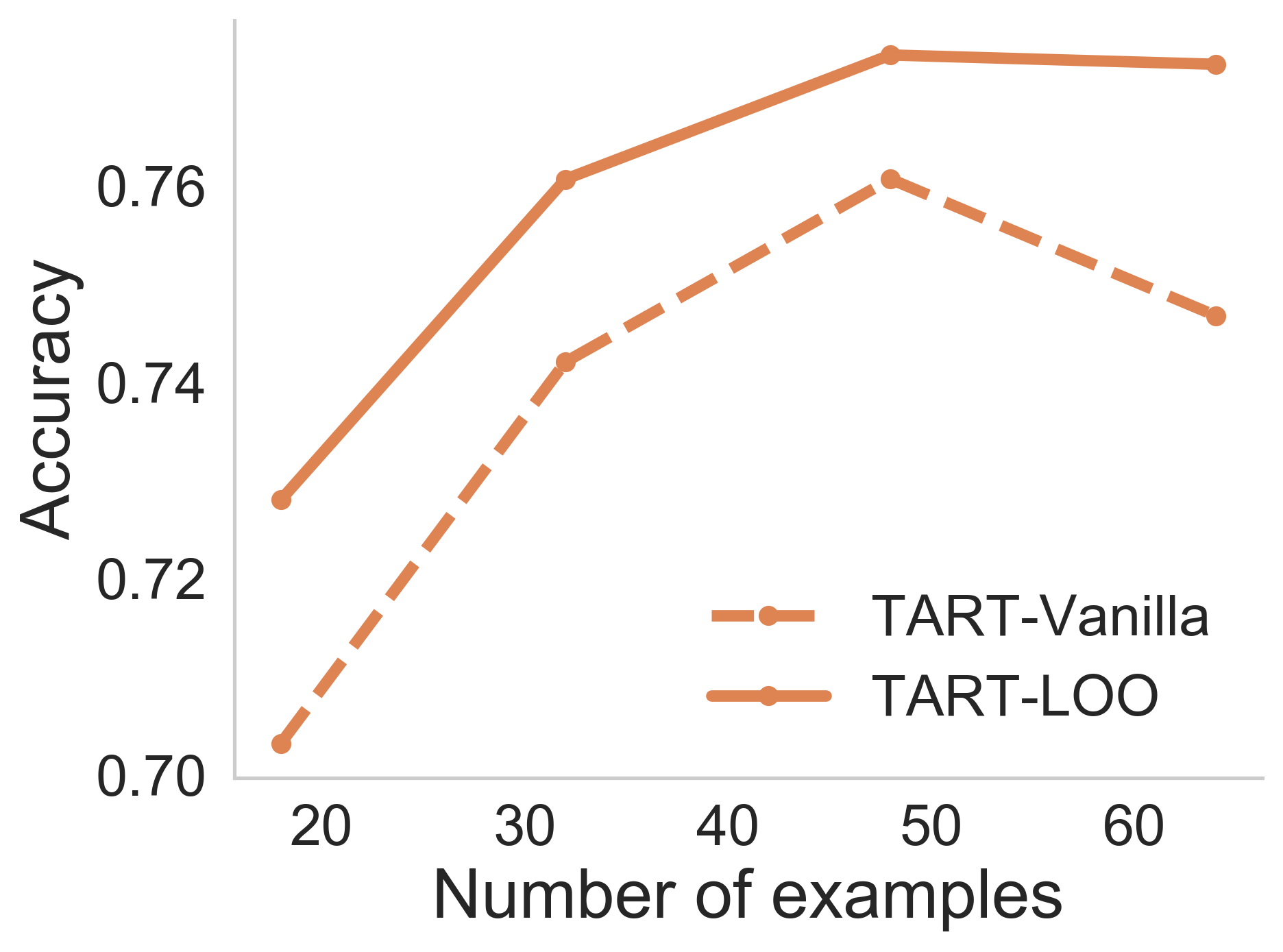}
    \subcaption{}
    \label{fig:loovanilla}
    \end{subfigure}
    \caption{\textbf{Properties of \alg's inference module}. (a) Comparison with learned logistic function: inference module recovers underlying probabilities. (b) Variation in error with different noise levels for model trained on $\weight=10$. (c) Comparison of \alg performance when using LOO embeddings and vanilla embeddings. }
    \label{fig:fluffy}
\end{figure}

\subsection{Role of representations: Which embeddings to take?}\label{sec:rep-emb}
The reasoning module composes with a base LLM through its final layer embeddings. A natural way to produce these embeddings is to place all the train examples in-context and then average the embedding vectors corresponding to the particular example (see Figure~\ref{fig:emb-vanilla}). At inference time, we append the test example to the training set, and average the embeddings corresponding to this example. We call these vanilla embeddings. Our experiments reveal that these embeddings seem to saturate (or even hurt performance) beyond a certain number of in-context examples (see Figure~\ref{fig:loovanilla}). One  reason can be that the causal nature of the model causes these embeddings to have asymmetric information---the embeddings of each example is influenced by its preceding examples. 



To counter this asymmetry, we propose \emph{leave-one-out} (LOO)  embeddings where the embeddings for each training point is formed by placing all the other train examples before it in the prompt such that all the embedding are formed with the same information content (see Figure~\ref{fig:emb-loo}). In Figure~\ref{fig:loovanilla}, changing the embedding style from vanilla to LOO consistently improves performance across models and tasks. The LOO-embeddings help \alg be data-scalable by enabling it to embed a much larger number of points than the context window can support. To do so, we use only a subset of the train examples as the in-context prompt. The reasoning module, by its architecture design, can already accommodate many more examples than supported by the context window of the base LLM.


\begin{figure*}
    \centering
    \begin{subfigure}[b]{0.4\textwidth}
        \centering
        \includegraphics[width=1\textwidth]{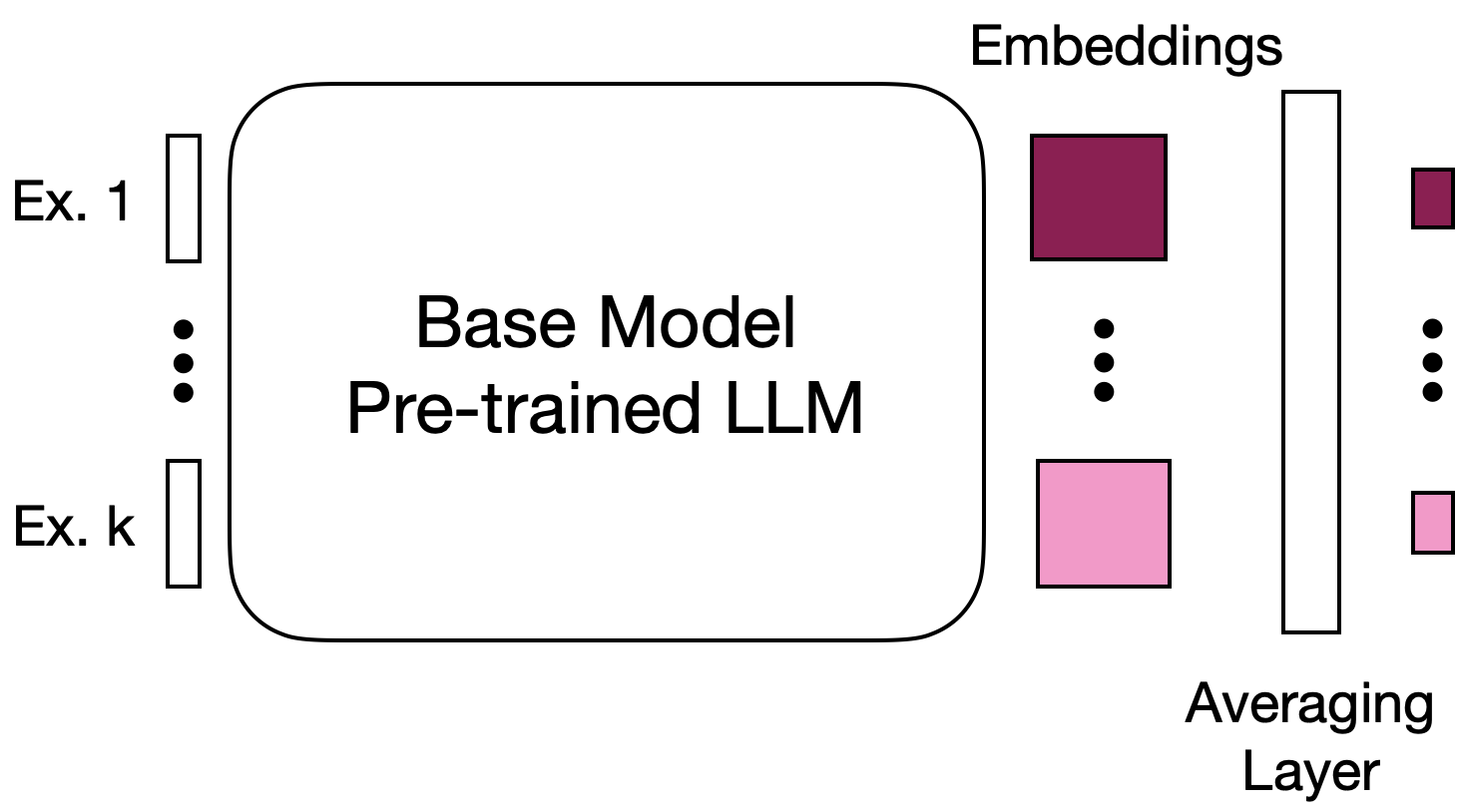}
        \subcaption{Vanilla Embeddings}
        \label{fig:emb-vanilla}
    \end{subfigure}
    \hspace{12mm}
    \begin{subfigure}[b]{0.48\textwidth}
        \centering
        \includegraphics[width=1\textwidth]{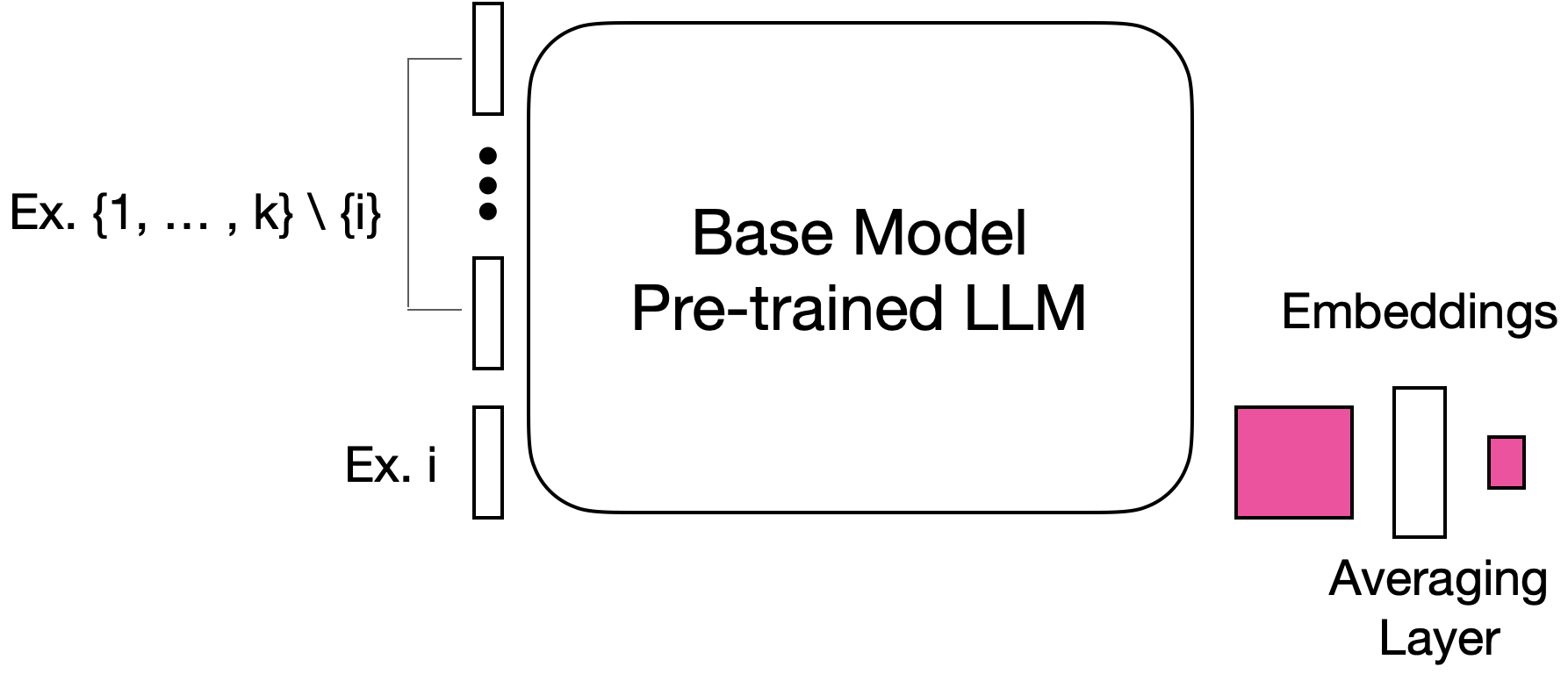}
        \subcaption{LOO Embeddings}
        \label{fig:emb-loo}
    \end{subfigure}
    \caption{\textbf{\alg Embedding Protocols}. (a) For the vanilla embeddings, the test example is appended to the training set and the sequence is passed to the base model. The representation for each train example in this sequence is taken as the average embedding across all its tokens. (b) For the LOO embeddings, we generate embeddings for each train example separately by placing all the other train examples before it in the prompt and averaging the embeddings over the final example's tokens. The figure shows how to compute the embedding for the $i^{\text{th}}$ training example.}
\end{figure*}

\subsection{Theoretical analysis: Generalization of \alg to language tasks}\label{sec:theory}
We study the generalization properties of the proposed task-agnostic method \alg. Note that that the inference module is trained completely on synthetic data while at evaluation time, our input is the embeddings from a natural language task. In Theorem~\ref{thm:gen} we show that its performance on the natural language task depends on the distribution shift from the synthetic to the true distribution (see Appendix~\ref{app:theory} for a formal statement and proof).

\begin{theorem}[Informal]\label{thm:gen}
Let $\T$ represent the class of transformer models and $\Tres \in \T$ denote the trained reasoning module on set $S$ of synthetic regression with $\nsyn$ sequences sampled from distribution $\Psyn$ in eq.~\eqref{eq:data-gen-syn}. The error of the transformer $\Tres$ when evaluated on a distribution $\Pnl$ over natural language sequences is
\begin{equation}
\err_{\Pnl} \lesssim W_1(\Pnl, \Psyn) + \sqrt{\frac{\text{Comp}(\T)}{\nsyn}} + \hat{\err}_{\Psyn}(\Tres)\;,
\end{equation}
where $W_1$ denotes the Wasserstein-1 metric, $\text{Comp}(\T)$ represents the complexity of class $\T$, and $\hat{\err}$ represents the error on the empirical distribution. 
\end{theorem}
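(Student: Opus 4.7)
The plan is to decompose the natural-language error into three pieces via a standard triangle inequality and bound each piece separately. Concretely, I would write
\begin{equation*}
\err_{\Pnl}(\Tres) \;\leq\; \bigl|\err_{\Pnl}(\Tres) - \err_{\Psyn}(\Tres)\bigr| \;+\; \bigl|\err_{\Psyn}(\Tres) - \hat{\err}_{\Psyn}(\Tres)\bigr| \;+\; \hat{\err}_{\Psyn}(\Tres),
\end{equation*}
so that the three target terms arise as (i) a distribution-shift term, (ii) a uniform-convergence term, and (iii) the empirical training error that appears directly in the bound. The first two terms are the ones that need real work.

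For the distribution-shift term, I would invoke the Kantorovich--Rubinstein duality for $W_1$: if the map $\seq \mapsto \loss(\Tr(\seq))$ obtained by composing the loss with any transformer $\Tr \in \T$ is $L$-Lipschitz in the input-sequence metric used to define $W_1$, then
\begin{equation*}
\bigl|\En_{\Pnl}[\loss(\Tr(\seq))] - \En_{\Psyn}[\loss(\Tr(\seq))]\bigr| \;\leq\; L \cdot W_1(\Pnl, \Psyn).
\end{equation*}
Applying this to $\Tr = \Tres$ yields the $W_1(\Pnl, \Psyn)$ term. Establishing the Lipschitz constant $L$ requires bounding the operator norms of attention, MLP, and layer-norm blocks in terms of the weight matrices of $\T$; this is routine but must be done with a choice of sequence metric that is compatible with how $\Pnl$ and $\Psyn$ live on the same input space (namely, the $16$-dimensional PCA-projected embedding space, together with the label coordinate), which is the reason the projection step in Section~\ref{sec:res-mod} is crucial for the bound to even be well-posed.

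For the uniform-convergence term, I would invoke a standard Rademacher-complexity bound for bounded-loss function classes:
\begin{equation*}
\sup_{\Tr \in \T}\bigl|\err_{\Psyn}(\Tr) - \hat{\err}_{\Psyn}(\Tr)\bigr| \;\lesssim\; \sqrt{\frac{\text{Comp}(\T)}{\nsyn}},
\end{equation*}
where $\text{Comp}(\T)$ is a covering-number or Rademacher complexity of $\T$ (for decoder-only transformers with bounded weight norms, existing bounds of Edelman et~al.\ or Wei--Ma give polynomial dependence on depth, width, and norm constraints). Since $\Tres$ is trained on the $\nsyn$ i.i.d.\ sequences from $\Psyn$, it lies in $\T$ and the supremum bound applies pointwise to it. Combining the three displays yields the stated inequality.

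The main obstacle will be the Lipschitz analysis in step two: the composition of many attention layers can have a Lipschitz constant exponential in depth unless one carefully controls the norms, and the choice of metric on sequences must be matched to what $W_1(\Pnl, \Psyn)$ is actually measuring (per-token $\ell_2$ distance after PCA projection, aggregated across positions). Getting a clean, non-vacuous Lipschitz constant $L$ that can be absorbed into the $\lesssim$ without blowing up the bound is the delicate part; the Rademacher piece and the triangle-inequality decomposition are by comparison standard.
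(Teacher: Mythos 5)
Your decomposition and both key steps match the paper's own proof: the paper splits the error into exactly the three terms you list, bounds the distribution-shift term by the coupling/Kantorovich--Rubinstein argument you describe (obtaining $c_1\max(1,L)\cdot W_1(\Pnl,\Psyn)$), and bounds the uniform-convergence term by a standard Rademacher-complexity bound (further reduced to VC dimension via Sauer's lemma). The only difference is that the paper never carries out the Lipschitz analysis you flag as the delicate part --- it simply \emph{assumes} that the loss and every model in $\T$ are Lipschitz (Assumptions~\ref{ass:lip-loss} and~\ref{ass:lip-model}), so the architectural operator-norm bounds you anticipate are sidestepped by hypothesis.
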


A few comments are in order: The first term  represents the distribution shift error between the true natural language task and the synthetic task. The second term corresponds to the generalization error on the logistic regression task, which can be made arbitrarily small since it scales with $\nsyn$, the number of synthetic datapoints which can be generated without any cost. The third term is the optimization error indicating how well has the reasoning module $\Tres$ fit to the synthetic training set.


\section{Experimental evaluation}\label{sec:exp}
We evaluate \alg on a wide range of binary classification tasks across three domains: language, vision and audio. We demonstrate that \alg improves base in-context performance and closes the gap with standard task-specific strategies. We also conduct ablations to demonstrate that \alg scales with model size and can support 10x more samples than in-context learning.

\subsection{Experimental setup}

\paragraph{Datasets.} We briefly describe the datasets used, with details available in Appendix~\ref{app:exp-setup}. We consider $14$ different binary classification tasks ranging from sentiment classification, news article categorization to spam detection. The evaluation datasets include: SST~\citep{socher2013recursive}, Rotten Tomatoes~\citep{Pang+Lee+Vaithyanathan:02a}, SMS Spam~\citep{Almeida2011SpamFiltering}, IMDB~\citep{maas-EtAl:2011:ACL-HLT2011}, Civil Comments~\citep{DBLP:journals/corr/abs-1903-04561}, AGNews~\citep{zhang2015character}, DBPedia~\citep{zhang2015character}, and the Youtube dataset~\citep{zhang2021wrench}. Since AGNews and DBPedia14 are multi-class datasets, we construct 4 binary classification tasks from each dataset respectively.
For each dataset, we truncate the input text to be at most 100 characters to enable us to fit sufficient number of samples in-context.\vspace{-3mm}



\paragraph{Model families.} We evaluate our method across three different families of models: \neo~\citep{gpt-neo}, \pythia~\citep{biderman2023pythia}, and \bloom~\citep{scao2022bloom}. For our evaluations across 14 datasets, we use \gptsmall, \pythiasmall and \bloomsmall. For ablations on larger models, we evaluate models with~1B parameters across each of the model families (i.e., \gptmedium, \pythiamedium and \bloommedium) and models with~3B parameters (i.e., \gptlarge, \pythialarge and \bloomlarge). We additionally evaluate on \gptj~\citep{gpt-j}.  \vspace{-3mm}

\paragraph{Baselines.} We evaluate our models against all types of task-adaptation strategies described in Section~\ref{sec:taxonomy}: 1) in-context learning, 2) full fine-tuning, 3) last layer fine-tuning, 4) LM head fine-tuning, and 5) adapters. For each baseline, we perform an extensive hyper-parameter search over number of epochs and learning rate for each dataset in order to optimize performance (see Appendix~\ref{app:exp-setup} for hyperparameter details). For \alg, we chose a base default set of parameters and use the \emph{same} inference module with the exact same weights for all the experiments in this section. 

\subsection{Natual language benchmark evaluations}
For this section, all reported accuracies are averaged over $5$ independent random seeds. A complete set of results with standard deviations can be found in Appendix~\ref{app:nl-eval}. \vspace{-3mm}

\paragraph{Performance with respect to baselines.} 
As shown in Appendix~\ref{app:nl-eval}, averaged across all tasks and model families, \alg improves upon the base in-context learning performance by an average of $18.4$ points, improves upon adapter heads by $3.4$ points, and is within $3.1$ points of full fine-tuning. We also observe that \alg consistently outperforms the task specific strategies of LM head fine-tuning and last layer fine-tuning.\vspace{-3mm}

\begin{table}[t!]
\centering
\begin{tabular}{lrrrrr}
\toprule
        Model & \textbf{\alg} & \textbf{\gptj} & \textbf{\opthuge} & \textbf{\bloomhuge} & \textbf{\gpt} \\
\midrule
              Accuracy  & 0.634 & 0.608 & 0.637 & 0.595 & 0.673 \\
\bottomrule
\end{tabular}
\vspace{2mm}
\caption{\textbf{RAFT (HELM) Binary Classification Performance (Average Accuracy)}. \alg is used with \gptsmall model which is $1000$x smaller than the corresponding 175B parameter models. \alg outperforms  \bloomhuge  and is competitive with \opthuge and \gpt.}
\label{tab:raft}
\end{table}

\paragraph{Performance on RAFT Benchmark} 
We evaluate \alg on all binary classification tasks in the RAFT Benchmark, following the protocol used in HELM~\citep{liang2022holistic}. When applied with \gptsmall, \alg outperforms  \bloomhuge, and is within $4\%$ points of \gpt, both of which are $1000$x larger in size. See Table~\ref{tab:raft} for exact accuracies.


\begin{figure*}[t!]
    \centering
    \begin{subfigure}[b]{0.31\textwidth}
        \centering
\includegraphics[width=1\textwidth]{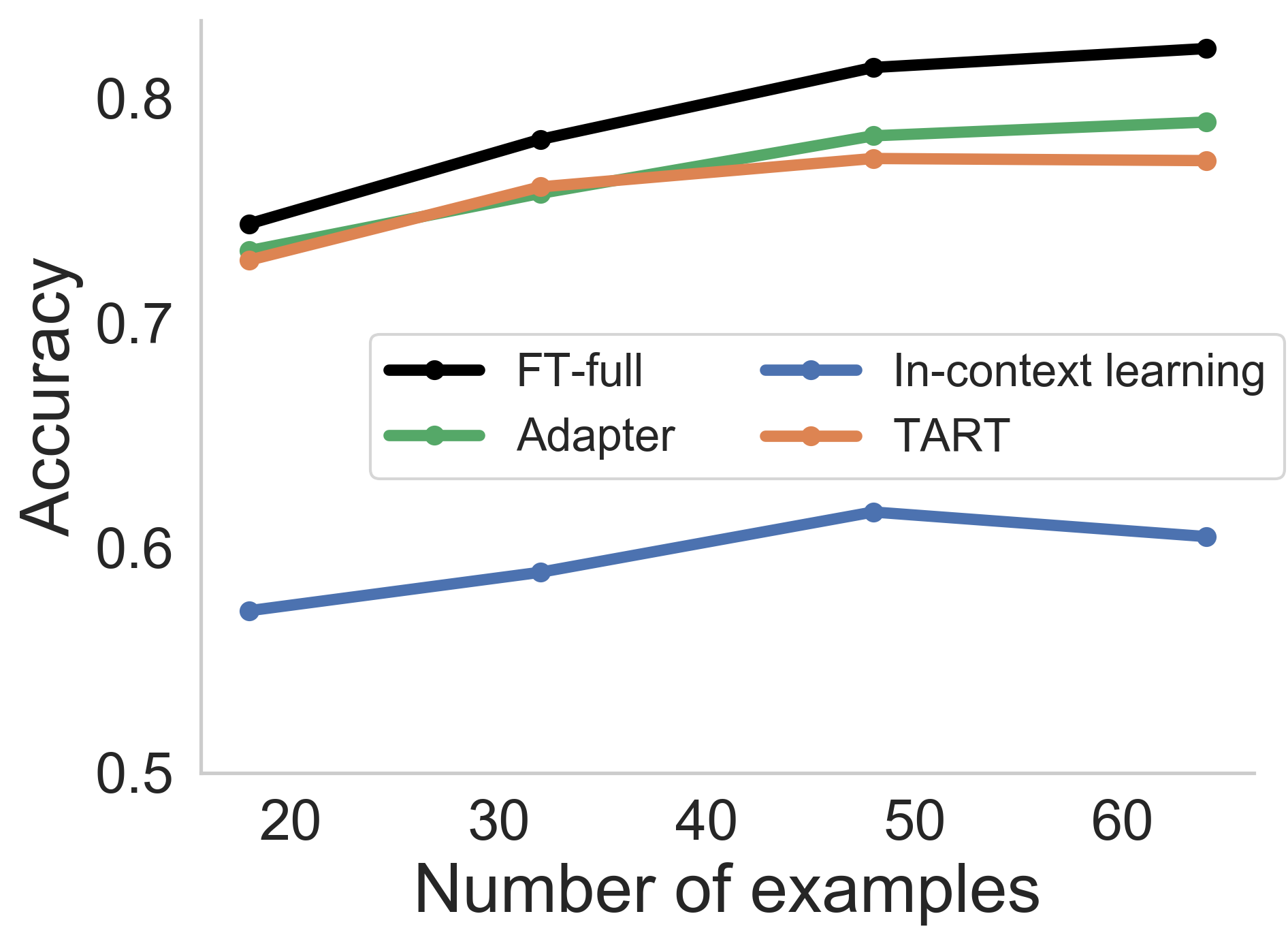}
    \subcaption{}
    \label{fig:context-length}
    \end{subfigure}
    \begin{subfigure}[b]{0.31\textwidth}
        \centering
\includegraphics[width=1\textwidth]{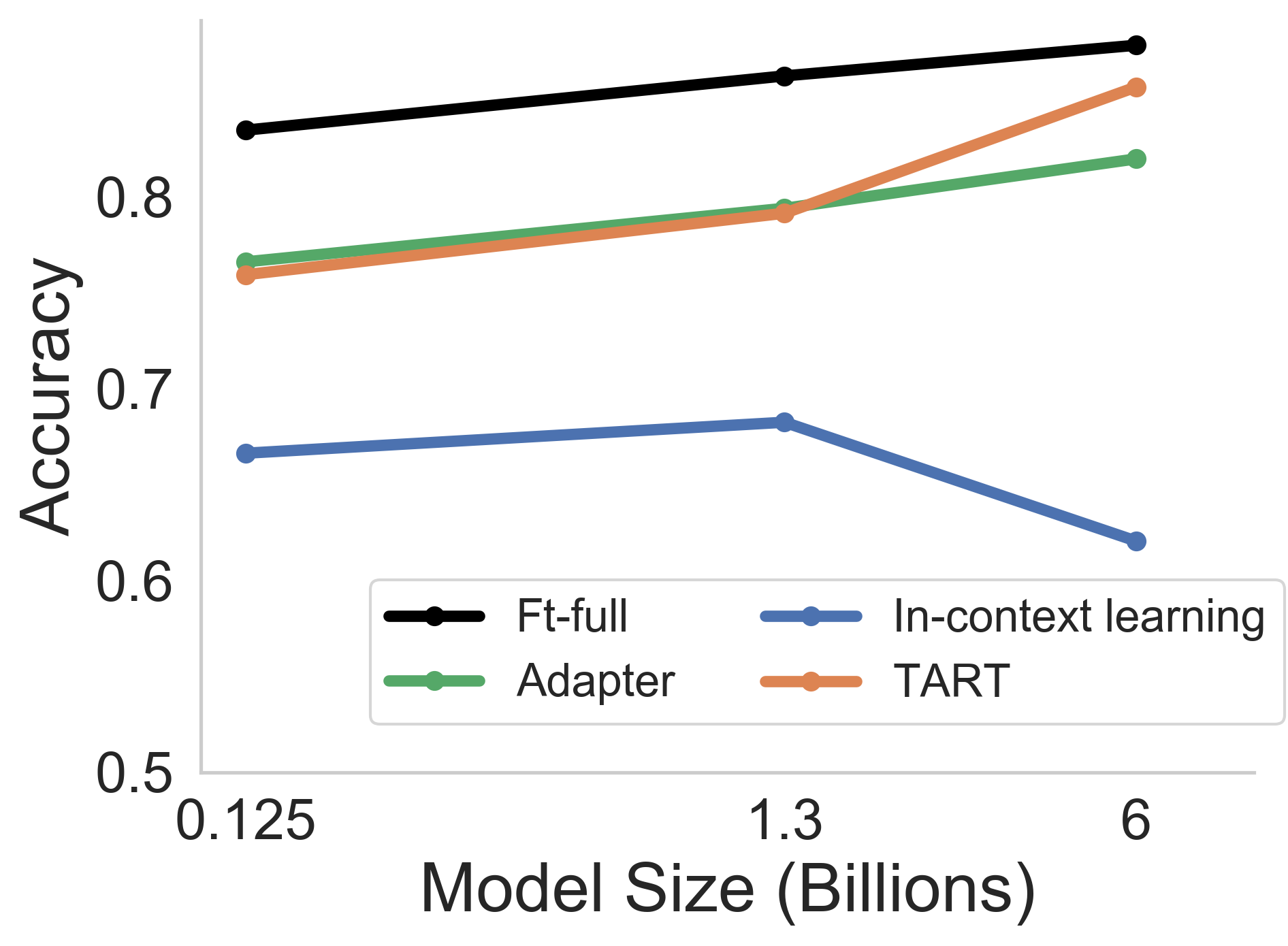}
    \subcaption{}
    \label{fig:model-scale}
    \end{subfigure}
    \begin{subfigure}[b]{0.31\textwidth}
        \centering
\includegraphics[width=1\textwidth]{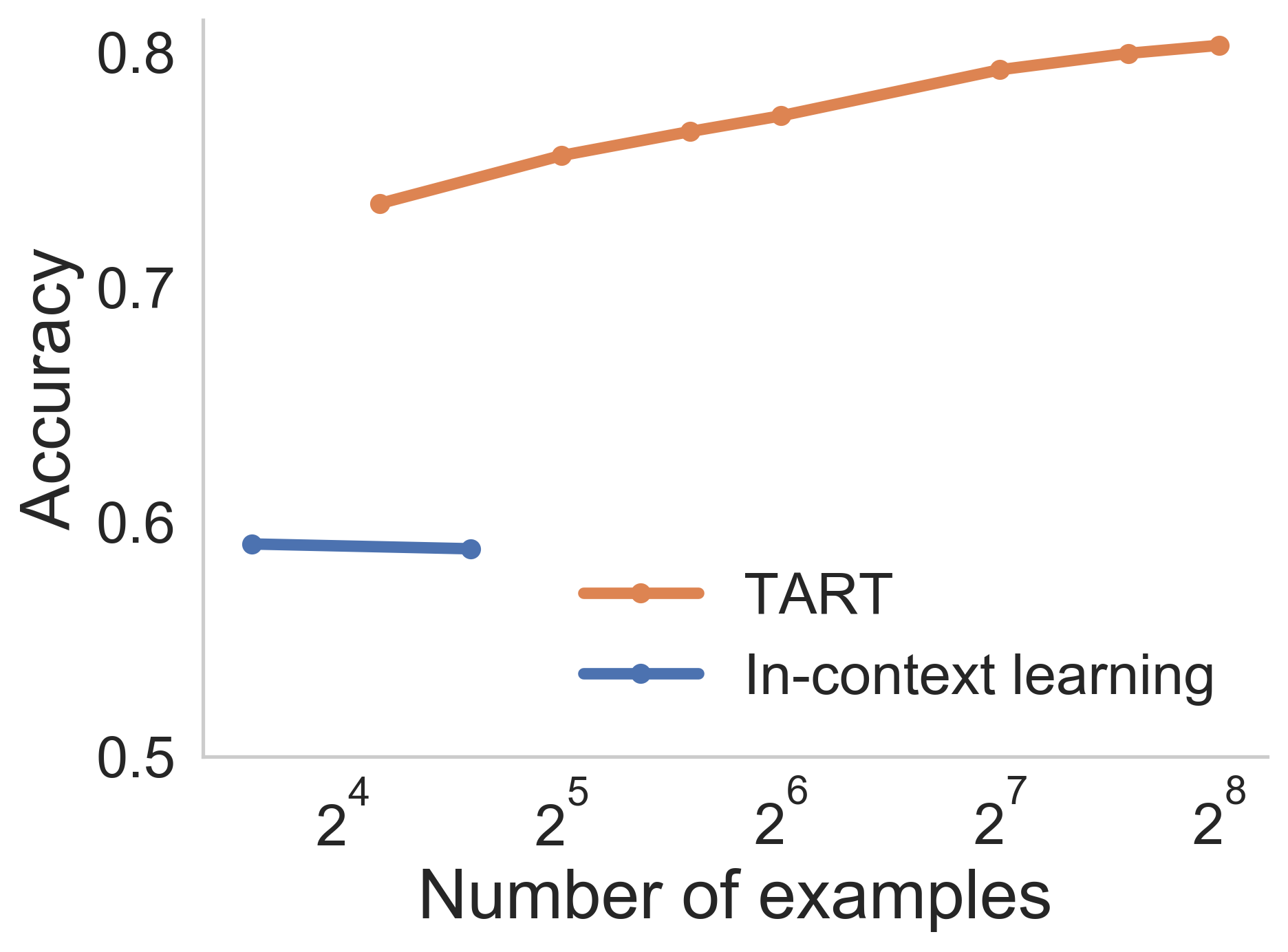}
    \subcaption{}
    \label{fig:context-window}
    \end{subfigure}
    \caption{\textbf{Effects of scale}. (a) Effect of number of in-context examples on performance for different task adaptation strategies. (b) Effect of model size on the performance of different task adaptation strategies. (c) Beyond context length limitations, performance comparison with respect to number of in-context examples.}
    \label{fig:sec-exp}
\end{figure*}

\vspace{-3mm}
\paragraph{Performance with number of in-context examples.} 
Our results demonstrate that performance of \alg scales with number of in-context examples (see Figure~\ref{fig:context-length}). Across $14$ tasks and $3$ model families, when scaling from $18$ to $64$ examples, \alg improves performance by an average of $4.8$\%. Correspondingly, full fine-tuning improves performance by $9.0$\%.\vspace{-3mm}



\paragraph{Scaling with base model size.} We analyze how different task-adaptation strategies scale with respect to model size using the GPT-Neo family: \gptsmall, \gptmedium and \gptj. 
Figure~\ref{fig:model-scale} shows that when scaling from 100M to 6B parameters, performance of task-specific methods and \alg increases as a function scale. For \alg, the performance increases by $9.8\%$ while using the same inference module across model sizes. Furthermore, the difference in performance between \alg and fine-tuning baseline reduces from $7.5\%$ to $2.2\%$ from the 100M scale to 6B scale. 
\vspace{-3mm}
\paragraph{Beyond context length.} 
We evaluate the data-scaling properties for both in-context learning and \alg (Figure~\ref{fig:context-window}). To demonstrate the scaling property, we do not truncate the input text to $100$ characters and utilize the entire text sequences. For \alg, we observe that accuracy continues to improve when scaling from $18$ to $256$ in-context examples with $6.8$\% lift in performance. In comparison, ICL, which is bottlenecked by context length, supports $10$x less samples, with the context window saturating at $24$ examples only and lags \alg by an average of $19.1$\%.




\subsection{Extensions to other modalities}
\begin{figure*}[t!]
    \centering
    \begin{subfigure}[b]{0.31\textwidth}
        \centering
    \includegraphics[width=1\textwidth]{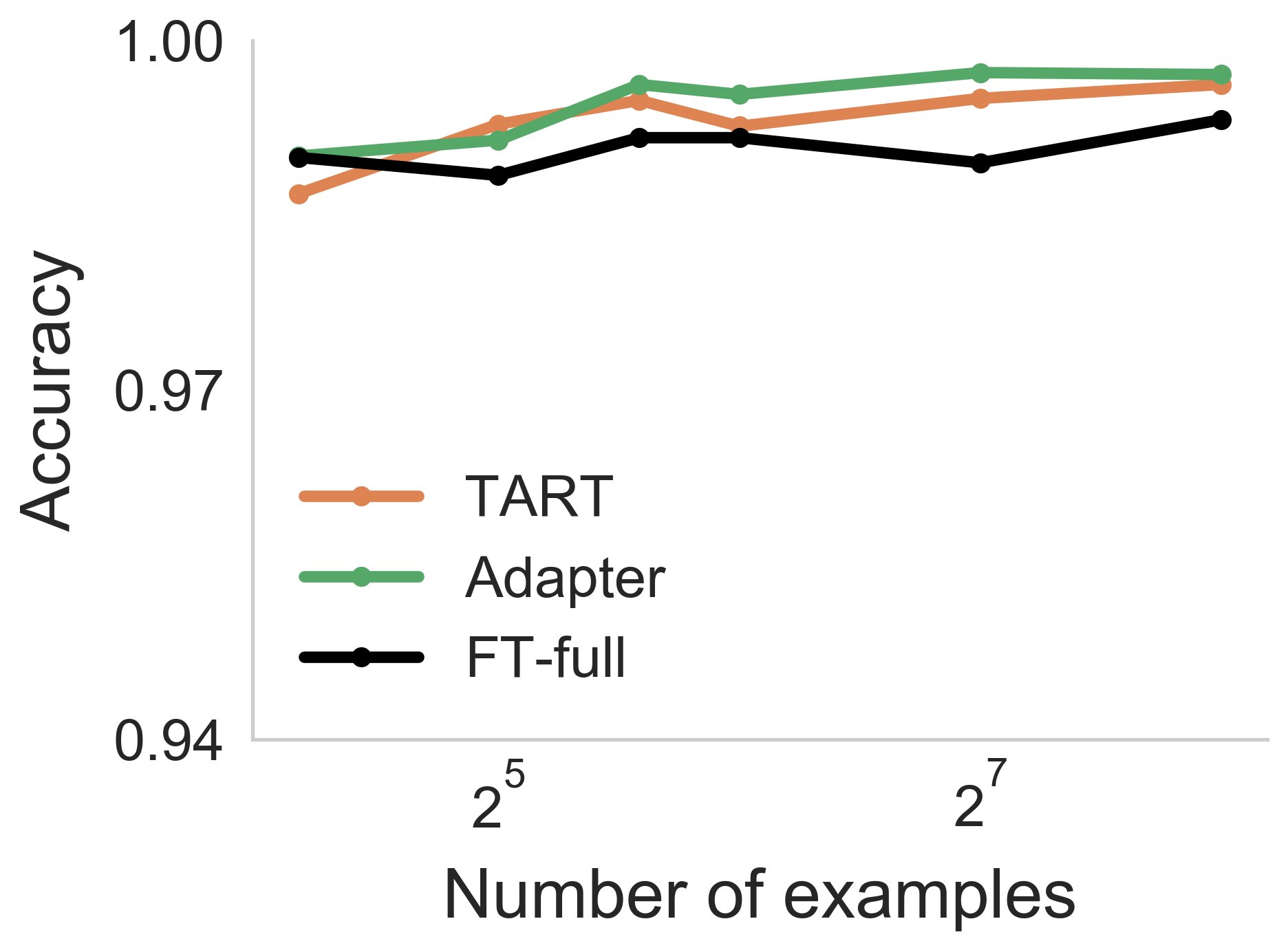}
    \subcaption{MNIST}
    \label{fig:mnist}
    \end{subfigure}
    \begin{subfigure}[b]{0.31\textwidth}
        \centering
    \includegraphics[width=1\textwidth]{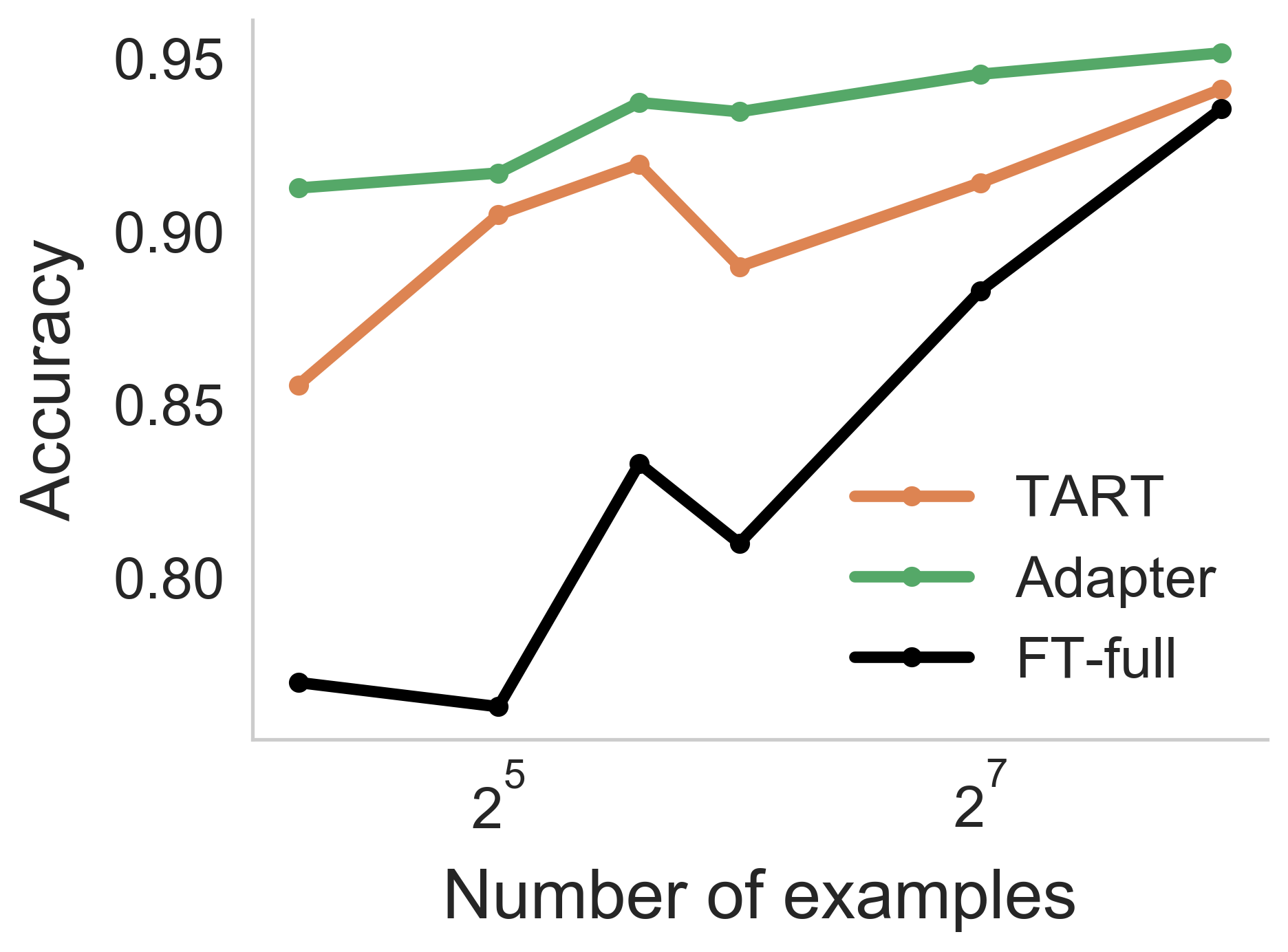}
    \subcaption{CIFAR-10}
    \label{fig:cifar10}
    \end{subfigure}
    \begin{subfigure}[b]{0.31\textwidth}
        \centering
    \includegraphics[width=1\textwidth]{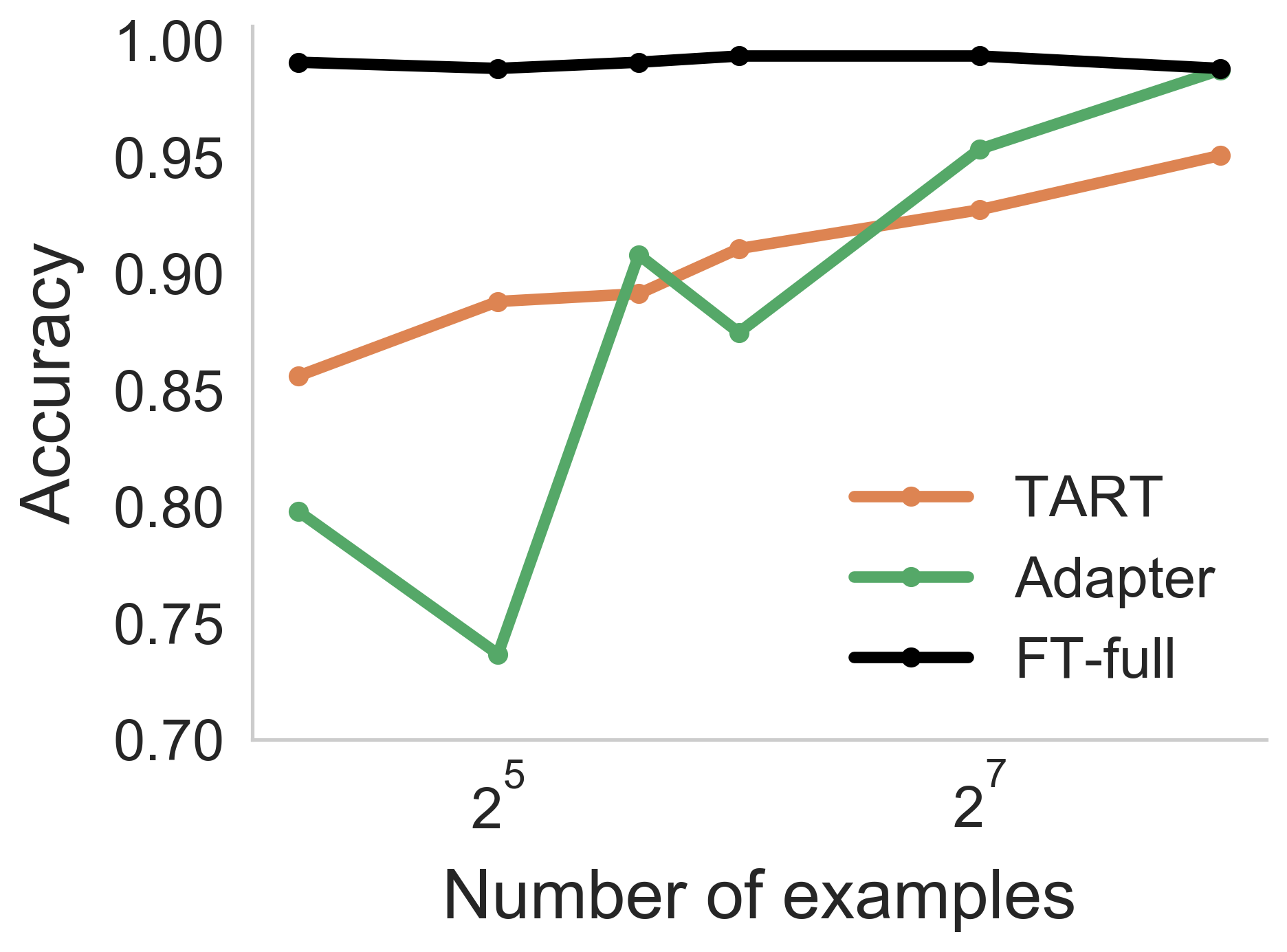}
    \subcaption{Speech Commands}
    \label{fig:speechcommands}
    \end{subfigure}
    \caption{\alg can generalize across domains using the same inference module that was used for language benchmarks: Performance across vision tasks (MNIST, CIFAR-10) and an audio task (Speech Commands).}
    \label{fig:modalities}
\end{figure*}

We demonstrate that \alg is not only agnostic to models and tasks, but also modalities.  We extend \alg to  classification tasks on modalities beyond language: vision and audio. For vision tasks, we use representations from Google's 307M parameter pretrained Vision Transformer (ViT) model~\cite{wu2020visual}: \vit. For audio tasks, we use representations from OpenAI's 1.5B parameter pretrained Whisper model~\citep{radford2022whisper}: \whisper. In applying \alg to the representations from these models, we provide a way for performing in-context learning in modalities beyond text. We refer the reader to Appendix~\ref{app:modalities} for further details on the experiment setup.

\vspace{-3mm}
\paragraph{Vision application.} We evaluate the performance of \alg on binary classification versions of CIFAR-10~\cite{Krizhevsky09learningmultiple} (classes plane and bird)  and MNIST~\citep{lecun2010mnist} (classes 0 and 8). As shown in Figure~\ref{fig:mnist} and~\ref{fig:cifar10}, performance of \alg is competitive with task-specific adaptation approaches. \vspace{-3mm}

\paragraph{Audio application.} We evaluate \alg on a binary classification version of the Speech Commands dataset~\citep{speechcommandsv2}, where the task is to classify ``stop'' and ``go'' utterances. As shown in Figure~\ref{fig:speechcommands}, performance of \alg is competitive with task-adaptation approaches.

\section{Discussion}\label{sec:conc}
We look at the problem of task-agnostic learning with LLMs.
%
%
We show that LLMs lack the ability to perform simple reasoning over their learned representations and introduce \alg, a task, model and domain agnostic method for improving their reasoning abilities.  
%
In this work, we focus on binary classification tasks, showing that synthetic, logistic regression task data can be used to train a generic reasoning module capable of completing this class of tasks. Extensions to multi-class classification tasks are possible either using a one-vs-all approach or by training \alg's reasoning module using multi-class synthetic data.
In future work, we seek to understand whether synthetic tasks exist for training other generic reasoning modules, capable of improving base LLM performance on tasks such as generation or summarization.
%
%
%
%

\subsection*{Acknowledgements}
We are grateful to Simran Arora, Rishi Bommasani, Niladri Chatterji, Arjun Desai, Sabri Eyuboglu, Neha Gupta, Karan Goel, Erik Jones, Ananya Kumar, Cassidy Laidlaw, Megan Leszczynski, Piero Molino, Laurel Orr, Michael Poli, Dimitris Tsipras, Michael Wornow, Ce Zhang, and Michael Zhang for their helpful comments and feedback, and discussions which helped shape this project.

We gratefully acknowledge the support of NIH under No. U54EB020405 (Mobilize), NSF under Nos. CCF1763315 (Beyond Sparsity), CCF1563078 (Volume to Velocity), and 1937301 (RTML); US DEVCOM ARL under No. W911NF-21-2-0251 (Interactive Human-AI Teaming); ONR under No. N000141712266 (Unifying Weak Supervision); ONR N00014-20-1-2480: Understanding and Applying Non-Euclidean Geometry in Machine Learning; N000142012275 (NEPTUNE); NXP, Xilinx, LETI-CEA, Intel, IBM, Microsoft, NEC, Toshiba, TSMC, ARM, Hitachi, BASF, Accenture, Ericsson, Qualcomm, Analog Devices, Google Cloud, Salesforce, Total, the HAI-GCP Cloud Credits for Research program,  the Stanford Data Science Initiative (SDSI), and members of the Stanford DAWN project: Facebook, Google, and VMWare. CDS was supported by a NSF CAREER (award 2046760). 

The U.S. Government is authorized to reproduce and distribute reprints for Governmental purposes notwithstanding any copyright notation thereon.
Any opinions, findings, and conclusions or recommendations expressed in this material are those of the authors and do not necessarily reflect the views, policies, or endorsements, either expressed or implied, of NIH, ONR, or the U.S. Government.

\newpage 
\printbibliography

\newpage
\appendix
\section{Fine-tuning model with NL-based Probabilistic Inference Tasks}\label{app:ft-nl}
As highlighted in Section~\ref{sec:intro}, we describe the details for \emph{directly} fine-tuning an LLM on synthetically generated probabilistic inference tasks to improve reasoning capabilities. For the following experiments, we use \gptsmall as the base model.

\begin{figure*}
  \centering
    \begin{subfigure}[b]{0.50\textwidth}
        \centering
\includegraphics[width=0.9\textwidth]{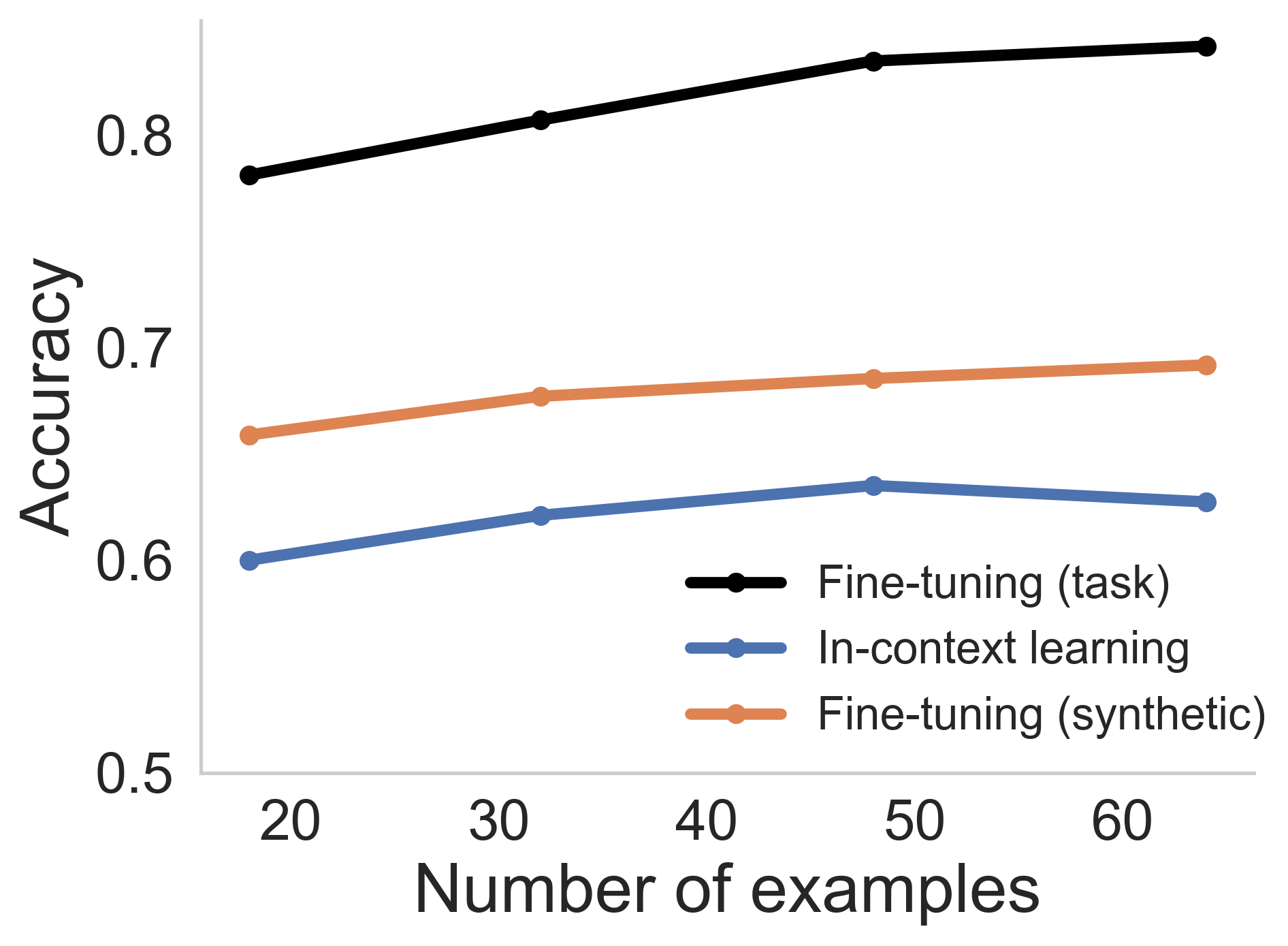}
    \subcaption{}
    \label{fig:ft-nl-k}
    \end{subfigure}
    \begin{subfigure}[b]{0.43\textwidth}
        \centering
\includegraphics[width=1\textwidth]{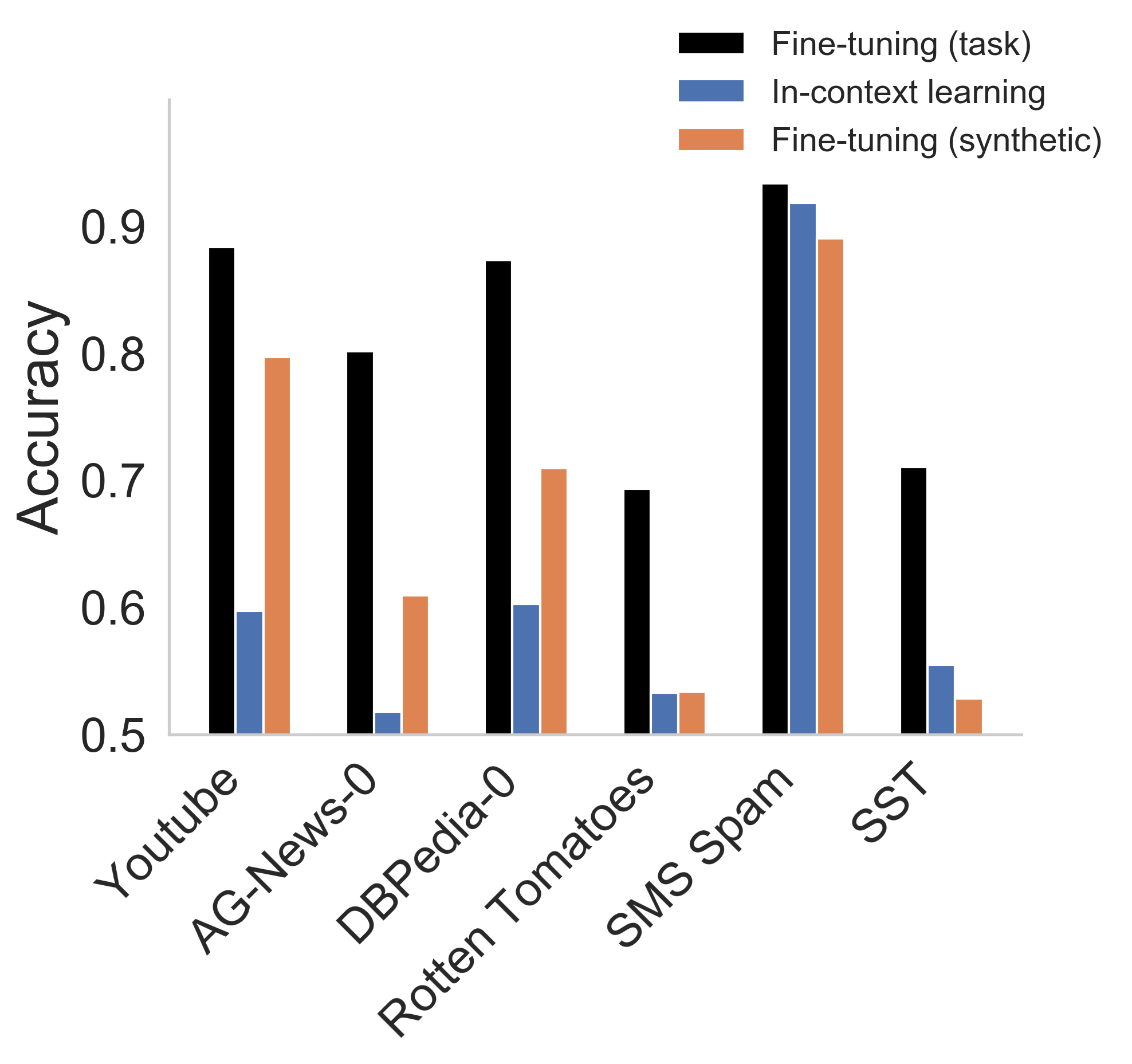}
    \subcaption{}
    \label{fig:ft-nl-dataset}
    \end{subfigure}
\caption{\textbf{Fine-tuning with NL synthetic task}. (Left) Averaged over 6 different tasks, fine-tuning with the NL synthetic task provides a lift over base in-context learning, and scales with number of examples. (Right) Dataset level comparisons between task-specific fine-tuning,  in-context learning and synthetic fine-tuning: synthetic fine-tuning outperforms base in-context learning on 4 out of 6 datasets, but lags task-specific tuning.}
\label{fig:ft-nl-app}
\end{figure*}

\subsection{Training Task}
We fine-tune the base model using a sequence of $\klr$ pairs of synthetically generated labeled natural language examples $(\x, \y)$. Each example $\x$ in the sequence $s = (\x_1, \y_1), \ldots, (\x_k, \y_k)$ consists of a list of strings constructed from a fixed  $V$ size of dimension $d = 30$ .
We use the following fixed vocabulary: [
    ``sports'',
    ``love'',
    ``hate'',
    ``car'',
    ``school'',
    ``family'',
    ``work'',
    ``sleep'',
    ``water'',
    ``tree'',
    ``fox'',
    ``train'',
    ``random'',
    ``movie'',
    ``music'',
    ``book'',
    ``play'',
    ``house'',
    ``spell'',
    ``bar'',
    ``jump'',
    ``park'',
    ``run'',
    ``hill'',
    ``fast'',
    ``slow'',
    ``talk'',
    ``wallet'',
    ``orange'',
    ``apple'',
    ``ball'',
    ``cat''
].

To generate a particular example $\x_i$, we sample each coordinate  $\x_{i,j}$ uniformly from the set $\{-1, +1 \}$. If the sampled value is $+1$, we set the value to be the corresponding word in the vocabulary, that is, $\x_{i,j} = V_j$. Otherwise, the word $\x_{i,j}$ is set to ``null''. For a given sequence $s$, we generate each of the labels $\{y_i\}$ as:
%
\begin{equation}
\param_t \sim \normal(0, I_\dx), \quad \y_{i} \sim \sigmoid(\weight\inner{\x_{i}}{\param}), \text{for }i \in [\klr]\;,
\end{equation}
where we set noise parameter $\weight = 5$. If the sampled output is 0, we set the $y_{i}$ to ``negative'' and ``positive'' otherwise. 

Finally, the inputs are formatted with following template: ``$x_1$ : $y_1$ , $x_2$ : $y_2$ , ... , $x_k$ : $y_k$'' and the model is trained using gradient descent on the loss
\begin{equation}
\loss(\Tr_\paramT) \defn \En_{\x, \y}\left[\frac{1}{k} \sum_{i=1}^k \lossce (\Tr_\paramT(z_{1:i-1}, x_i), \y_i)\right]\;,
\end{equation}
where $z_{1:i-1}$ corresponds to the first $i-1$ examples and $\lossce$ is the cross-entropy loss evaluated on the transformer prediction and the true $y_i$.

More concretely, a sample input sample sequence $s$ to be used for training looks like:
\begin{verbatim}
"sports love null car ... cat: positive, 
null love null car ... null: negative, 
... 
sports null hat null ... cat : positive"
\end{verbatim}

\subsection{Training Parameters}
We train \gptsmall on this synthetic task with a learning rate of 0.0001 and a batch size of 4. For each sequence we sampled a total of $k=60$ examples and trained the model for 10000 steps.

\subsection{Evaluation}
We evaluate on 6 datasets: AG News~\citep{zhang2015character}, DBPedia~\citep{zhang2015character}, SST~\citep{socher2013recursive}, SMS Spam~\citep{Almeida2011SpamFiltering}, Youtube~\citep{zhang2021wrench} and Rotten Tomatoes~\citep{Pang+Lee+Vaithyanathan:02a}. We truncate the input texts to 100 characters to fit more in-context examples. We evaluate over a range of context sizes ($k$=[18, 32, 48, 60]). At evaluation time, we use the same ``sentence : label'' format that was used to train the model. We evaluate over 3 random seeds. In Figure~\ref{fig:ft-nl-app}, we compare the performance of the model fine-tuned on probabilistic inference tasks and the base in-context learning performance. While the performance of the fine-tuned model is better than the base in-context learning capabilities, task-specific fine-tuning still outperforms it by an average of $16.87\%$ (see Figure~\ref{fig:ft-nl-app}).

\section{Details for Representation-Reasoning decomposition evaluations}\label{app:sec-2}
In this section, we provide details for the experimental evaluation and additional results for the representation-reasoning decomposition introduced in Section~\ref{sec:rep-reas}. 

\subsection{Experimental setup}
For these experiments, we evaluate three different language models: \gptsmall, \pythiasmall, and \bloomsmall on a collection of 6 binary classification datasets: AG News~\citep{zhang2015character}, DBPedia~\citep{zhang2015character}, SST~\citep{socher2013recursive}, SMS Spam~\citep{Almeida2011SpamFiltering}, Youtube~\citep{zhang2021wrench} and Rotten Tomatoes~\citep{Pang+Lee+Vaithyanathan:02a}. For each model, we run evaluations for three different random seeds, where the randomness was in the set of datapoints chosen for the training task.  For the hyperparameters, we performed an extensive search for all models across datasets. For details on these hyperparameters and the adapter architecture we evaluate over, see Appendix~\ref{app:exp-setup}. 

To conduct linear probing over the embeddings, we perform logistic regression over the output embeddings of each model and the given labels in the training set using the built-in logistic regression solver from the scikit-learn python library, utilizing the \emph{lbgfs} solver.

\subsection{Detailed results}
For each class of methods in the task-adaptation taxonomy from Section~\ref{sec:taxonomy}, we now describe the details of the experimental evaluation and present additional results.

\begin{figure*}[t!]
    \centering
    \begin{subfigure}[b]{0.31\textwidth}
        \centering
\includegraphics[width=1\textwidth]{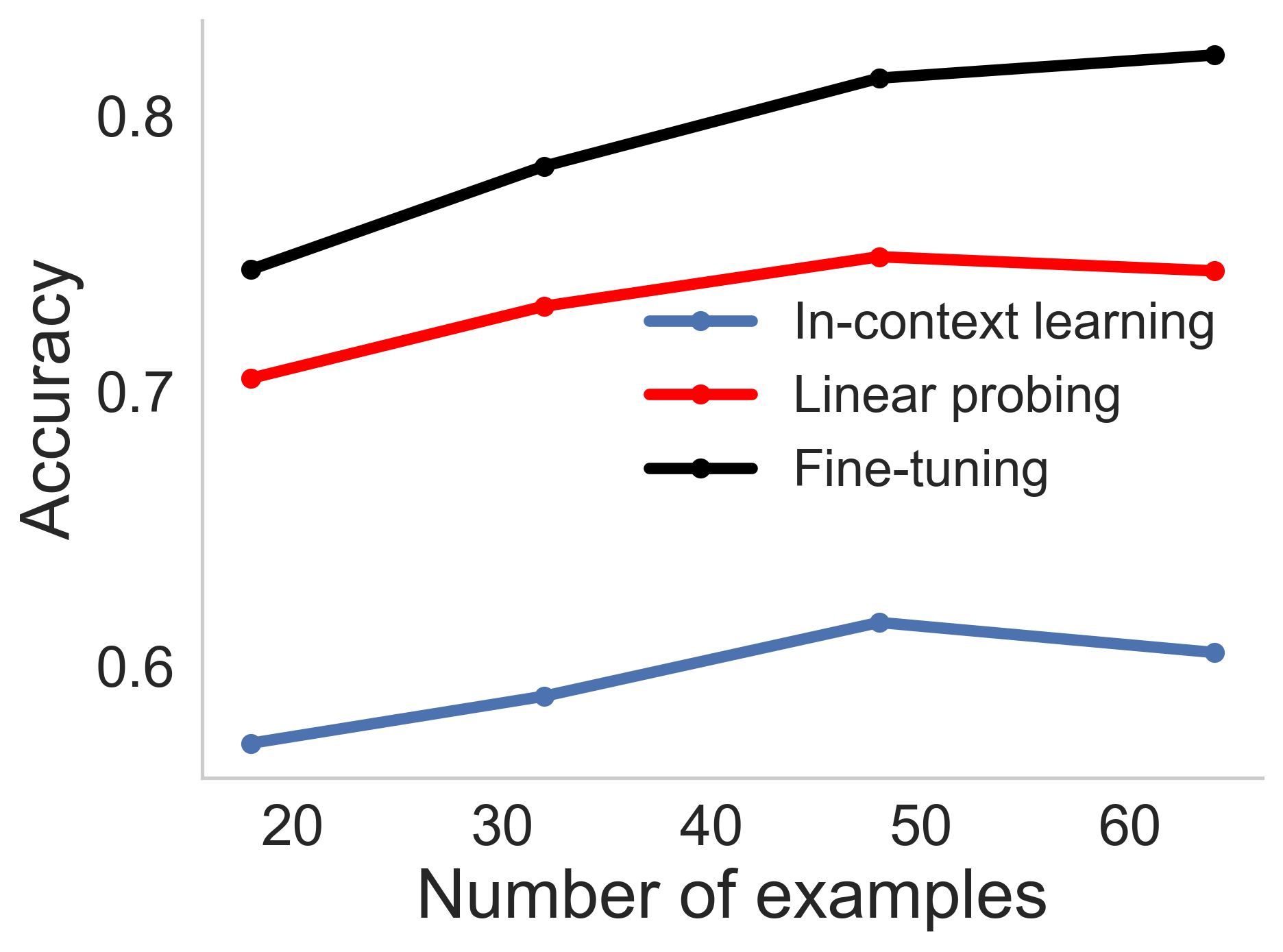}
    \subcaption{\gptsmall}
    \end{subfigure}
    \begin{subfigure}[b]{0.31\textwidth}
        \centering
\includegraphics[width=1\textwidth]{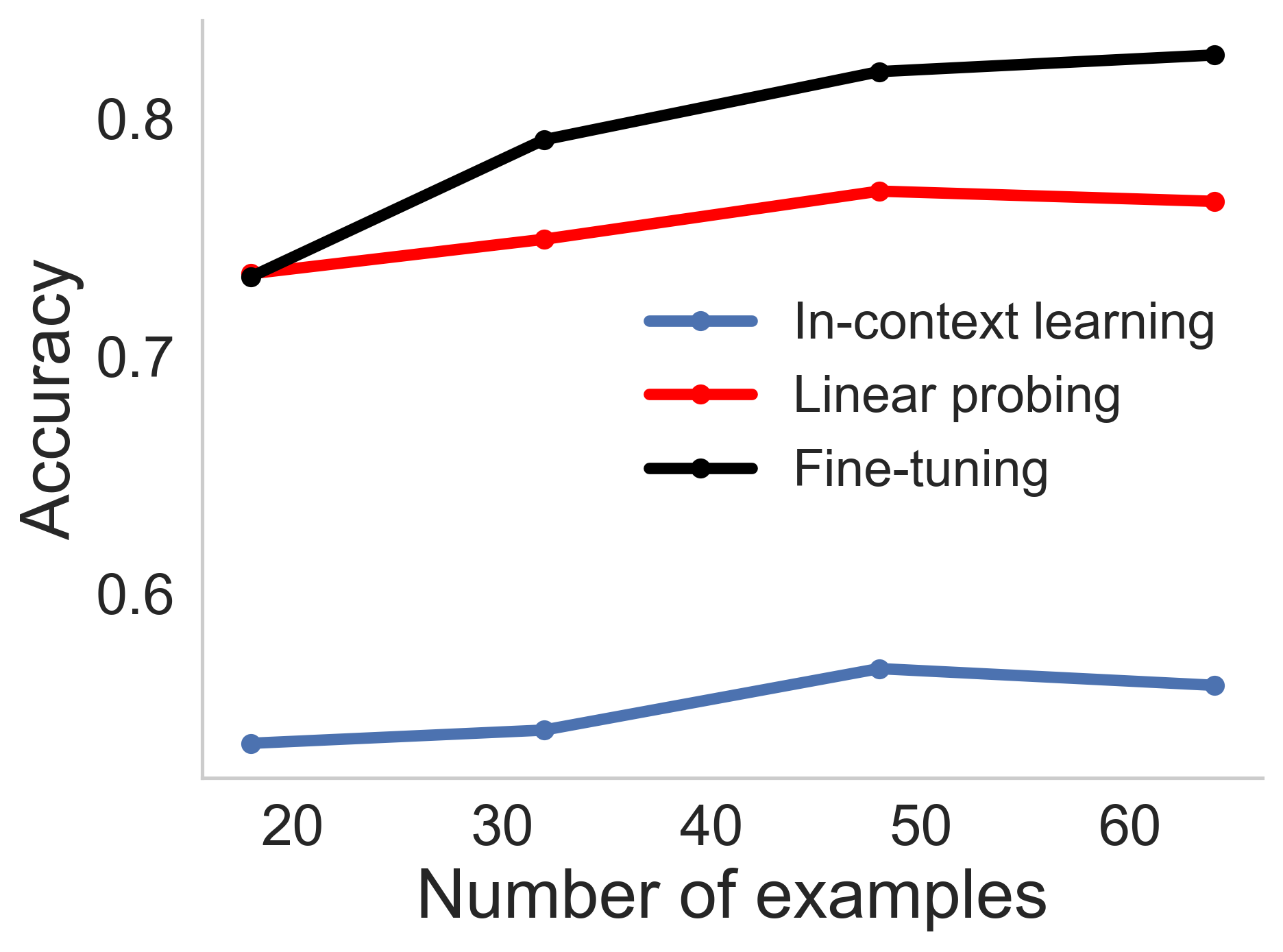}
    \subcaption{\pythiasmall}
    \end{subfigure}
    \begin{subfigure}[b]{0.31\textwidth}
        \centering
\includegraphics[width=1\textwidth]{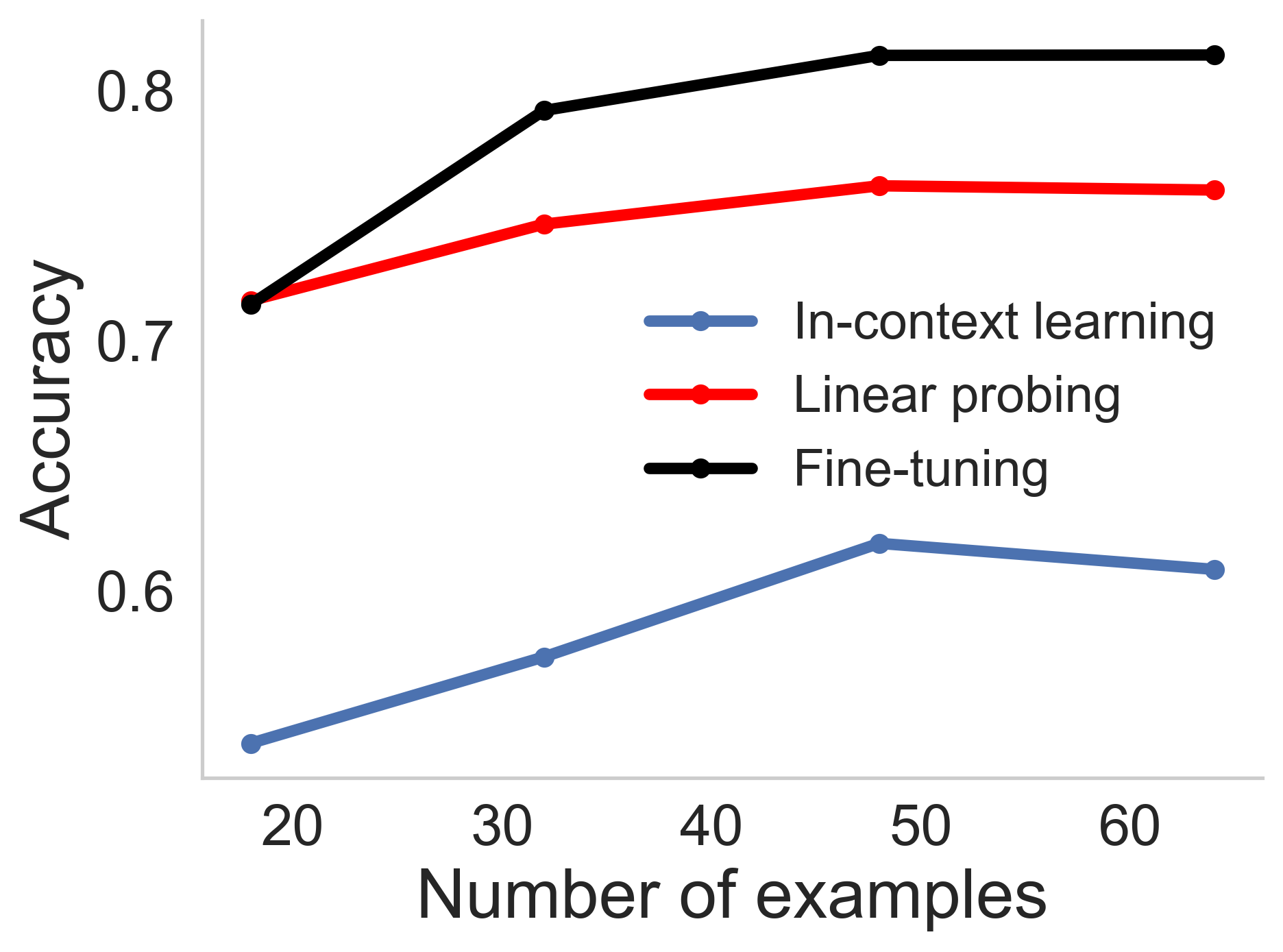}
    \subcaption{\bloomsmall}
    \end{subfigure}
    \caption{\textbf{Comparison of linear probing, in-context learning, and fine-tuning}. Accuracy of in-context learning vs. linear probing on model embeddings across three model families: representations have sufficient information. }
    \label{fig:icl-ft-lr}
\end{figure*}

\begin{figure*}[t!]
    \centering
    \begin{subfigure}[b]{1\textwidth}
        \centering
\includegraphics[width=1\textwidth]{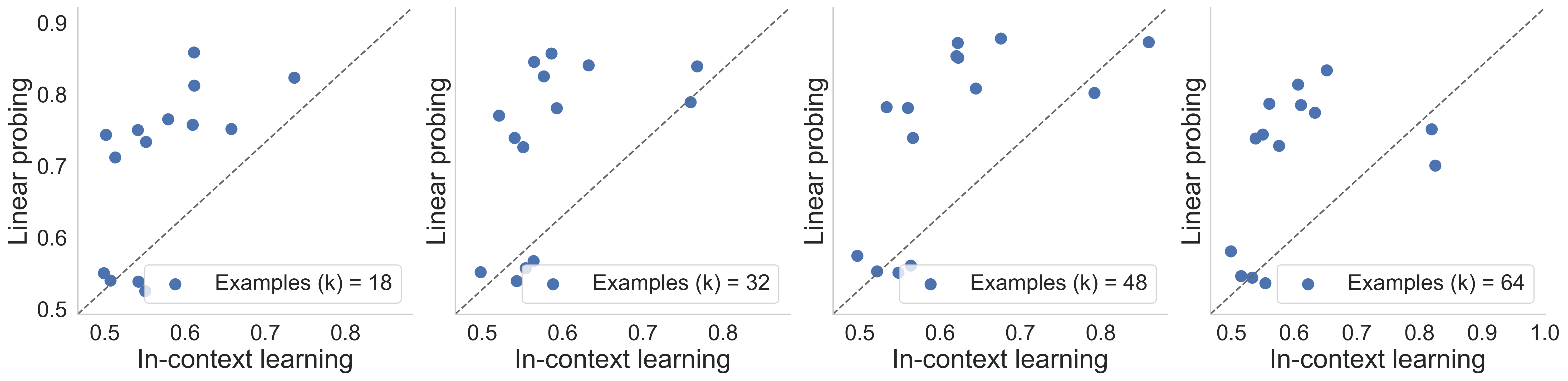}
    \subcaption{\gptsmall}
    \end{subfigure}
    \begin{subfigure}[b]{1\textwidth}
        \centering
\includegraphics[width=1\textwidth]{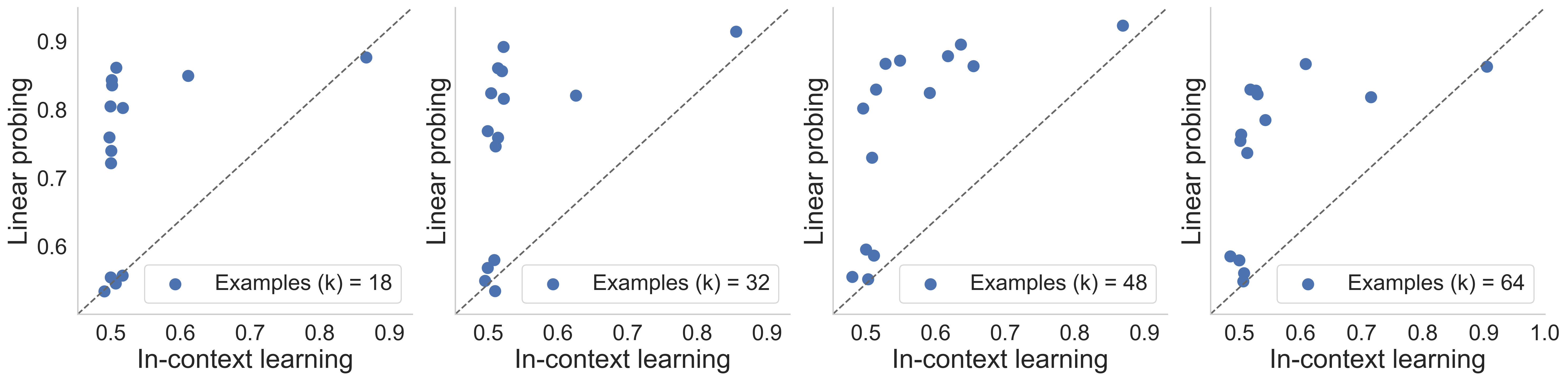}
    \subcaption{\pythiasmall}
    \end{subfigure}
    \begin{subfigure}[b]{1\textwidth}
        \centering
\includegraphics[width=1\textwidth]{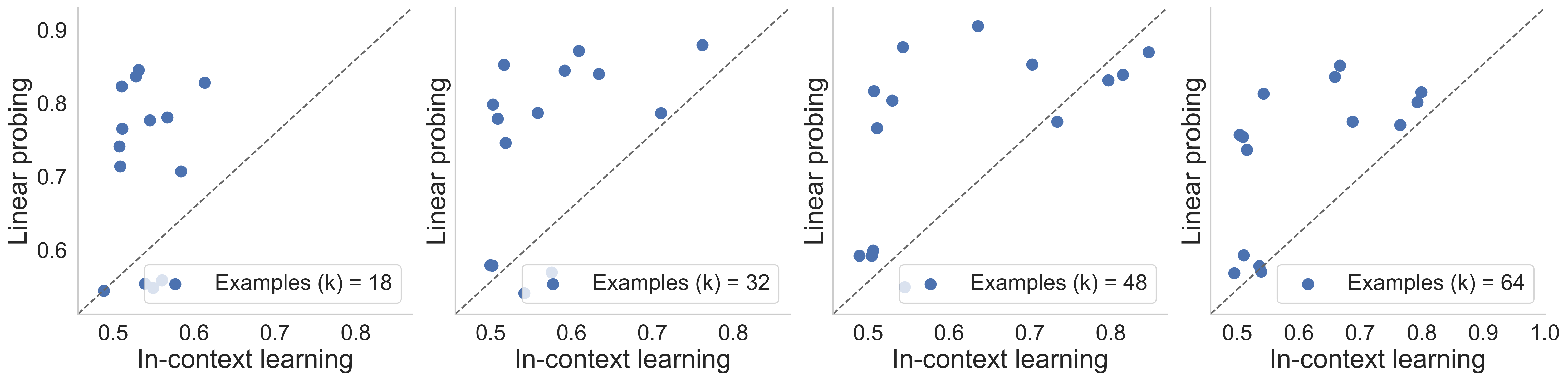}
    \subcaption{\bloomsmall}
    \end{subfigure}
    \caption{\textbf{Linear probing vs. in-context learning}. Scatter plot of accuracy of in-context learning vs. linear probing on model embeddings across model families and different number of in-context examples: linear probing consistently outperforms in-context learning indicating that the learned representations have sufficient information. Each point in the plot represents a dataset.}
    \label{fig:icl-lr}
\end{figure*}

\paragraph{In-context learning.} To understand the representation and reasoning gaps for in-context learning, we evaluated three accuracies: a) using in-context learning with base models, b) fine-tuning the model for the task, and c) linear probing the model specifically for the task. The gap due to representation was taken to be the difference between the fine-tuning and linear probing accuracies while the reasoning gap was the gap between linear probing and in-context learning, as described in eq.~\eqref{eq:rep-decomp}. 

In Figure~\ref{fig:icl-ft-lr}, we show the average accuracies of in-context learning, linear probing, and fine-tuning across the 6 tasks. Linear probing closes the gap between in-context learning and fine-tuning, while being task-specific. In Figure~\ref{fig:icl-lr}, we show a scatter plot of the accuracies of linear probing vs. the accuracies of base in-context learning. Linear probing consistently out performs in-context learning showing that the learned representations across these models have sufficient information to complete the tasks but lack reasoning abilities.

\begin{figure*}[t!]
    \centering
    \begin{subfigure}[b]{0.31\textwidth}
        \centering
\includegraphics[width=0.9\textwidth]{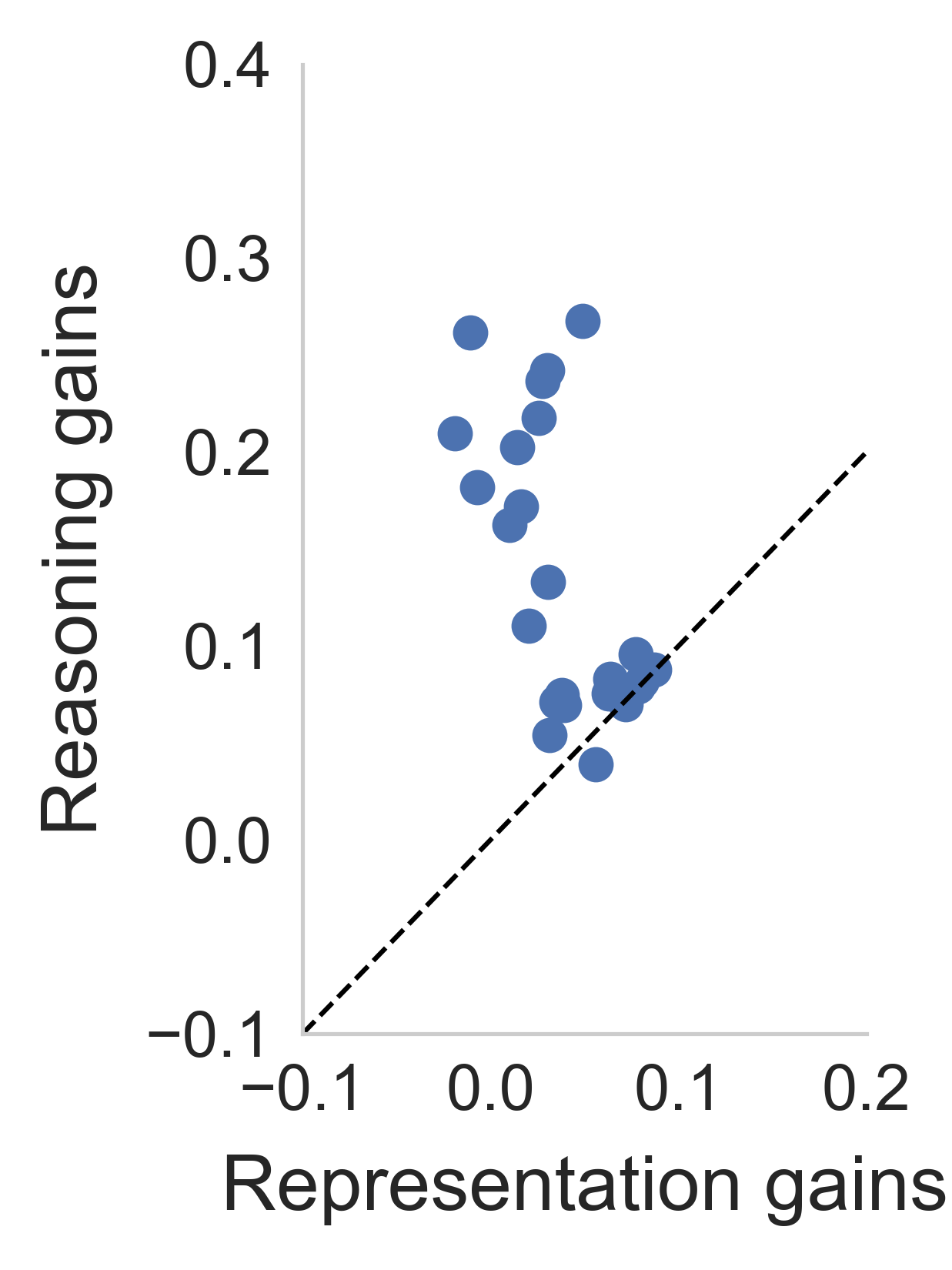}
    \subcaption{\gptsmall}
    \end{subfigure}
    \begin{subfigure}[b]{0.31\textwidth}
        \centering
\includegraphics[width=0.9\textwidth]{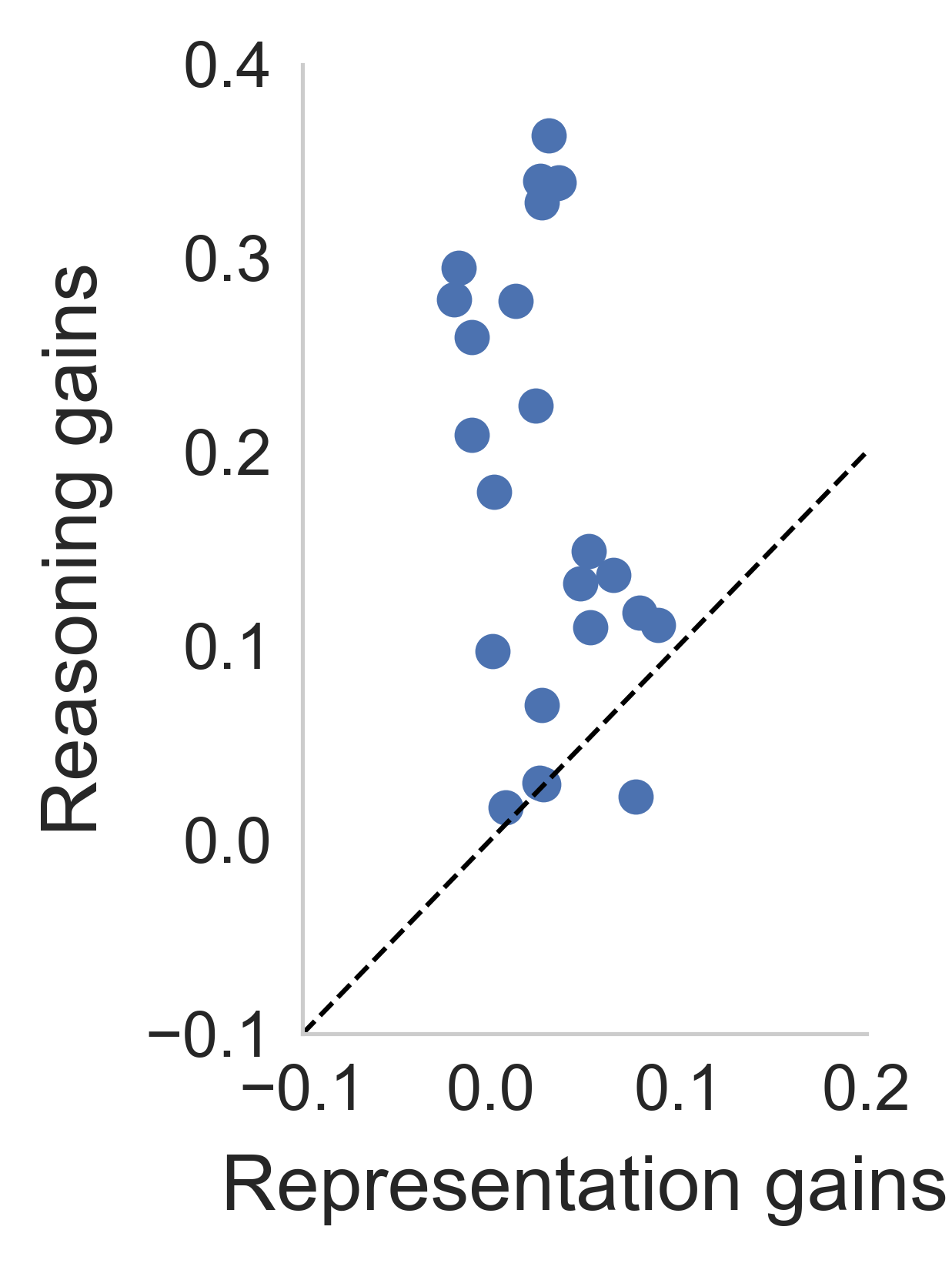}
    \subcaption{\pythiasmall}
    \end{subfigure}
    \begin{subfigure}[b]{0.31\textwidth}
        \centering
\includegraphics[width=0.9\textwidth]{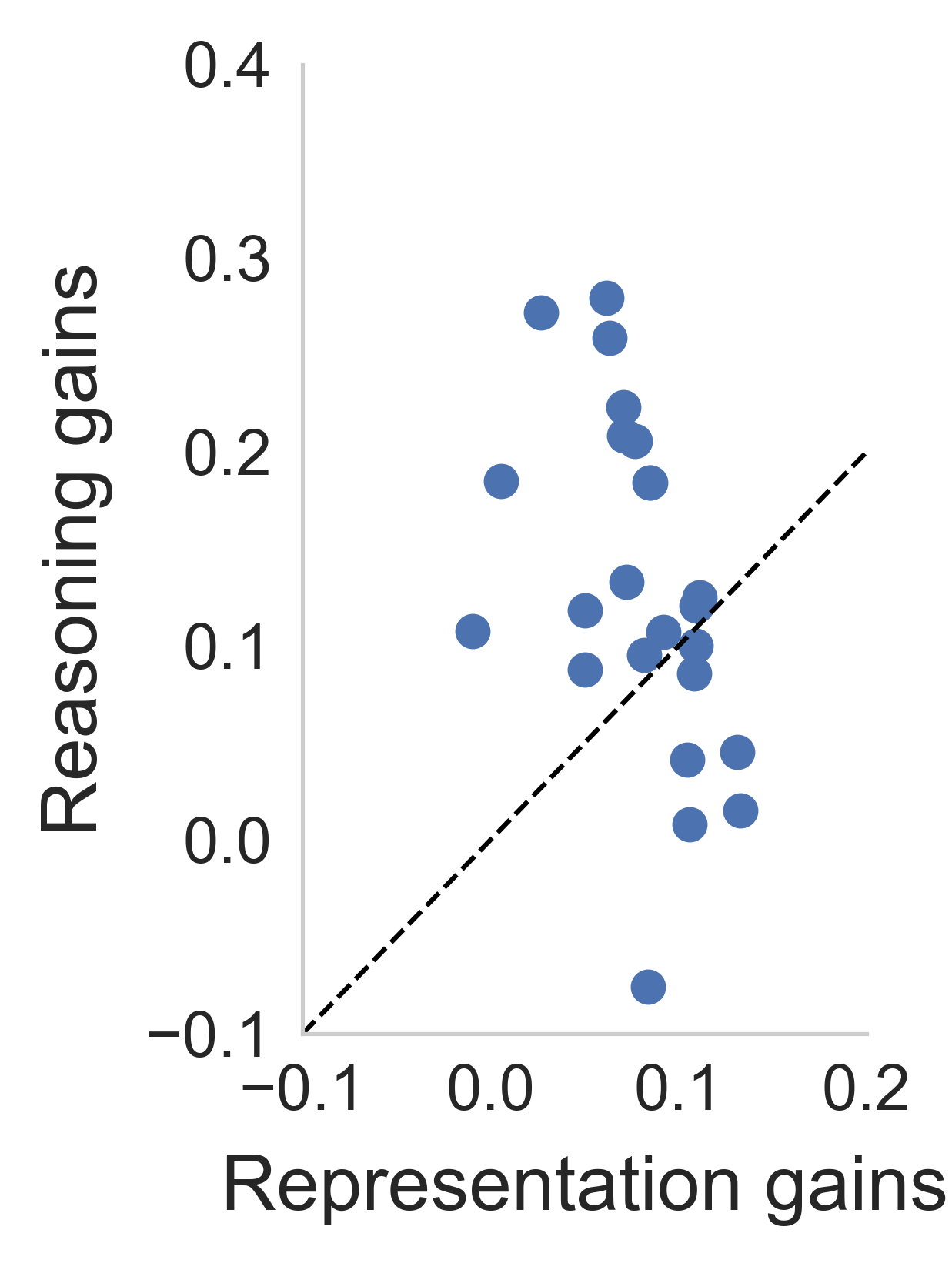}
    \subcaption{\bloomsmall}
    \end{subfigure}
    \caption{\textbf{Effects of fine-tuning on reasoning}. Across datasets (each point in plot represents a dataset) and model families, fine-tuning improves task-specific reasoning which improves it performance over base in-context learning.}
    \label{fig:ft-rep-reas}
\end{figure*}

\begin{figure*}[t!]
    \centering
    \begin{subfigure}[b]{0.31\textwidth}
        \centering
\includegraphics[width=1\textwidth]{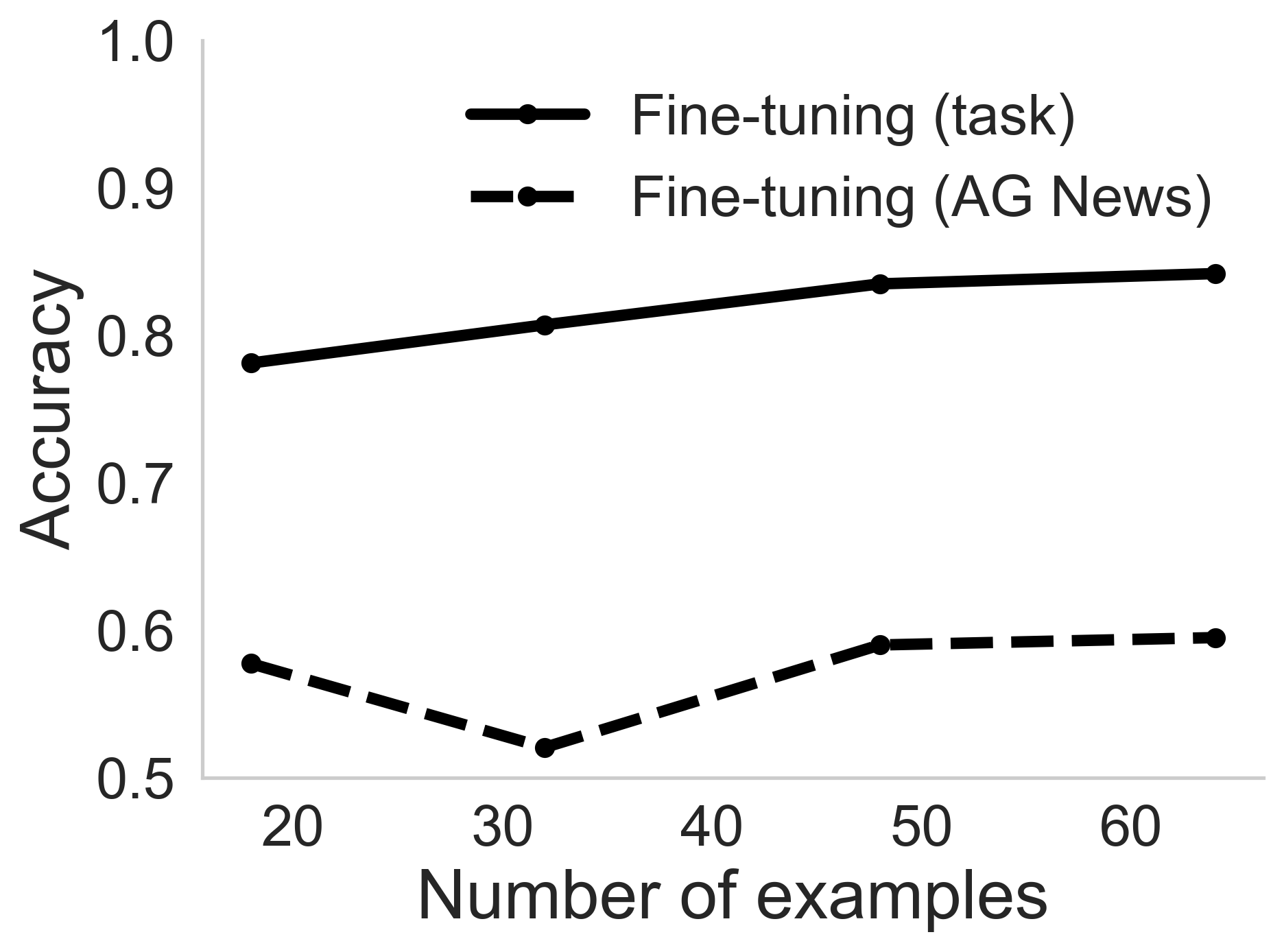}
    \subcaption{\gptsmall}
    \end{subfigure}
    \begin{subfigure}[b]{0.31\textwidth}
        \centering
\includegraphics[width=1\textwidth]{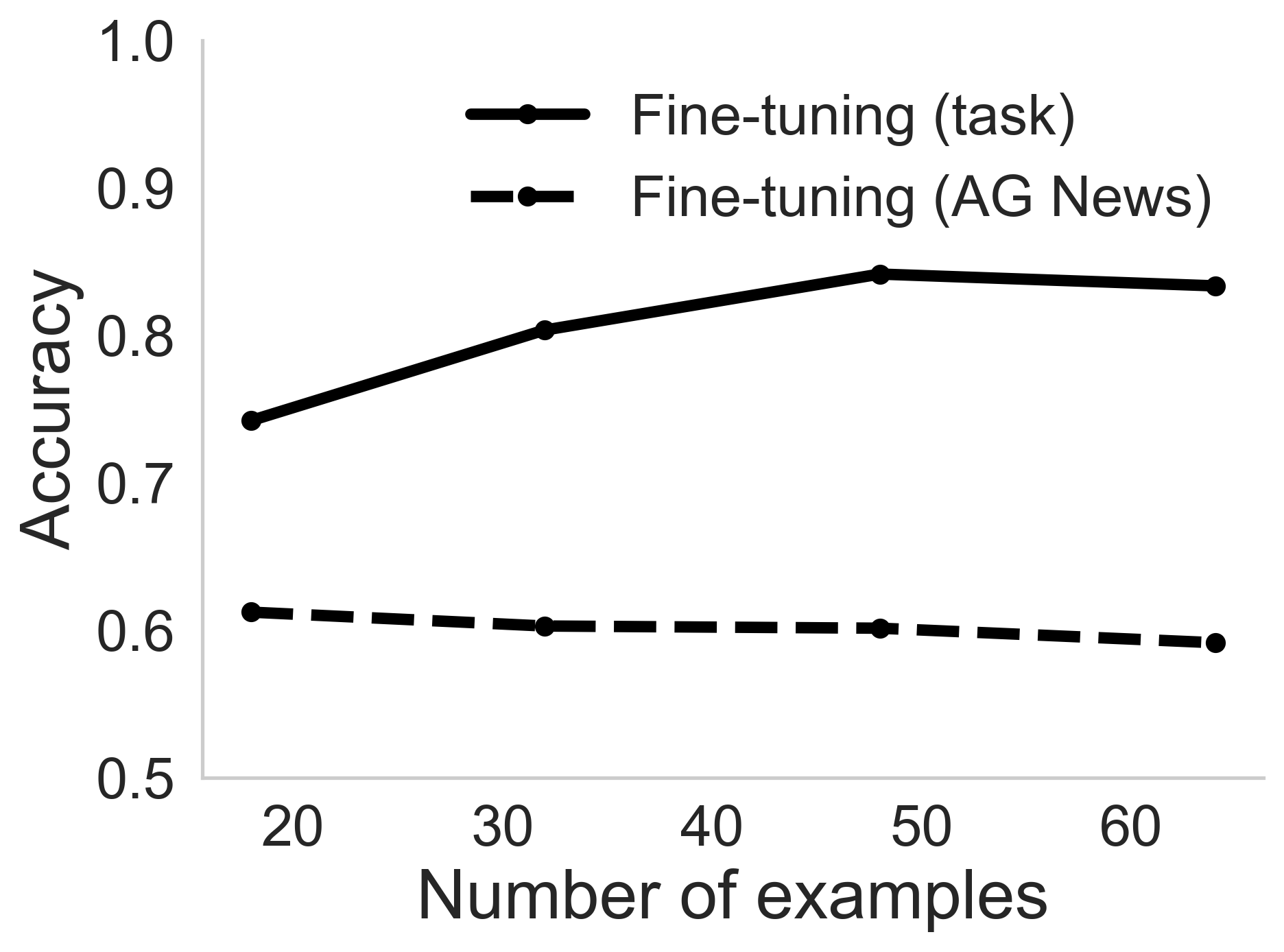}
    \subcaption{\pythiasmall}
    \end{subfigure}
    \begin{subfigure}[b]{0.31\textwidth}
        \centering
\includegraphics[width=1\textwidth]{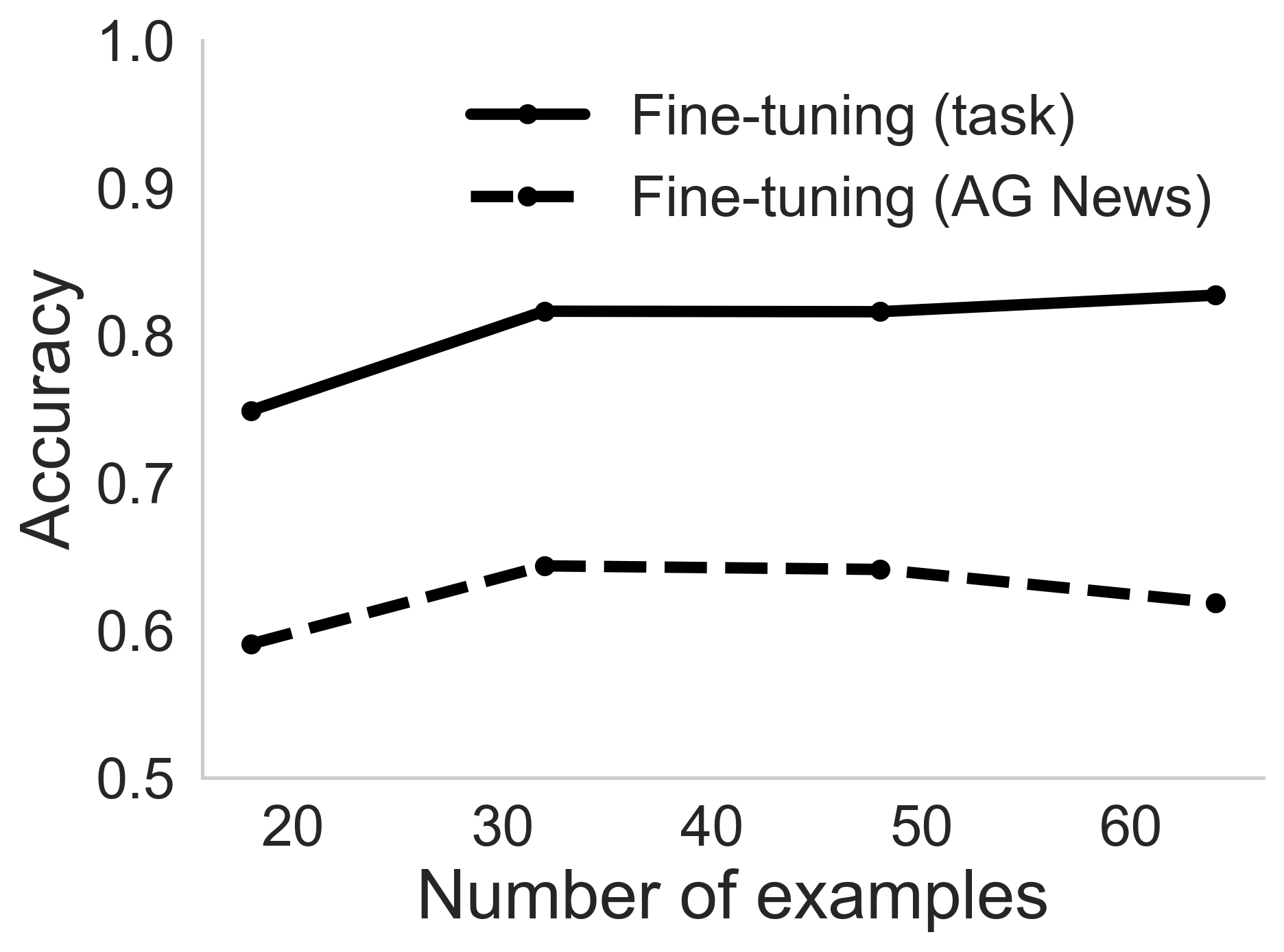}
    \subcaption{\bloomsmall}
    \end{subfigure}
    \caption{\textbf{Effects of fine-tuning on task-agnosticity} Accuracy of task-specific fine-tuned model vs. accuracy of model fine-tuned on AG-News-0 and evaluated on task. Fine-tuning hurts task-agnosticity across all three model families.}
    \label{fig:ft-ag-acc}
\end{figure*}

\begin{figure*}[t!]
    \centering
    \begin{subfigure}[b]{0.31\textwidth}
        \centering
\includegraphics[width=1\textwidth]{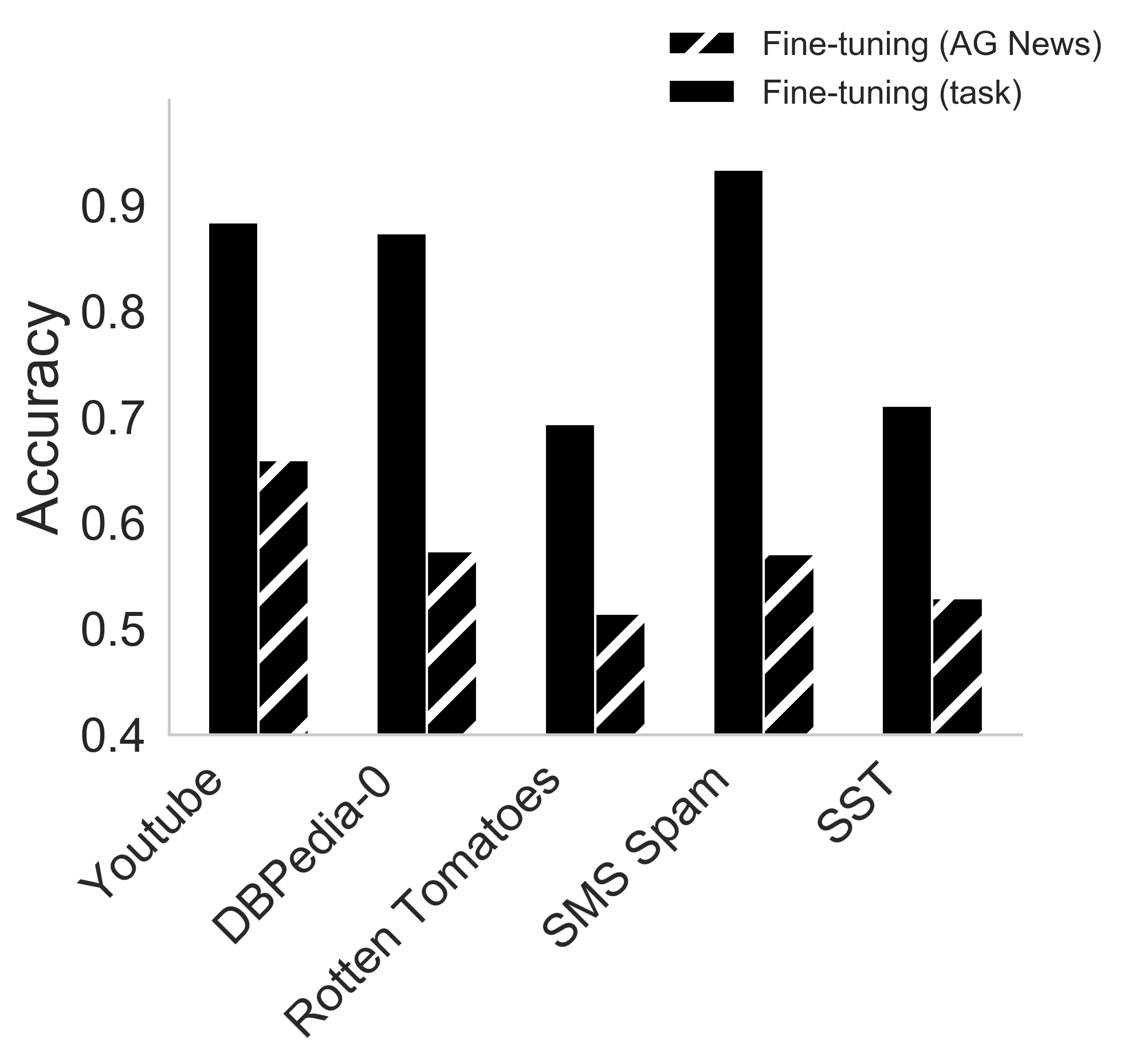}
    \subcaption{\gptsmall}
    \end{subfigure}
    \begin{subfigure}[b]{0.31\textwidth}
        \centering
\includegraphics[width=1\textwidth]{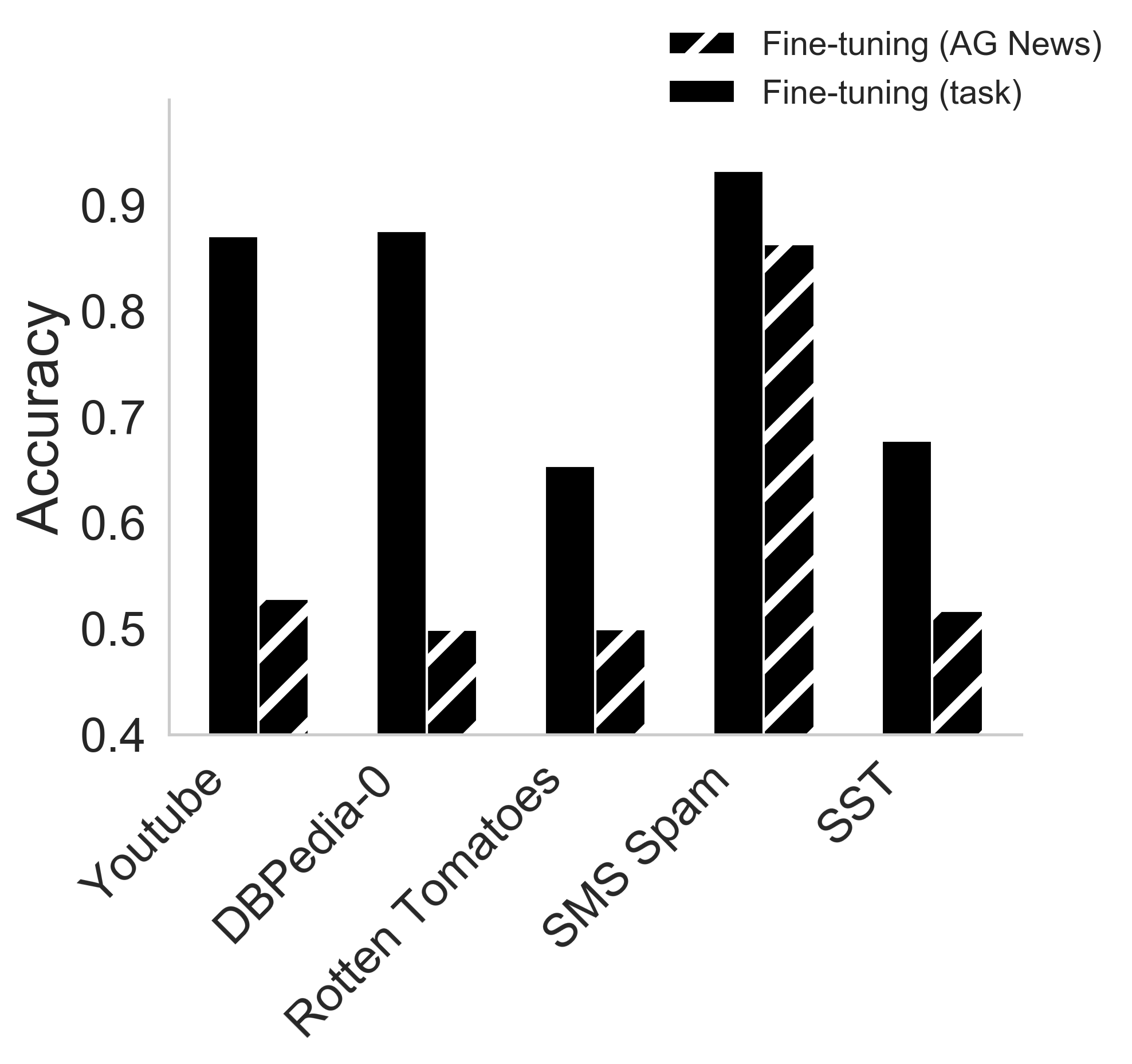}
    \subcaption{\pythiasmall}
    \end{subfigure}
    \begin{subfigure}[b]{0.31\textwidth}
        \centering
\includegraphics[width=1\textwidth]{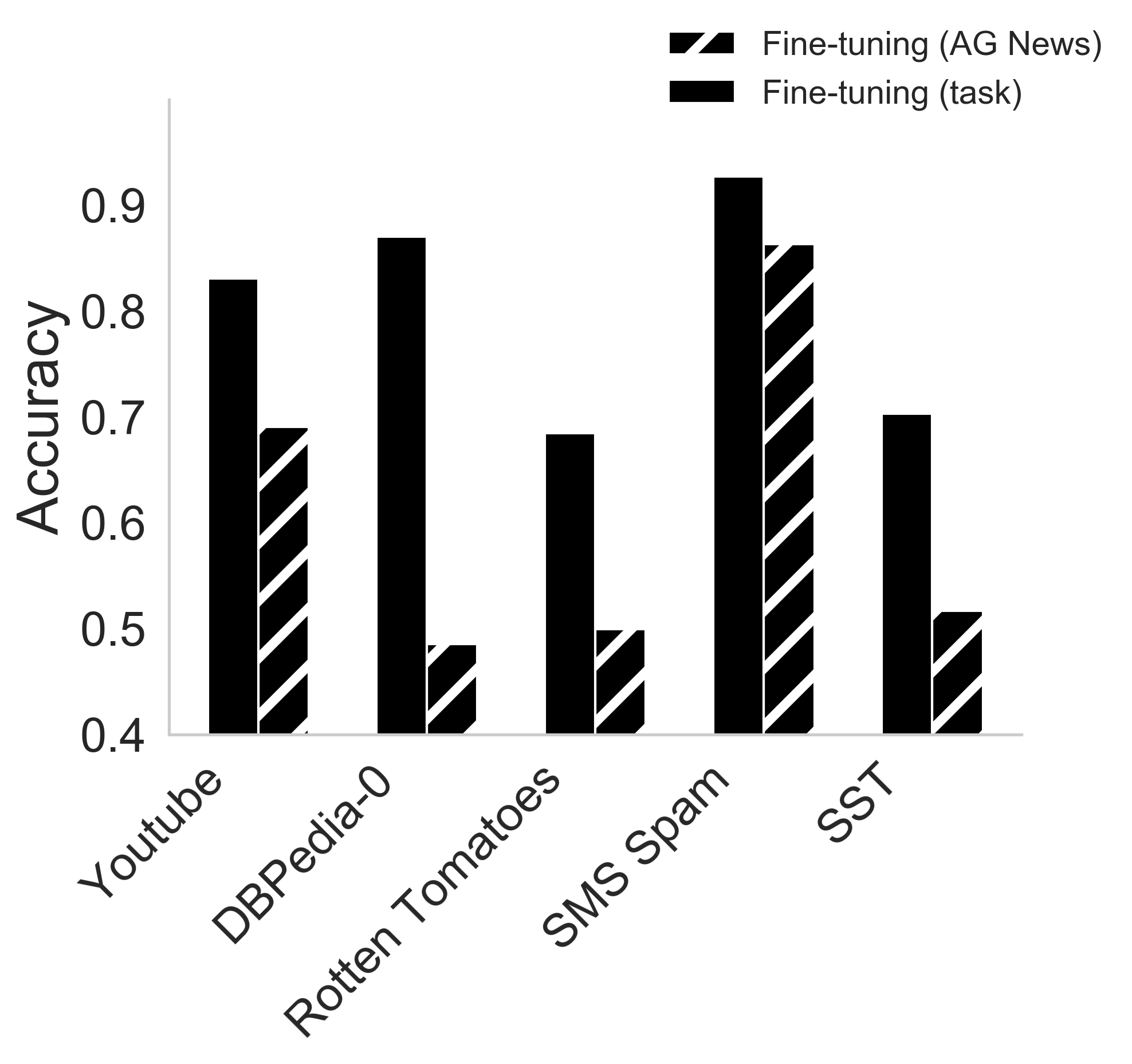}
    \subcaption{\bloomsmall}
    \end{subfigure}
    \caption{\textbf{Effects of fine-tuning on task-agnosticity (dataset level)} Accuracy of task-specific fine-tuned model vs. accuracy of model fine-tuned on AG-News-0 and evaluated on task. Fine-tuning consistently hurts task-agnosticity across all three model families and datasets.}
    \label{fig:ft-ag-bar}
\end{figure*}

\begin{figure*}[t!]
    \centering
    \begin{subfigure}[b]{0.31\textwidth}
        \centering
\includegraphics[width=1\textwidth]{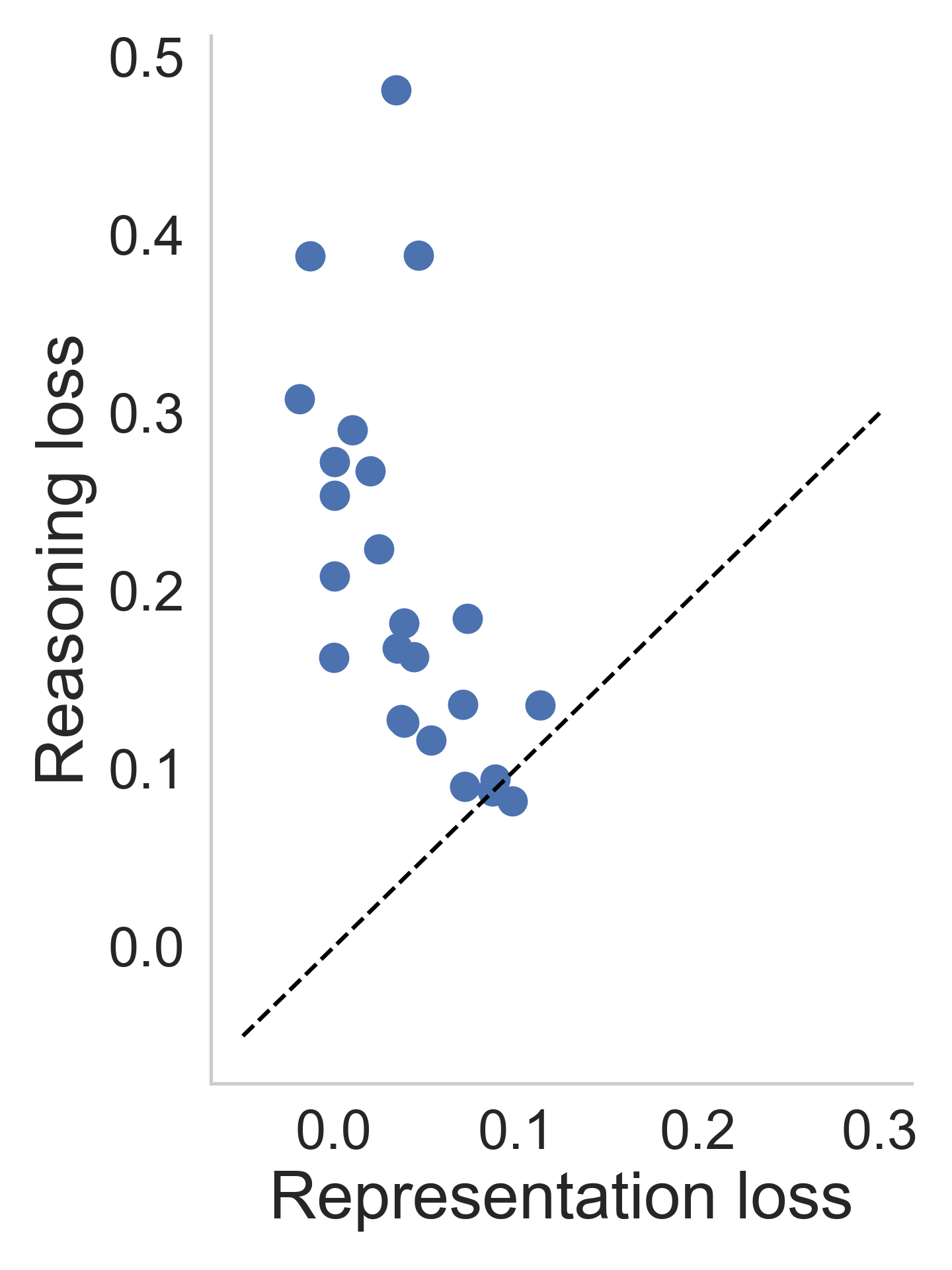}
    \subcaption{\gptsmall}
    \end{subfigure}
    \begin{subfigure}[b]{0.31\textwidth}
        \centering
\includegraphics[width=1\textwidth]{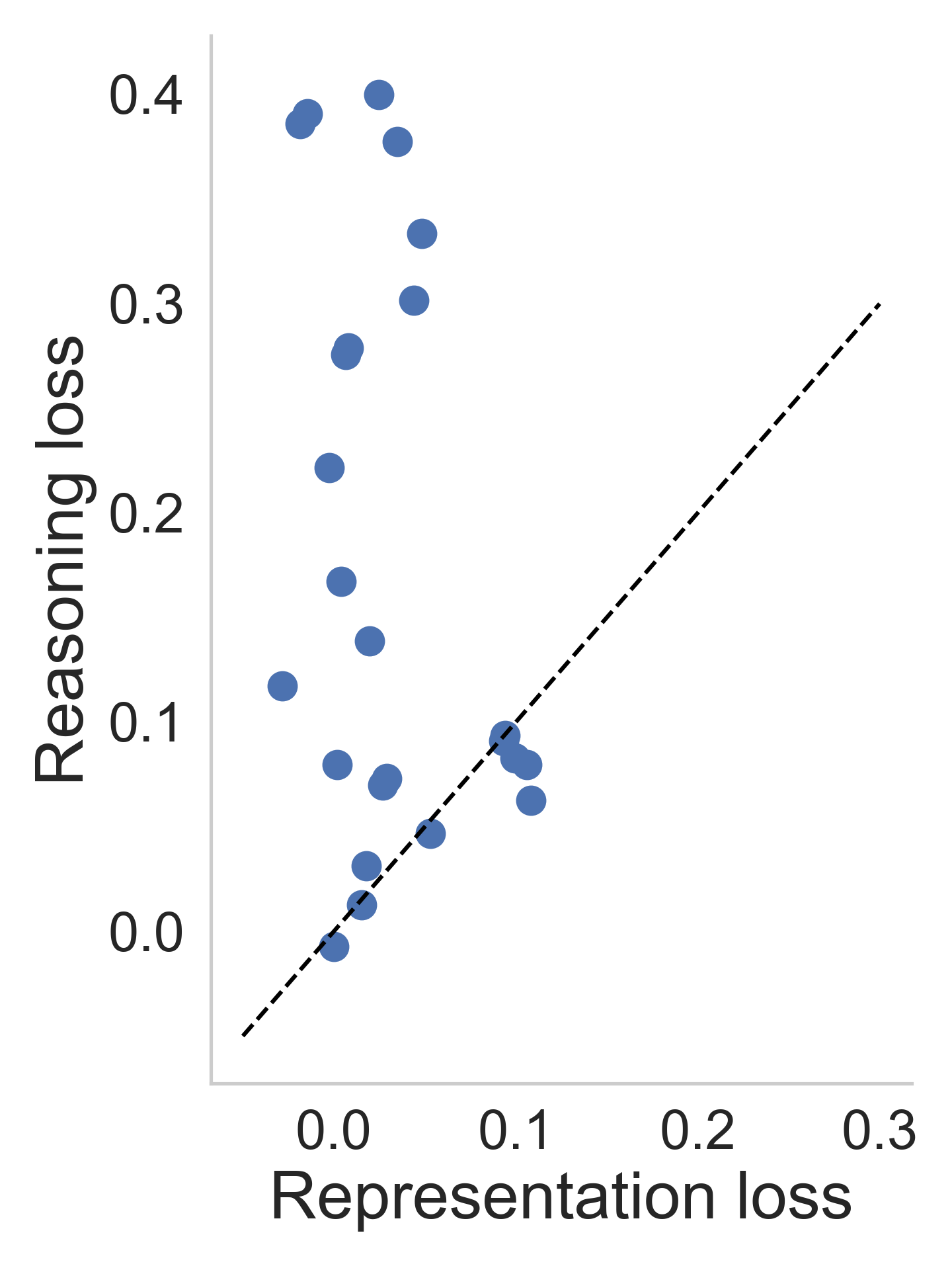}
    \subcaption{\pythiasmall}
    \end{subfigure}
    \begin{subfigure}[b]{0.31\textwidth}
        \centering
\includegraphics[width=1\textwidth]{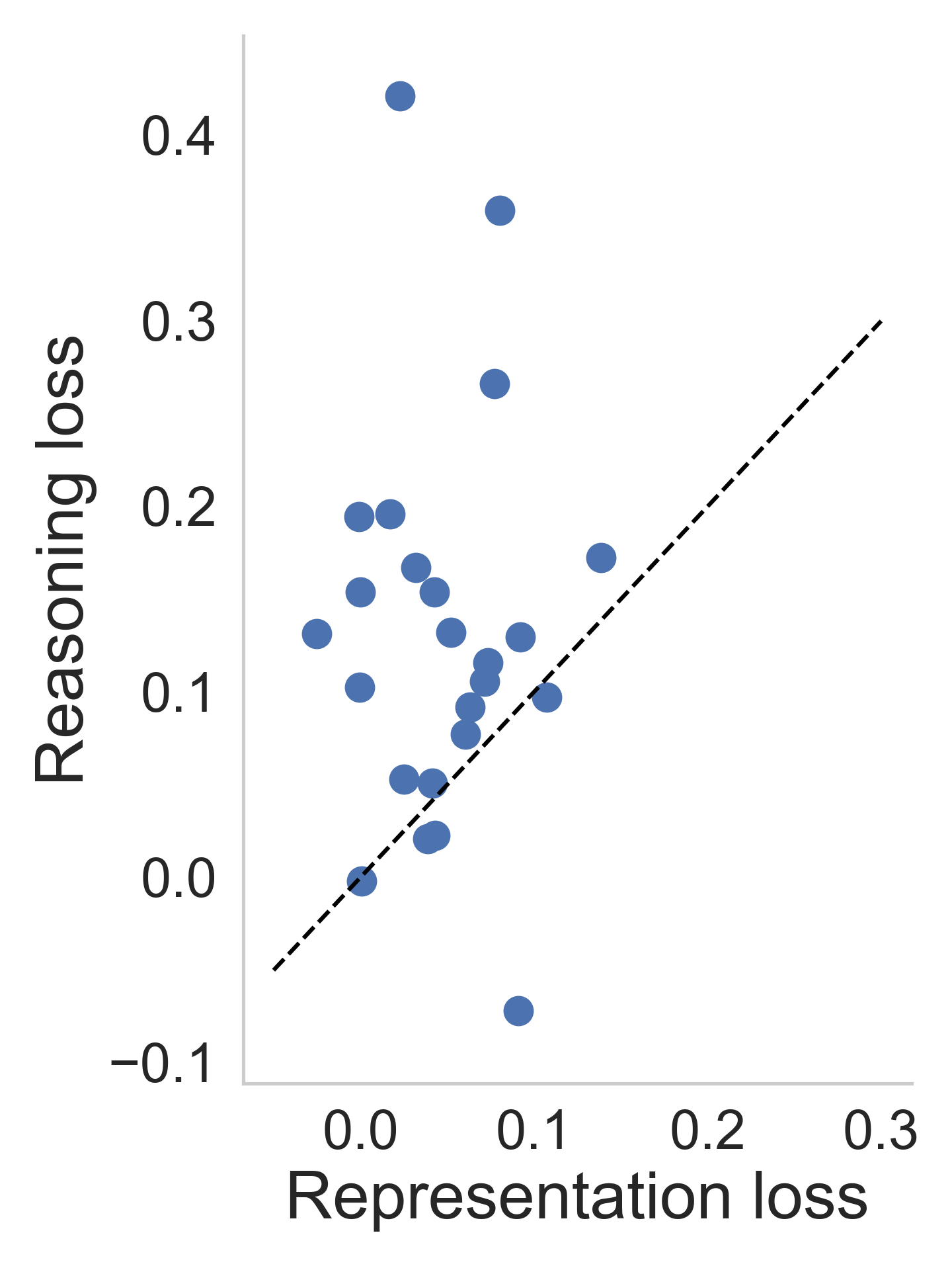}
    \subcaption{\bloomsmall}
    \end{subfigure}
    \caption{\textbf{Effects of fine-tuning on task agnosticity}. Scatter plot of reasoning loss against representation loss when the base model is trained on AG-News-0 and evaluated on other tasks. Across datasets (each point in plot represents a dataset), fine-tuning majorly impairs reasoning when transferring to tasks outside the specific fine-tuned task.}
    \label{fig:ft-ag-rep-reas}
\end{figure*}

\paragraph{Fine-tuning.} For the fine-tuning approach, we are interested in understanding two hypotheses: a) how does fine-tuning improve the model performance, and b) whether fine-tuning hurts task-agnosticity of the base model and if yes, what is the underlying reason for it. 

For the first hypothesis, we evaluate the proportion of gains that can be attributed to improved representations of the the underlying model. This is computed as the difference in performance of linear probing over the base model and over the fine-tuned model --- this evaluates how much the representations have changed specifically for this task. The reasoning gains are then computed by subtracting the representation gains from the total gain (fine-tuning accuracy minus in-context accuracy). Figure~\ref{fig:ft-rep-reas} shows a scatter plot of these representation gains and reasoning gains, plotted across different datasets and number of examples ($k$). Most of the gains which are realized by fine-tuning are because of improved task-specific reasoning capabilities across the model families.

For the second hypothesis, we first evaluate \emph{whether} fine-tuning hurts task-agnosticity. For this we evaluate two sets of accuracies: accuracy of a model fine-tuned for the specific task and the accuracy of a model on the task but fine-tuned on the AG News dataset. From Figures~\ref{fig:ft-ag-acc} and~\ref{fig:ft-ag-bar}, we see that there is a drop in accuracy---over $25.77\%$ across models and datasets. For the second part, we again decompose the drop in accuracy into a representation drop and a reasoning drop. The representation drop is computed by training a linear probe over the two models (task-specific fine-tuned and AG News fine-tuned) and looking at the difference between them. The reasoning drop, as before, is computed by subtracting this representation drop from the total drop. Figure~\ref{fig:ft-ag-rep-reas} shows that most of this drop in task-agnosticity can be attributed to over-fitting of the reasoning abilities over the task for which the models are fine-tuned.

\begin{figure*}[t!]
    \centering
    \begin{subfigure}[b]{0.31\textwidth}
        \centering
\includegraphics[width=1\textwidth]{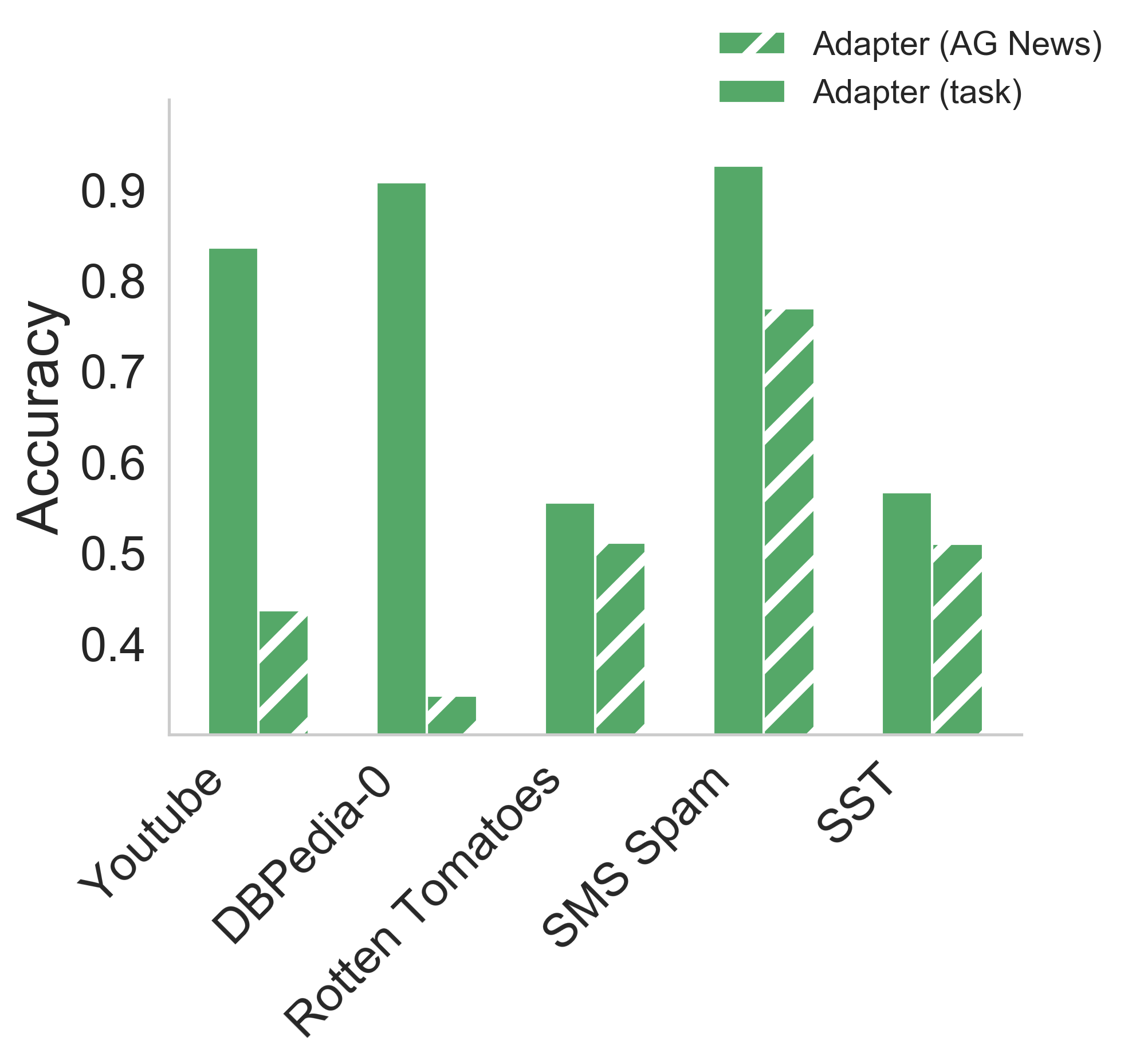}
    \subcaption{\gptsmall}
    \end{subfigure}
    \begin{subfigure}[b]{0.31\textwidth}
        \centering
\includegraphics[width=1\textwidth]{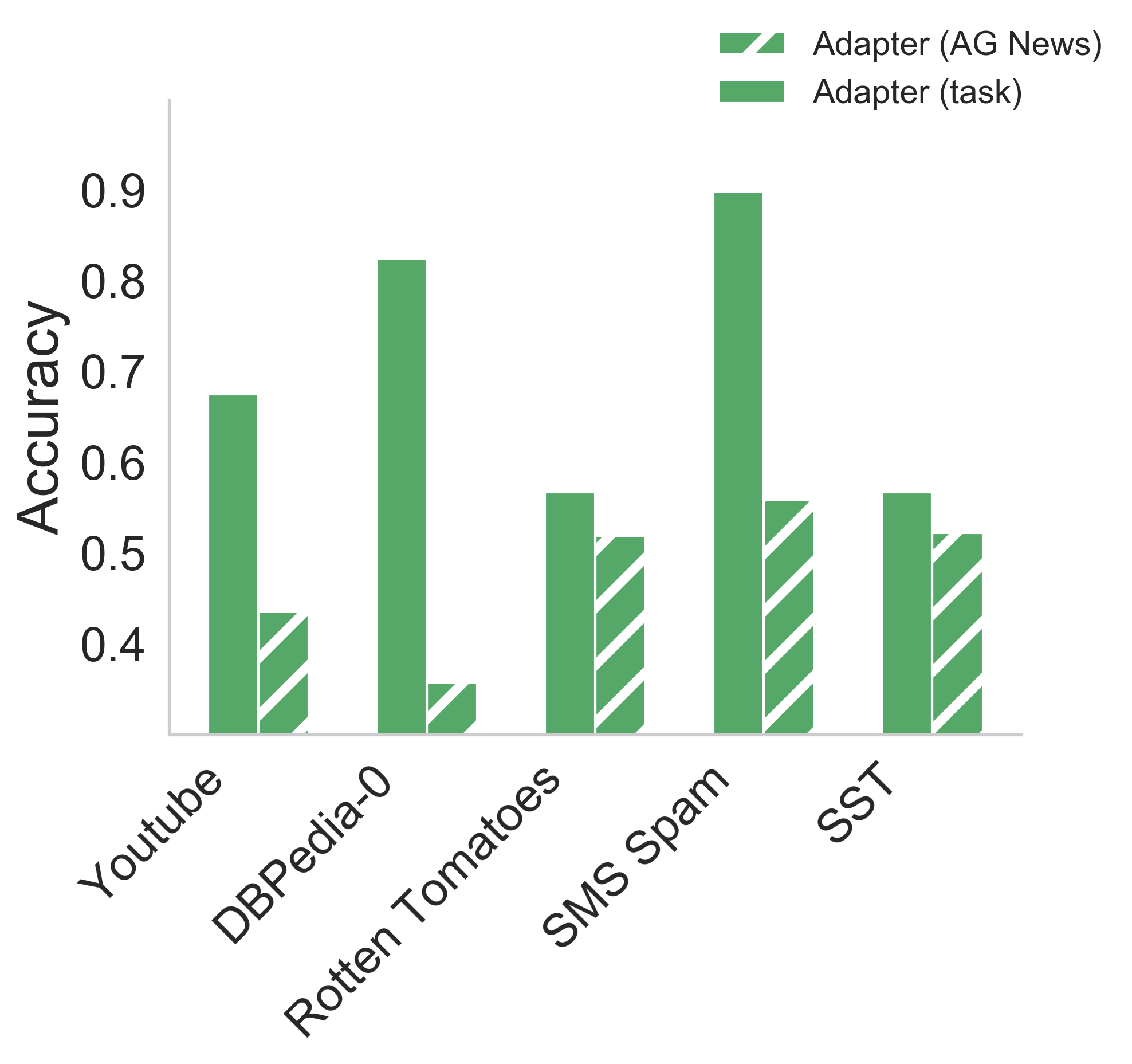}
    \subcaption{\pythiasmall}
    \end{subfigure}
    \begin{subfigure}[b]{0.31\textwidth}
        \centering
\includegraphics[width=1\textwidth]{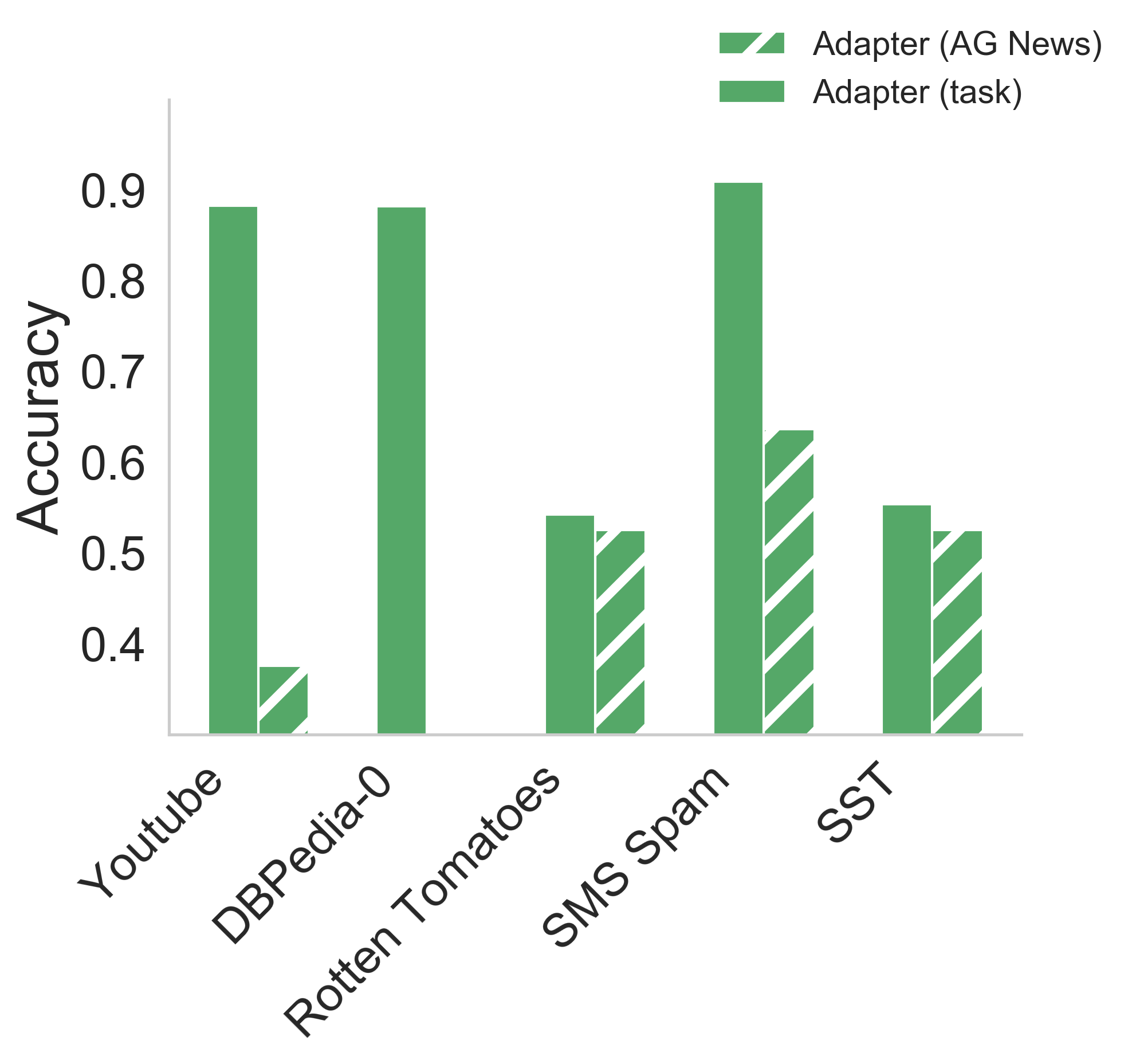}
    \subcaption{\bloomsmall}
    \end{subfigure}
    \caption{\textbf{Effects of fine-tuning on task-agnosticity of adapters}. Accuracy of task-specific fine-tuned adapter vs. accuracy of adapter fine-tuned on AG-News-0 and evaluated on task. Fine-tuning consistently hurts the generalization ability of adapters across datasets.}
    \label{fig:adapt-micro}
\end{figure*}
\paragraph{Adapters.} Since adapters do not modify the underlying representations of the model, we look at how a single adapter generalizes across tasks. For this we train an adapter on the AG News dataset and evaluate it on the other datasets. We compare this set of accuracies with those obtained by task-specific adapters in Figure~\ref{fig:adapt-micro}. The main conclusion is that task-specific adapters are not agnostic learners and over-fit to the task for which they are fine-tuned.

\section{Details for \alg implementation}\label{app:sec-3}
This section contains the details on training \alg's reasoning module and extended results on the choice of embeddings from Section~\ref{sec:fluffy}. 

\subsection{\alg's reasoning module}\label{app:tr-reas-det}
\begin{figure}[t!]
    \centering
    \includegraphics[width=1\textwidth]{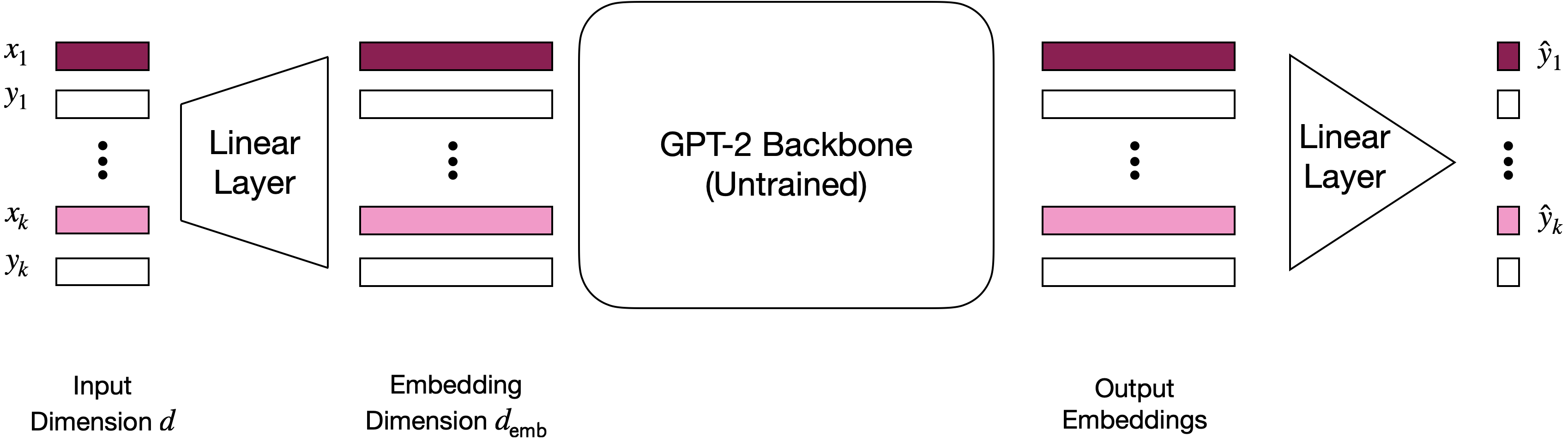}
    \caption{\textbf{\alg reasoning module architecture}. The reasoning module takes as input sequences of $(x,y)$ pairs of dimension $d$. A linear layer is used to project $d$ to the hidden dimension size of the GPT-2 backbone. Finally, a linear layer is applied to the outputs of the backbone to generate predictions for each $x_k$ in the input sequence.}
    \label{fig:arch}
\end{figure}

\paragraph{Architecture details.} We use the standard GPT-2 architecture~\citep{radford2018improving} for training our reasoning module. We set the embedding size to 256, number of decoder layers to 12, and number of heads to 8 for a total of 22 million parameters. Since the GPT-2 backbone outputs a sequence of embeddings, we additionally add a linear layer in the end to convert the output to scalar values (see Figure~\ref{fig:arch}). Additionally, the binary labels $\y$ are encoded as a one-hot vector to match the input dimension $d$ of the corresponding covariates $\x$.

\begin{figure}[t!]
    \centering
    \includegraphics[width=0.5\textwidth]{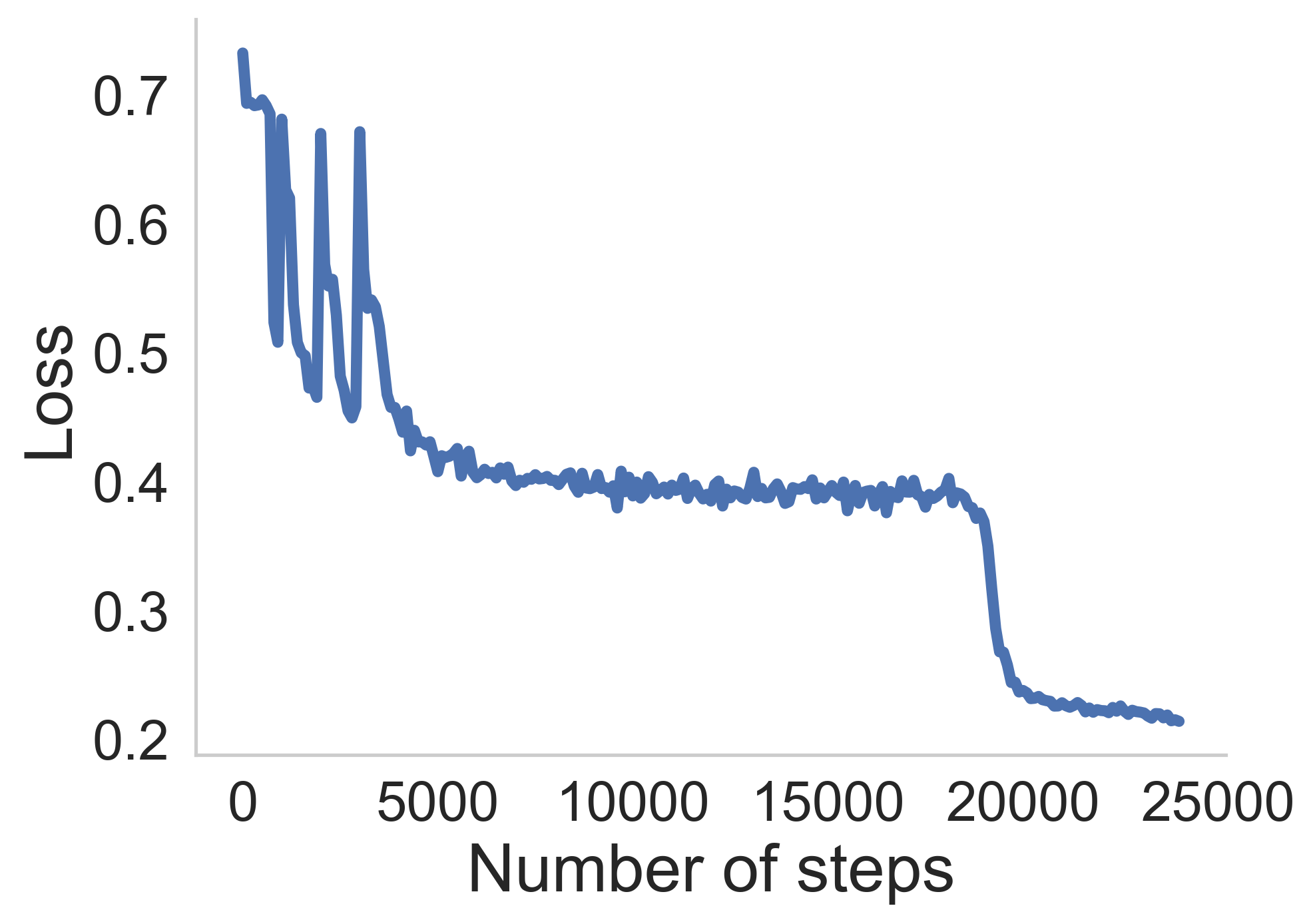}
    \caption{\textbf{Training loss vs. number of steps.} The plot shows the variation in training loss as a function of the number of steps of gradient descent for \alg's reasoning module.}
    \label{fig:train-curve}
\end{figure}

\paragraph{Training procedure.} We trained \alg's reasoning module with a context length of 258 (allowing for up to 256 in-context examples). The batch size was set to 64, learning rate to 0.0001 and the model was trained for a total of 24000 epochs. Each batch of training data consists of sampling a sequence of 258 examples using eq.~\eqref{eq:data-gen-syn}. In addition to these hyperparameters, we used a curriculum on the input dimensions and on the number of examples in the sequence to train our module---the input dimensions started from a value of 4 and were incremented by 4 every 1000 epochs while the number of examples started from 18 and were incremented by 30 every 1000 epochs.

\paragraph{Combining reasoning module with base LLM.} We trained the reasoning module with input dimension set to 16. However, most base models produce representations which are much higher dimensional (ranging from 784 to 2048). In order to reduce the dimensionality of these representations, we perform PCA on the output embeddings of the base model, learning the components using only the training points available for that specific task. The test examples are then projected onto these principal components to produce 16 dimensional input representations.

\begin{figure*}[t!]
    \centering
    \begin{subfigure}[b]{0.31\textwidth}
        \centering
\includegraphics[width=1\textwidth]{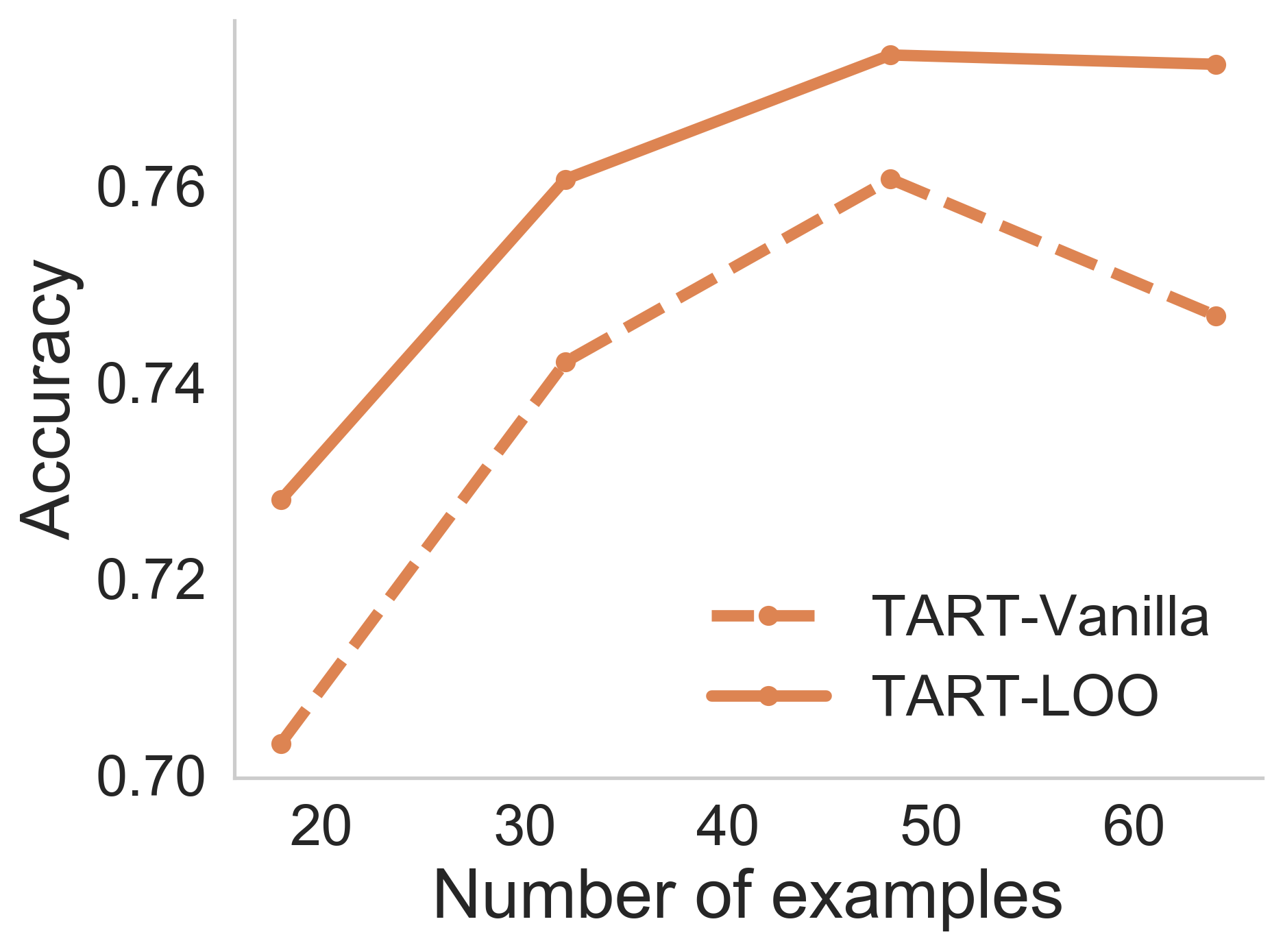}
    \subcaption{\gptsmall}
    \end{subfigure}
    \begin{subfigure}[b]{0.31\textwidth}
        \centering
\includegraphics[width=1\textwidth]{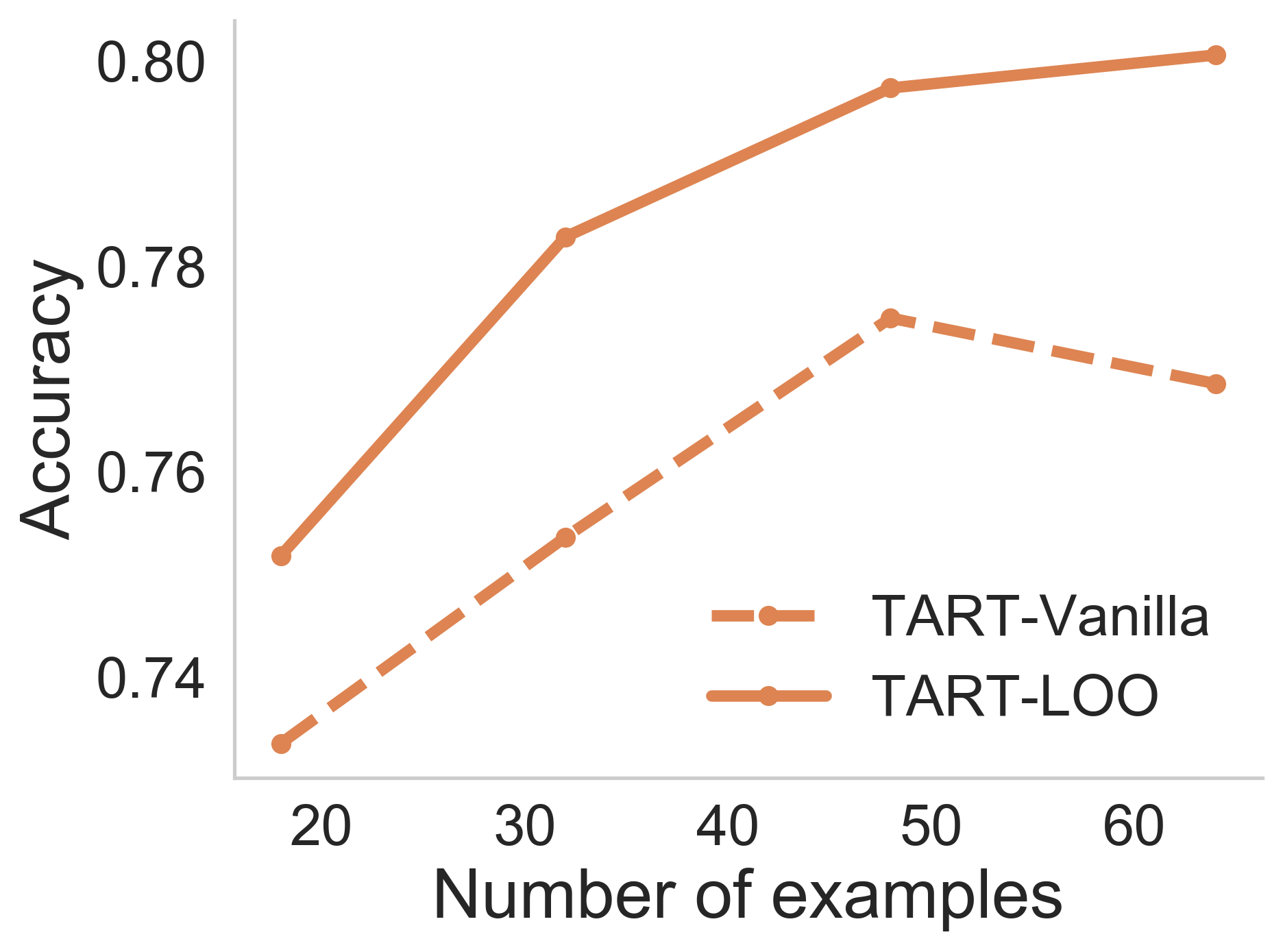}
    \subcaption{\pythiasmall}
    \end{subfigure}
    \begin{subfigure}[b]{0.31\textwidth}
        \centering
\includegraphics[width=1\textwidth]{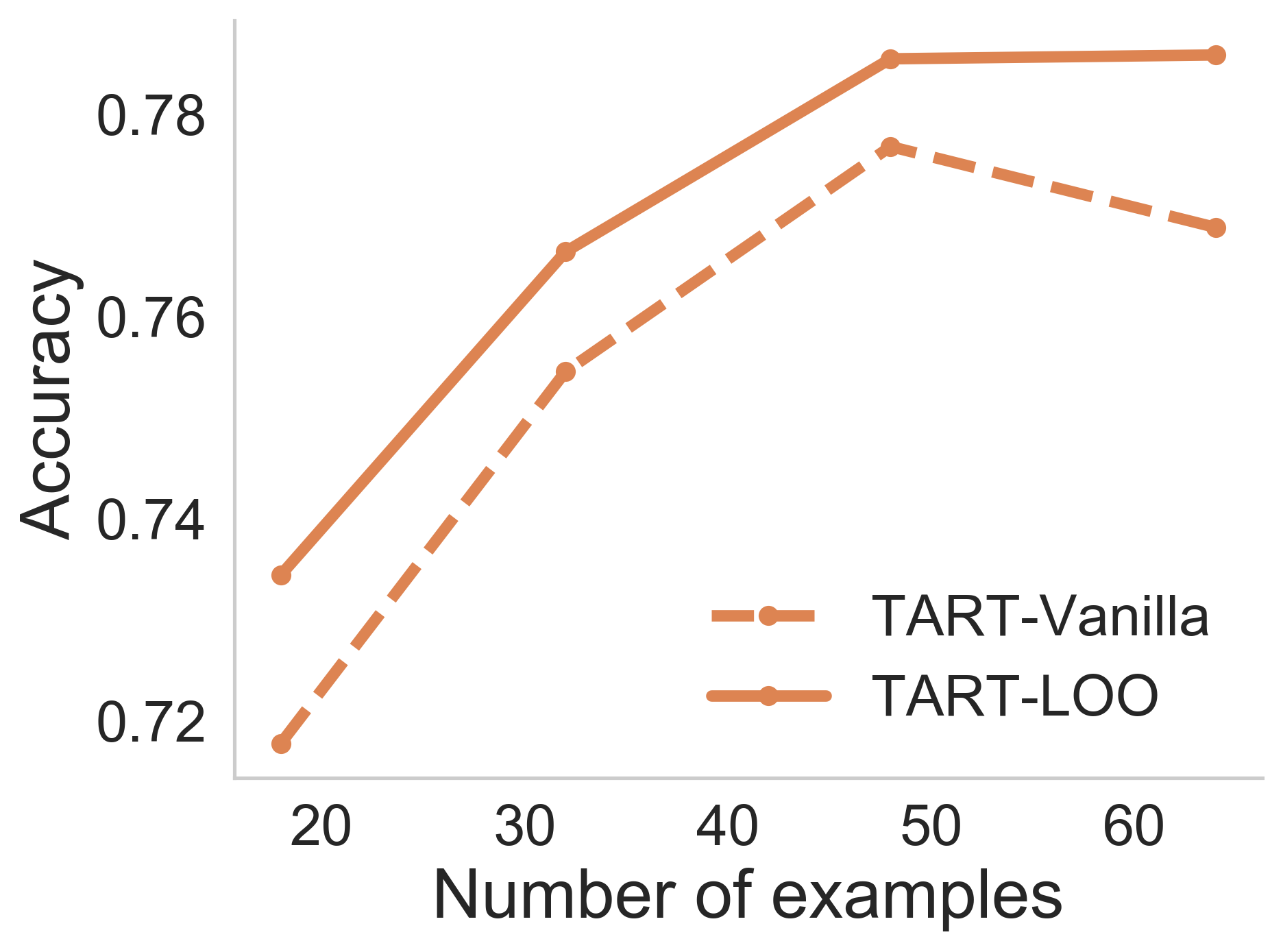}
    \subcaption{\bloomsmall}
    \end{subfigure}
    \caption{\textbf{LOO embeddings vs. Vanilla Embeddings}. Comparison of \alg performance when using LOO embeddings and vanilla embeddings. Vanilla embeddings see a performance collapse, but LOO embeddings do not.}
    \label{fig:vanilla-loo}
\end{figure*}

\begin{figure}[t!]
    \centering
    \centering
    \begin{subfigure}[b]{1\textwidth}
        \centering
\includegraphics[width=0.90\textwidth]{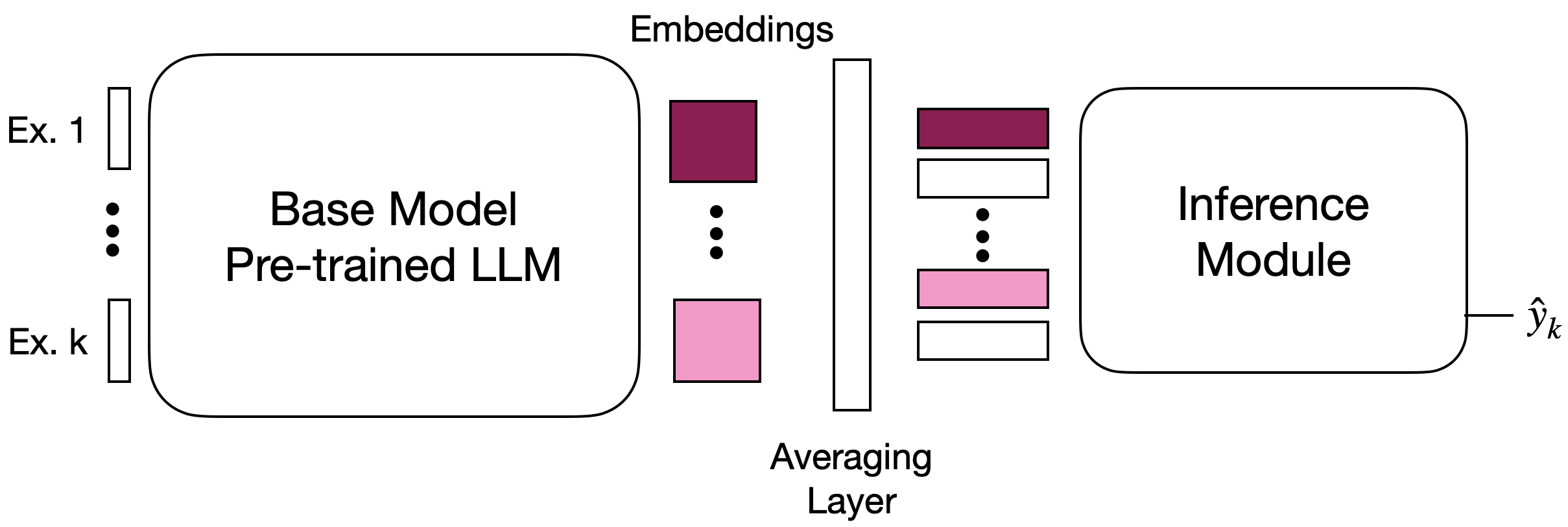}
    \subcaption{Vanilla embeddings}
    \end{subfigure}
    \begin{subfigure}[b]{1\textwidth}
        \centering
\includegraphics[width=0.80\textwidth]{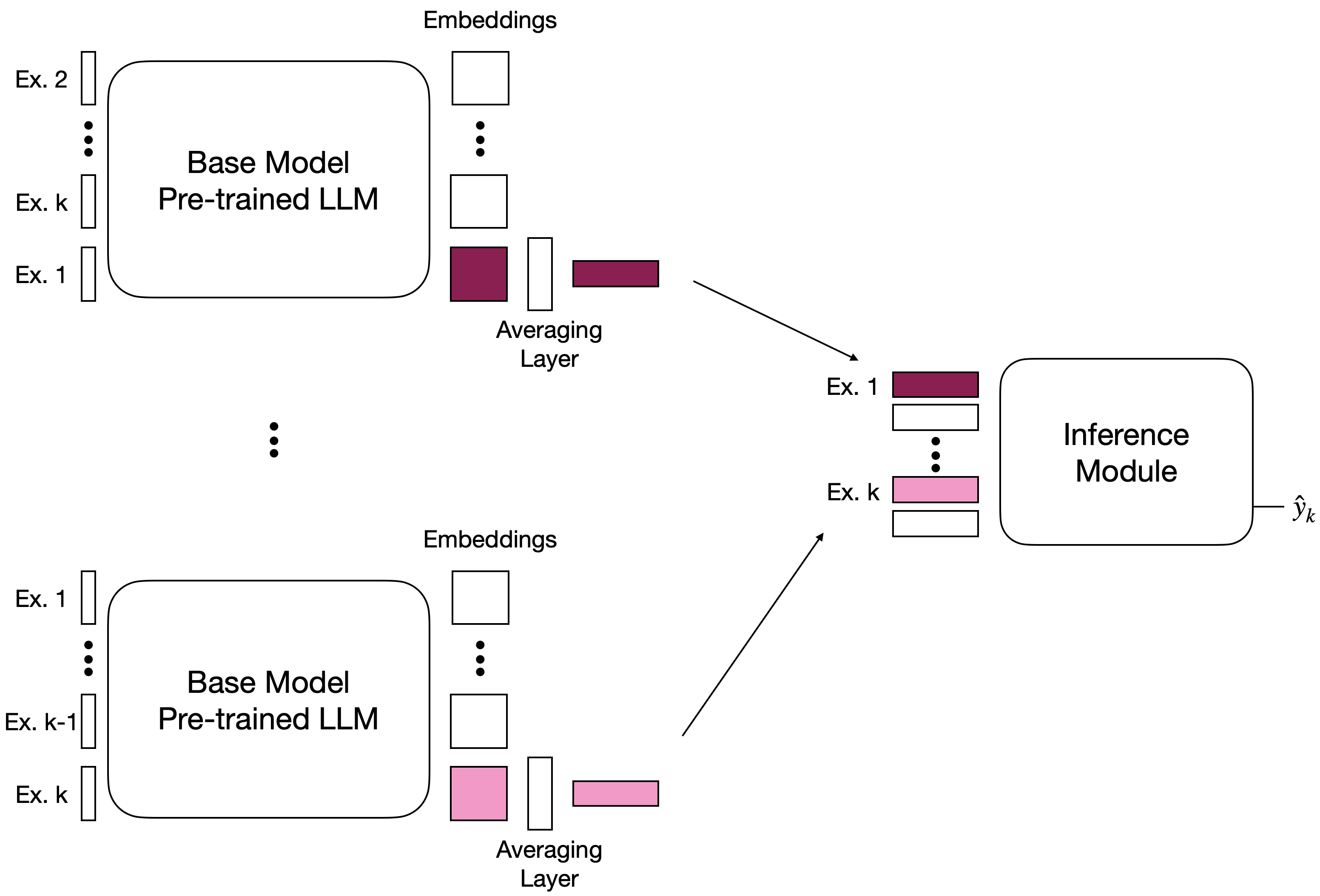}
    \subcaption{LOO embeddings}
    \end{subfigure}
    \caption{\textbf{\alg Embedding Protocols}. (a) For the vanilla embeddings, the test example is appended to the training set and the sequence is passed to the base model. The representation for each train example in this sequence is taken as the average embedding across all its tokens. (b) For the LOO embeddings, we generate embeddings for each train example separately by placing all the other train examples before it in the prompt and averaging the embeddings over the final example's tokens.}
    \label{fig:embed}
\end{figure}

\subsection{Choice of representations}\label{app:rep-emb}
As discussed in Section~\ref{sec:rep-emb} there are two possible options for forming the representations, the vanilla embeddings and the leave-one-out (LOO) embeddings. Figure~\ref{fig:embed} shows the schematic differences between the two style of embedding. In Figure~\ref{fig:vanilla-loo}, we plot the average accuracies across different datasets for both vanilla and LOO embeddings, observing that the LOO embeddings consistently perform better across the different model families. 

\subsection{Proof of Theorem~\ref{thm:gen}}\label{app:theory}
In this section, we provide a formal statement of Theorem~\ref{thm:gen} from Section~\ref{sec:theory}. Our theorem quantifies the expected error of the Transformer, trained on synthetic data, on natural language tasks in terms of the change in the two input distributions.

We begin by introducing some notation. We denote the class of Transformer family by 
\begin{equation}
\T_\PT \defn \{ \Tr_\paramT : \real^{(2\kt+1)\times \dt} \mapsto \real \quad | \quad \paramT \in \PT\}\;,
\end{equation}
where $\kt$ represents the maximum number of in-context examples the Transformer can support, $\dt$ represents the input dimensions, and $\PT$ represents the corresponding parameter class over which the Transformer family is defined. 

Observe that the Transformer family $\T_\PT$ takes as input a sequence of $\kt$ train examples, each corresponding to two tokens of hidden dimension $\dt$: a covariate $\x \in \real^d$ and a binary label, encoded as a one-hot vector in $\dt$ dimension. This sequence of train examples is followed by a test example, for which we only have the features $\x_{k+1}$. 

Given this background, let $\Psyn$ denote the synthetic distribution over sequences $\{(\x_1, \y_1), \ldots, (\x_k, \y_x), (\x_{k+1})\}$. Similarly, let $\Pnl$ denote the corresponding distribution over sequences derived from natural language tasks where $\x_i$ denotes the LLM embeddings of the example. Recall from Section~\ref{sec:training}, the synthetic training distribution $\Psyn$ is given by
\begin{equation}
\text{Sequence } \seq_t: \param_t \sim \normal(0, I_\dx), \quad \x_{i,t}\sim \normal(0, I_\dx), \quad \y_{i,t} \sim \sigmoid(\weight\inner{\x_{i,t}}{\param_t}) \qquad \text{for }i \in [\klr]\;,
\end{equation}
for each training point $(\x_{i,t}, \y_{i,t})$. The test point is also sampled similarly from an independent standard normal distribution. Let $\ell:\real\times \real \mapsto \real$ be the loss function used for evaluating the performance of the reasoning module. Further, let use denote the expected loss under a distribution~$\Pd$
\begin{equation}
    \err_{\Pd}(\Tr) \defn \En_{(s,\y) \sim \Pd} [\ell(\Tr(s), \y)]\;,
\end{equation}
and the corresponding empirical distribution over samples $\samp$ by $\hat{\err}_\Pd$, where the dependence on the samples is implicit. Given these samples, we denote the empirical risk minimizer 
\begin{equation}
    \Tres  = \arg\min_{\Tr \in \T_\PT} \frac{1}{|\samp|} \sum_{s \in \samp} \ell(\Tr(s), \y)\;.
\end{equation}
In addition to these notation, we make the following Lipschitz assumption on the the loss function $\loss$ and the Transformer model $\Tr$.
\begin{assumption}\label{ass:lip-loss}[Lipschitz loss.] For any two output labels $y, y'$, the loss function $\ell$ is Lipschitz with constant $c_1$, that is,
\begin{equation}
    |\ell(Tr(s), \y) - \ell(Tr(s), \y')| \leq c_1 |\y - \y'|\;.
\end{equation}
\end{assumption}

\begin{assumption}\label{ass:lip-model}[Lipschitz models.] For any two input sequences $s, s'$, each any model $\Tr \in \T_\PT$ is Lipschitz with constant $L$, that is,
\begin{equation}
    |\Tr(s) - \Tr(s')| \leq L\|s-s'\|.
\end{equation}
\end{assumption}

Given this setup, we are now ready to state a formal version of Theorem~\ref{thm:gen}.  

\begin{theorem}[Formal version of Theorem~\ref{thm:gen}]\label{thm:gen-formal}
Let $\Tres \in \T_\PT$ denote the trained reasoning module on set $S$ of synthetic logistic regression tasks with $\nsyn$ sequences sampled from distribution $\Psyn$ in eq.~\eqref{eq:data-gen-syn}. Let the loss function $\ell$ satisfy Assumption~\ref{ass:lip-loss} and the model class $\T_\PT$ satisfy Assumption~\ref{ass:lip-model}. Then, with probability at least $1-\delta$, we have
{\small
\begin{equation}
\err_{\Pnl}(\Tres) \leq  c_1\max(1,L)\cdot  W_1(\Pnl, \Psyn) 
+ c_1\cdot \sqrt{\frac{2\text{VC}(\T_\PT) \ln m}{\nsyn}} + 4\sqrt{\frac{2 \ln(4/\delta)}{\nsyn}}
+ \hat{\err}_{\Psyn}(\Tres)\;,
\end{equation}
}
where $W_1$ denotes the Wasserstein-1 metric and  $\text{VC}(\T_\PT)$ represents the VC dimension of class $\T_\PT$
\end{theorem}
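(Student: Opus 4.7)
} The plan is to decompose the natural language error $\err_{\Pnl}(\Tres)$ into three pieces via two telescoping insertions: first add and subtract the synthetic population error $\err_{\Psyn}(\Tres)$, then add and subtract the synthetic empirical error $\hat{\err}_{\Psyn}(\Tres)$. This yields
\begin{equation}
\err_{\Pnl}(\Tres) = \underbrace{\bigl(\err_{\Pnl}(\Tres) - \err_{\Psyn}(\Tres)\bigr)}_{\text{distribution shift}} + \underbrace{\bigl(\err_{\Psyn}(\Tres) - \hat{\err}_{\Psyn}(\Tres)\bigr)}_{\text{generalization}} + \hat{\err}_{\Psyn}(\Tres)\;,
\end{equation}
so it suffices to bound the first two terms by the claimed quantities.

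\paragraph{Bounding the distribution shift term.} The plan here is to apply Kantorovich--Rubinstein duality, which states that for any $1$-Lipschitz function $f$, $|\En_{\Pnl}[f] - \En_{\Psyn}[f]| \le W_1(\Pnl, \Psyn)$. I would first verify that the map $(s, \y) \mapsto \ell(\Tr(s), \y)$ is Lipschitz in the joint input with constant $c_1 \max(1, L)$: by Assumption~\ref{ass:lip-model}, $\Tr$ is $L$-Lipschitz in $s$, and by Assumption~\ref{ass:lip-loss}, $\ell(\Tr(s), \cdot)$ is $c_1$-Lipschitz in $\y$; combining these (with a triangle inequality over the two coordinates) gives the joint Lipschitz constant. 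Rescaling by the Lipschitz constant and applying duality then produces the $c_1 \max(1, L) \cdot W_1(\Pnl, \Psyn)$ term.

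\paragraph{Bounding the generalization term.} This is a standard uniform-convergence argument over the class $\{(s, \y) \mapsto \ell(\Tr(s), \y) : \Tr \in \T_\PT\}$. I would invoke a VC-dimension-based uniform deviation bound: with probability at least $1 - \delta$, $\sup_{\Tr \in \T_\PT} |\err_{\Psyn}(\Tr) - \hat{\err}_{\Psyn}(\Tr)| \lesssim c_1 \sqrt{\text{VC}(\T_\PT) \ln m / \nsyn} + \sqrt{\ln(1/\delta)/\nsyn}$, where $m = \nsyn$ is the sample size and the $c_1$ factor comes from the range/scaling of $\ell$ via Assumption~\ref{ass:lip-loss}. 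Since $\Tres$ lies in $\T_\PT$, the desired bound on the generalization term follows.

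\paragraph{Main obstacle.} The cleanest but most delicate step is establishing the joint Lipschitz property of the composed loss and invoking Kantorovich--Rubinstein on the product space of $(s, \y)$ with the right metric; one must choose a metric on sequences that is consistent with the one implicit in the statement of $W_1(\Pnl, \Psyn)$, and verify that the Lipschitz constant $c_1 \max(1, L)$ is the right combined factor under that metric. The VC bound itself is essentially plug-and-play given Assumption on $\text{VC}(\T_\PT)$, and the empirical error term appears trivially in the decomposition. Combining the three bounds with a union bound over the failure probability $\delta$ yields the stated inequality.
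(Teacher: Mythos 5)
Your proposal is correct and follows essentially the same route as the paper: the identical three-term decomposition, a Lipschitz-based bound of the distribution-shift term by $c_1\max(1,L)\cdot W_1(\Pnl,\Psyn)$, and a standard VC/uniform-convergence bound for the generalization term. The only (cosmetic) difference is that you invoke the Kantorovich--Rubinstein dual form of $W_1$, whereas the paper works directly with the primal coupling formulation, taking an infimum over joint distributions $\joint$ and applying Assumptions~\ref{ass:lip-loss} and~\ref{ass:lip-model} inside the integral.
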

\begin{proof}
    We begin by decomposing the error $\err_{\Pnl}(\Tres)$ into three components as
    \begin{equation}
        \err_{\Pnl}(\Tres) = \underbrace{\err_{\Pnl}(\Tres) - \err_{\Psyn}(\Tres)}_{\text{(I)}} + \underbrace{\err_{\Psyn}(\Tres) - \hat{\err}_{\Psyn}(\Tres)}_{\text{(II)}} + \hat{\err}_{\Psyn}(\Tres)\;.
    \end{equation}

    We now upper bound each of the terms (I) and (II) separately. 

    \paragraph{Bound on Term (I).} Let $\joint$ denote an arbitrary joint distribution over the distributions $\Psyn$ and $\Pnl$. Then, we can bound the first term as
    \begin{align}
    \err_{\Pnl}(\Tres) - \err_{\Psyn}(\Tres) &= \En_{\Pnl}[\ell(\Tr(s), \y)] - \En_{\Psyn}[\ell(\Tr(s'), \y')]\nonumber\\
    &\stackrel{\1}{=} \En_{\joint}[\ell(\Tr(s), \y) - \ell(\Tr(s'), \y')]\nonumber\\
    &\stackrel{\2}{\leq} \inf_{\joint} \En_{\joint}\left|\ell(\Tr(s), \y) - \ell(\Tr(s'), \y') \right|\;,
    \end{align}
    where $\1$ follows from the independence of the two expectations and $\2$ follows from that $\1$ holds for any arbitrary joint distribution $\joint$. The final bound on this term now follows:
    \begin{align}\label{eq:bnd-t1}
    \inf_{\joint}\En_{\joint}\left|\ell(\Tr(s), \y) - \ell(\Tr(s'), \y') \right| &= \inf_{\joint} \int\left|\ell(\Tr(s), \y) - \ell(\Tr(s), \y') + \ell(\Tr(s), \y') - \ell(\Tr(s'), \y')\right| d\joint\nonumber \\
    &\stackrel{\1}{\leq} c_1 \inf_{\joint} \int|\y-\y'| - \|\Tr(s')-\Tr(s)\| d\joint\nonumber \\
    &\stackrel{\2}{\leq} c_1\max(1, L)\cdot \inf_{\joint} \int|\y-\y'| - \|s'-s\| d\joint\nonumber \\
    &= c_1\max(1,L)\cdot  W_1(\Pnl, \Psyn)\;,
    \end{align}
    where the inequalities $\1$ follows from Assumption~\ref{ass:lip-loss} and $\2$ follows from Assumption~\ref{ass:lip-model}. This completes the bound on Term (I). 
    \newcommand{\rad}{\mathcal{R}}
    \paragraph{Bound on Term (II).} Using a standard generalization bound~\citep[see Theorem 26.5]{shalev2014understanding}, we have with probability at least $1-\delta$
    \begin{align}\label{eq:bnd-term-2}
        \err_{\Psyn}(\Tres) - \hat{\err}_{\Psyn}(\Tres) &\leq \rad(\ell\circ\T_\PT) + 4\sqrt{\frac{2 \ln(4/\delta)}{\nsyn}}\nonumber\\
        &\leq c_1\cdot \rad(\T_\PT) + 4\sqrt{\frac{2 \ln(4/\delta)}{\nsyn}}\nonumber \\
        &\stackrel{\1}{\leq} c_1\cdot \sqrt{\frac{2\text{VC}(\T_\PT) \ln m}{\nsyn}} + 4\sqrt{\frac{2 \ln(4/\delta)}{\nsyn}}
    \end{align}
    where $\rad(\ell\circ\T_\PT)$ denotes the Rademacher complexity of the class $\T_\PT$ composed with the loss function $\ell$ and inequality $\1$ follows from Sauer’s Lemma. 

    Combining the bounds in equations~\eqref{eq:bnd-t1} and~\eqref{eq:bnd-term-2} completes the proof of the theorem.
\end{proof}

\section{Details for experimental evaluation}\label{app:exp}
We describe supplementary experimental details from Section~\ref{sec:exp} as well as additional results for the natural language benchmark evaluations (Section~\ref{app:nl-eval}) and results for other modalities (vision and audio) (Section~\ref{app:modalities}).

\subsection{Experimental setup}\label{app:exp-setup}
We begin by providing dataset statistics and details of the  baselines.

\begin{table}
\centering

\begin{tabular}{cccccc}
\toprule
\multirow{2}{*}{Dataset}  & \multirow{2}{*}{Test size}  & Max char & Max token & Avg. char & Avg. token\\
& &length & length & length & length \\
\midrule
AG-News-0 & 3800 & 732 & 259 & 237.07 & 51.36 \\
AG-News-1 & 3800 & 814 & 213 & 232.01 & 51.46 \\
AG-News-2 & 3800 & 814 & 225 & 236.10 & 52.25 \\
AG-News-3 & 3800 & 892 & 259 & 234.86 & 51.38 \\
Civil Comments & 11576 & 1000 & 634 & 272.72 & 61.73 \\
DBPedia-0 & 10000 & 2081 & 629 & 300.94 & 65.83 \\
DBPedia-1 & 10000 & 2081 & 629 & 298.81 & 66.91 \\
DBPedia-2 & 10000 & 2081 & 883 & 286.85 & 66.53 \\
DBPedia-3 & 10000 & 2081 & 629 & 275.81 & 63.88 \\
IMDB & 25000 & 12988 & 2972 & 1293.79 & 292.82 \\
Rotten Tomatoes & 1066 & 261 & 63 & 115.52 & 25.36 \\
SMS Spam & 4181 & 612 & 258 & 81.46 & 23.76 \\
SST & 2210 & 256 & 60 & 102.40 & 22.34 \\
Youtube & 250 & 1125 & 292 & 112.50 & 31.84 \\
\bottomrule
\end{tabular}
\vspace{2mm}
\caption{Dataset (test) statistics for all NLP datasets.}
\label{tab:dataset-stats-tst}
\end{table}

\begin{table}
\centering

\begin{tabular}{cccccc}
\toprule
\multirow{2}{*}{Dataset}    & Max char & Max token & Avg. char & Avg. token &\\
&length & length & length & length \\
\midrule
AG-News-0  & 701 & 256 & 236.15 & 51.53 \\
AG-News-1  & 749 & 180 & 232.48 & 51.61 \\
AG-News-2  & 735 & 256 & 241.70 & 53.91 \\
AG-News-3  & 1002 & 258 & 241.21 & 53.21 \\
Civil Comments  & 1000 & 347 & 280.97 & 63.44 \\
DBPedia-0  & 707 & 207 & 300.48 & 65.79 \\
DBPedia-1  & 1023 & 280 & 299.89 & 66.57 \\
DBPedia-2  & 628 & 203 & 288.21 & 66.22 \\
DBPedia-3  & 758 & 203 & 279.45 & 64.24 \\
IMDB  & 7068 & 1630 & 1284.20 & 290.00 \\
Rotten Tomatoes  & 260 & 62 & 112.46 & 24.82 \\
SMS Spam  & 911 & 217 & 106.90 & 31.87 \\
SST  & 248 & 56 & 101.97 & 22.34 \\
Youtube  & 1089 & 767 & 90.12 & 29.98 \\
\bottomrule
\end{tabular}
\vspace{2mm}
\caption{Dataset (train) statistics for all NLP datasets.}
\label{tab:dataset-stats-tr}
\end{table}

\subsubsection{Dataset construction and statistics}
Table~\ref{tab:dataset-stats-tr} and~\ref{tab:dataset-stats-tst} provides a detailed breakdown of dataset statistics. For each dataset, we use the original test sets with the exception of Civil Comments~\citep{DBLP:journals/corr/abs-1903-04561}, AG News~\citep{zhang2015character} and DBPedia~\citep{zhang2015character}. 
For the multi-class datasets---AG News~\citep{zhang2015character} and DBPedia~\citep{zhang2015character} --- we construct 4 binary classification tasks for each datasets. More concretely, AG News labels news articles into four categories: World, Sports, Business, and Science/Technology. We create a separate binary classification task for each category, sampling negatives from the remaining classes. DBPedia is a 14-way ontology classification dataset. We create 4 separate binary classification tasks for the educational institution, company, artist, and athlete ontologies, sampling negatives from the remaining classes. For the train set, we sample a class-balanced set of 64 examples from the original dataset. For each dataset, we sample 5 separate training sets, using 5 different random seeds. In evaluations, we evaluate \alg and the baseline methods across each of these 5 different training sets. 

\begin{figure*}[t!]
    \centering
    \begin{subfigure}[b]{0.31\textwidth}
        \centering
\includegraphics[width=1\textwidth]{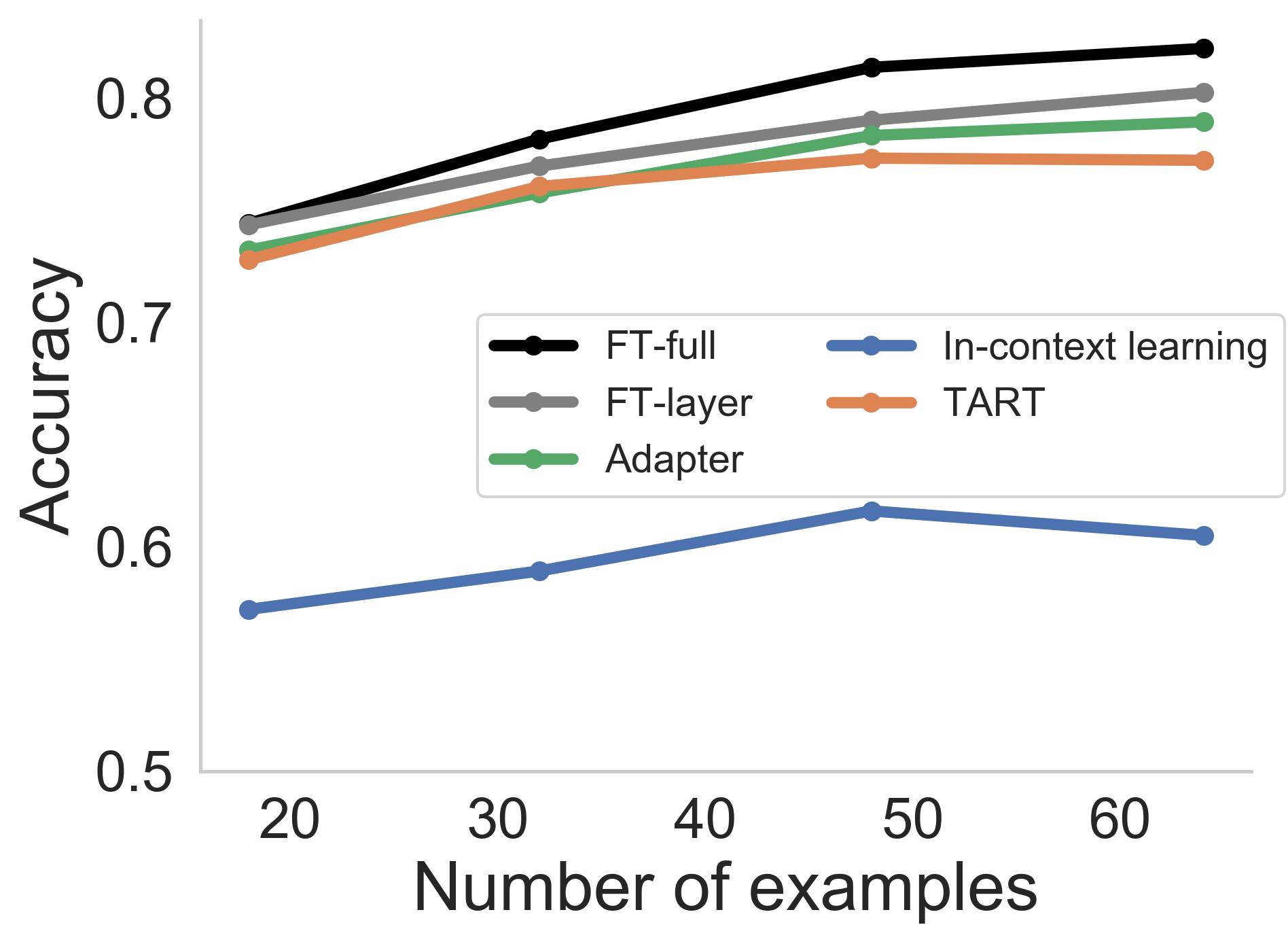}
    \subcaption{\gptsmall}
    \end{subfigure}
    \begin{subfigure}[b]{0.31\textwidth}
        \centering
\includegraphics[width=1\textwidth]{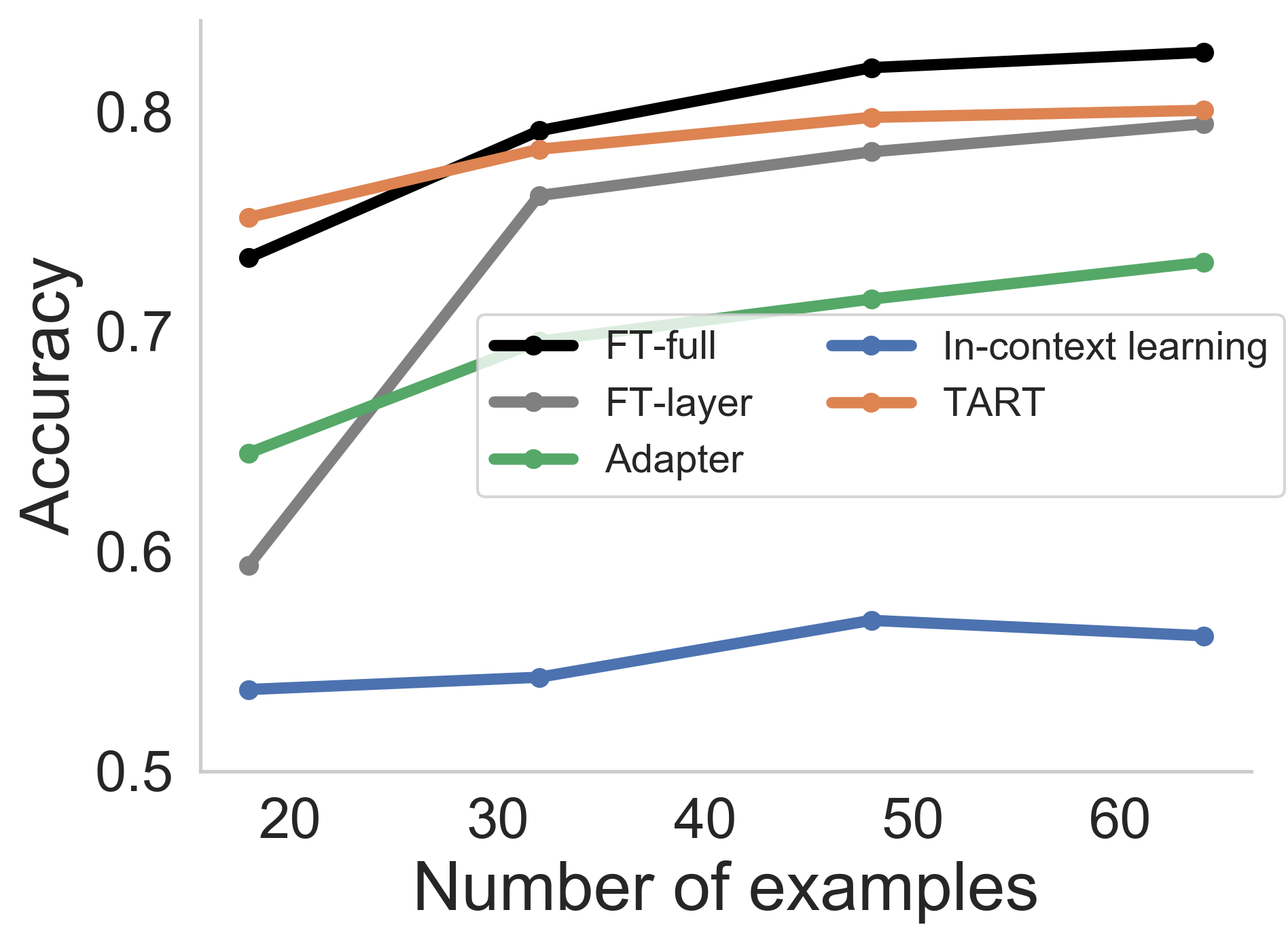}
    \subcaption{\pythiasmall}
    \end{subfigure}
    \begin{subfigure}[b]{0.31\textwidth}
        \centering
\includegraphics[width=1\textwidth]{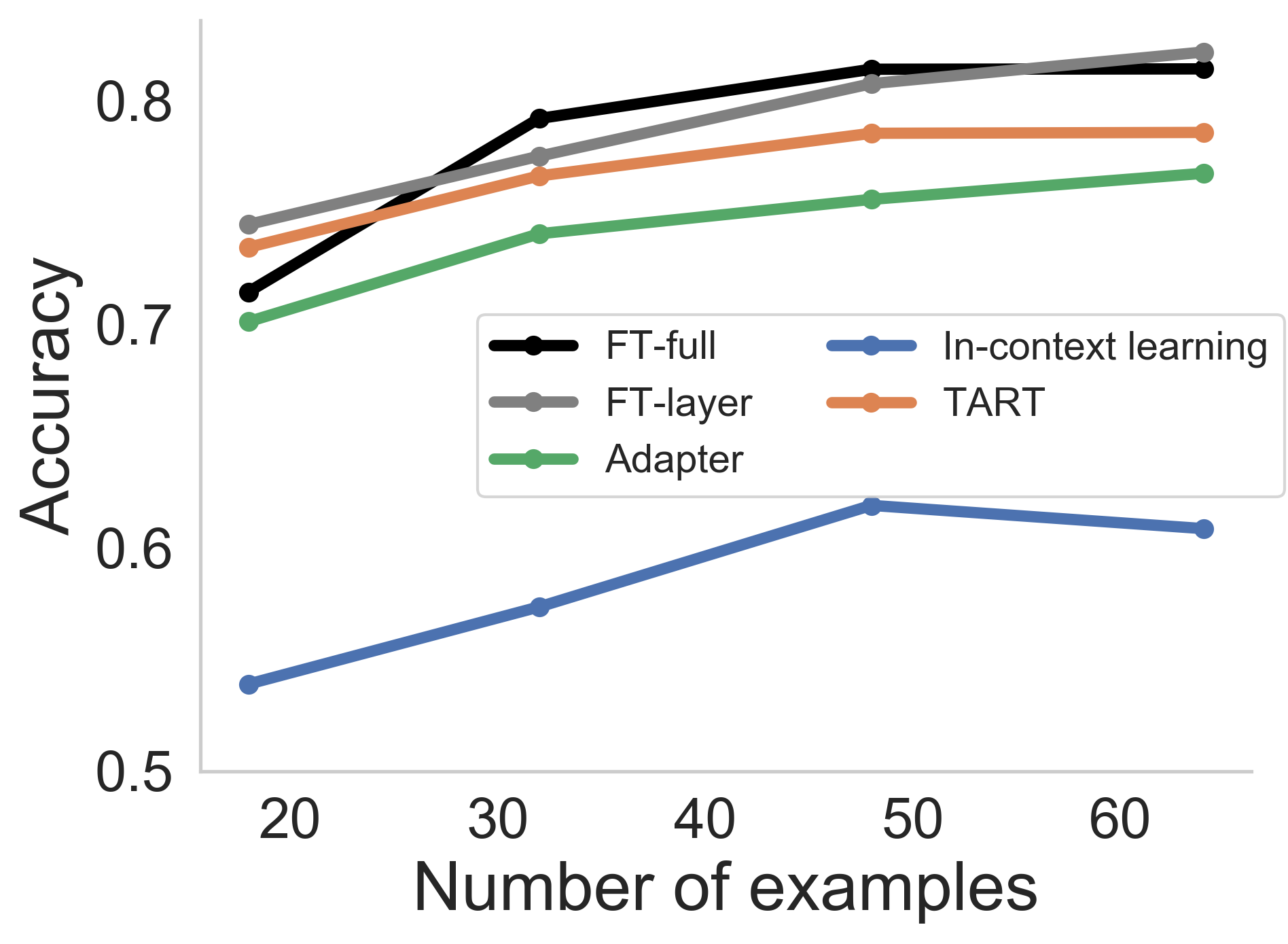}
    \subcaption{\bloomsmall}
    \end{subfigure}
    \caption{\textbf{Comparison of all methods}. \alg significantly improves base in-context learning performance and is competitive with full-finetuning across model families.}
    \label{fig:macro-models}
\end{figure*}

\begin{figure*}[t!]
    \centering
    \begin{subfigure}[b]{1\textwidth}
        \centering
\includegraphics[width=1\textwidth]{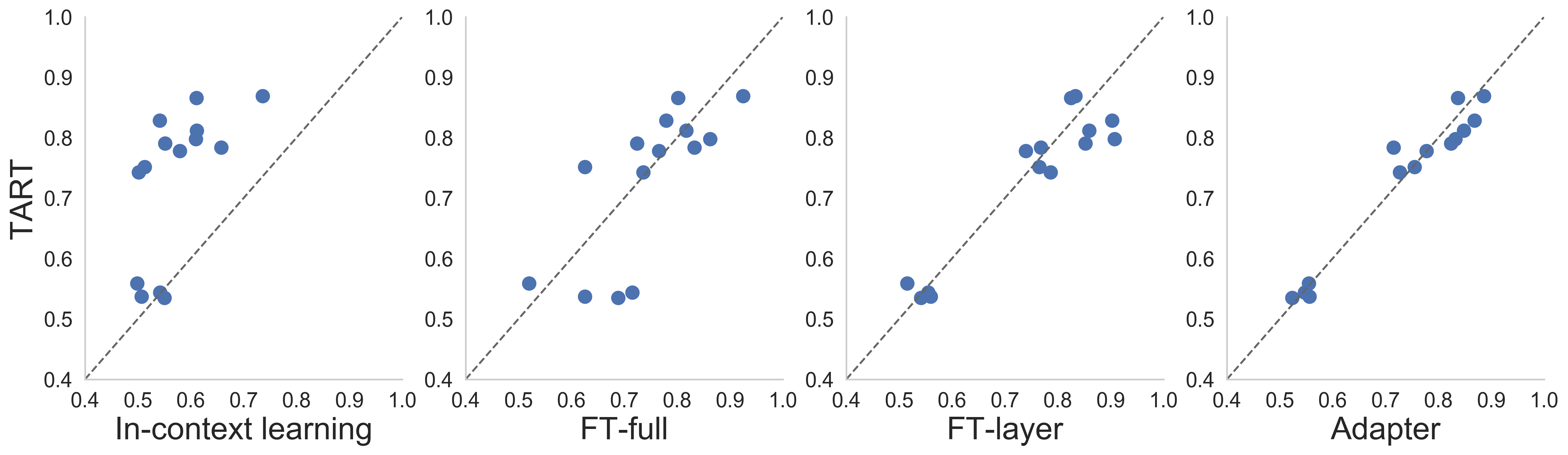}
    \subcaption{Number of examples = 18}
    \vspace{3mm}
    \end{subfigure}
    \begin{subfigure}[b]{1\textwidth}
        \centering
\includegraphics[width=1\textwidth]{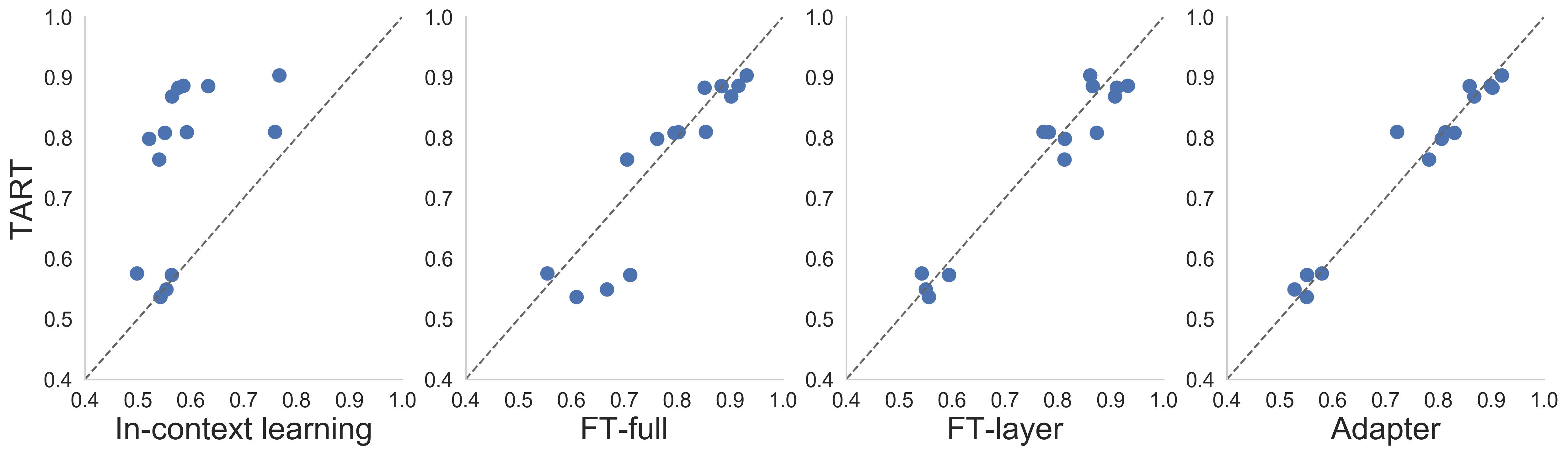}
    \subcaption{Number of examples = 32}
    \vspace{3mm}
    \end{subfigure}
    \begin{subfigure}[b]{1\textwidth}
        \centering
\includegraphics[width=1\textwidth]{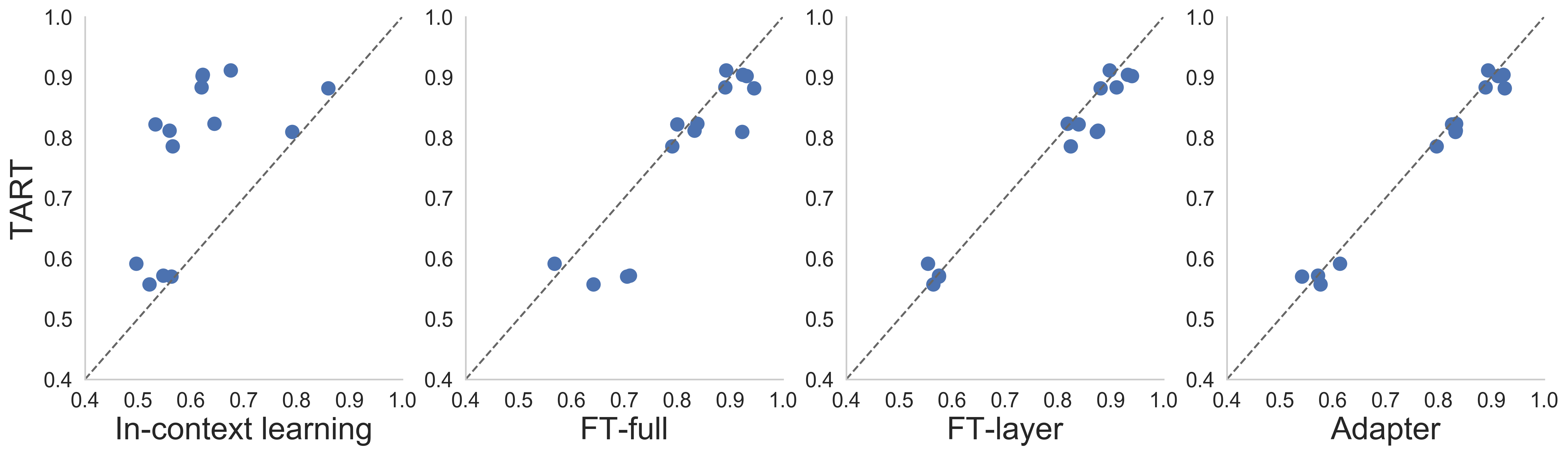}
    \subcaption{Number of examples = 48}
    \vspace{3mm}
    \end{subfigure}
     \begin{subfigure}[b]{1\textwidth}
        \centering
\includegraphics[width=1\textwidth]{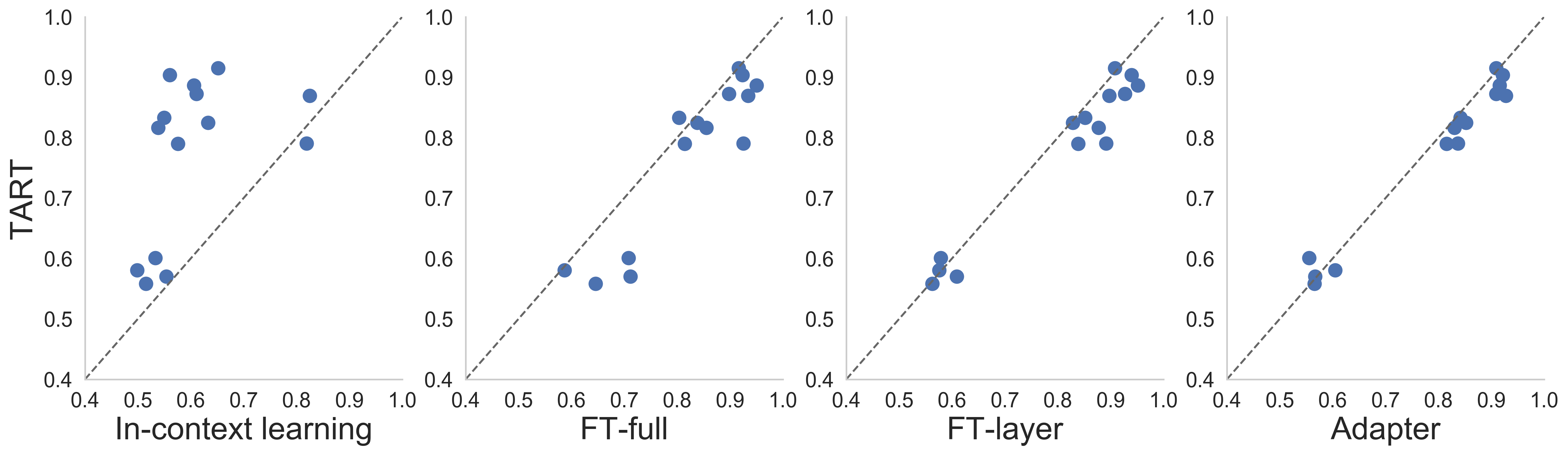}
    \subcaption{Number of examples = 64}
    \end{subfigure}
    \caption{\textbf{Comparison of \alg and task-adaptation approaches (\gptsmall)}. We see that for \gptsmall, \alg outperforms in-context learning and is competitive with full fine-tuning and adapters across all $k$.}
    \label{fig:macro-gpt}
\end{figure*}

\begin{figure*}[t!]
    \centering
    \begin{subfigure}[b]{1\textwidth}
        \centering
\includegraphics[width=1\textwidth]{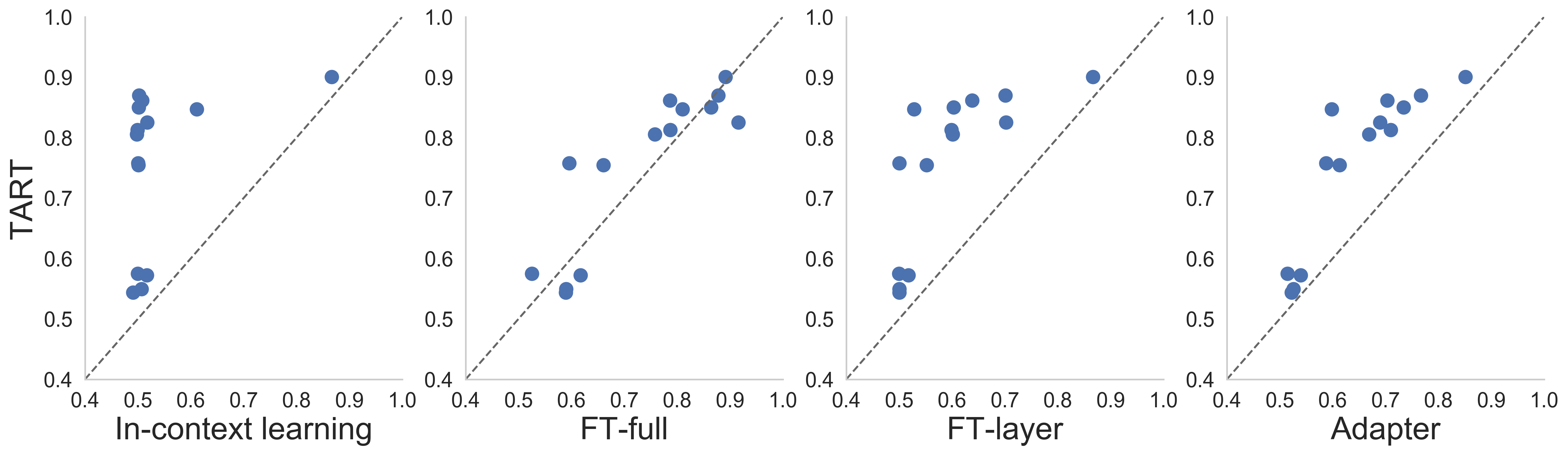}
    \subcaption{Number of examples = 18}
    \vspace{3mm}
    \end{subfigure}
    \begin{subfigure}[b]{1\textwidth}
        \centering
\includegraphics[width=1\textwidth]{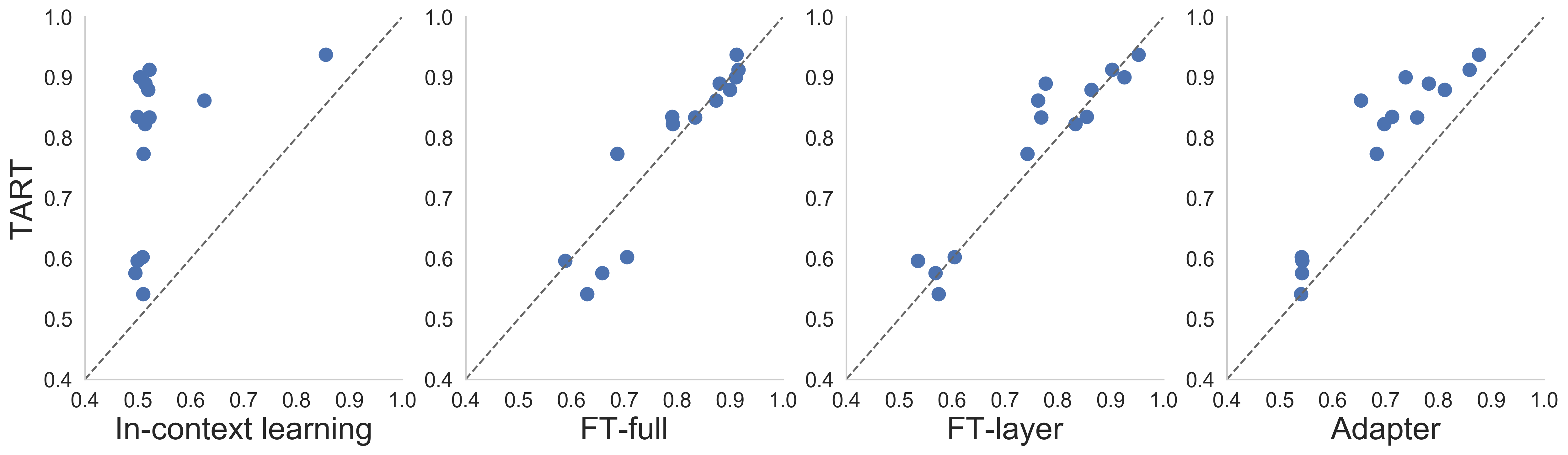}
    \subcaption{Number of examples = 32}
    \vspace{3mm}
    \end{subfigure}
    \begin{subfigure}[b]{1\textwidth}
        \centering
\includegraphics[width=1\textwidth]{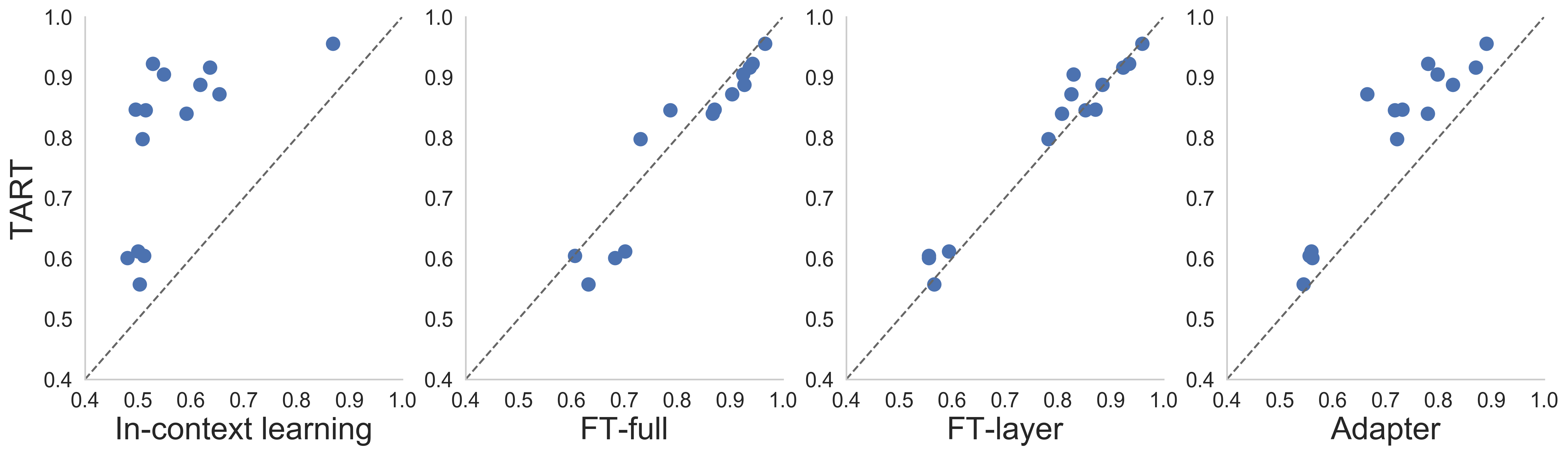}
    \subcaption{Number of examples = 48}
    \vspace{3mm}
    \end{subfigure}
     \begin{subfigure}[b]{1\textwidth}
        \centering
\includegraphics[width=1\textwidth]{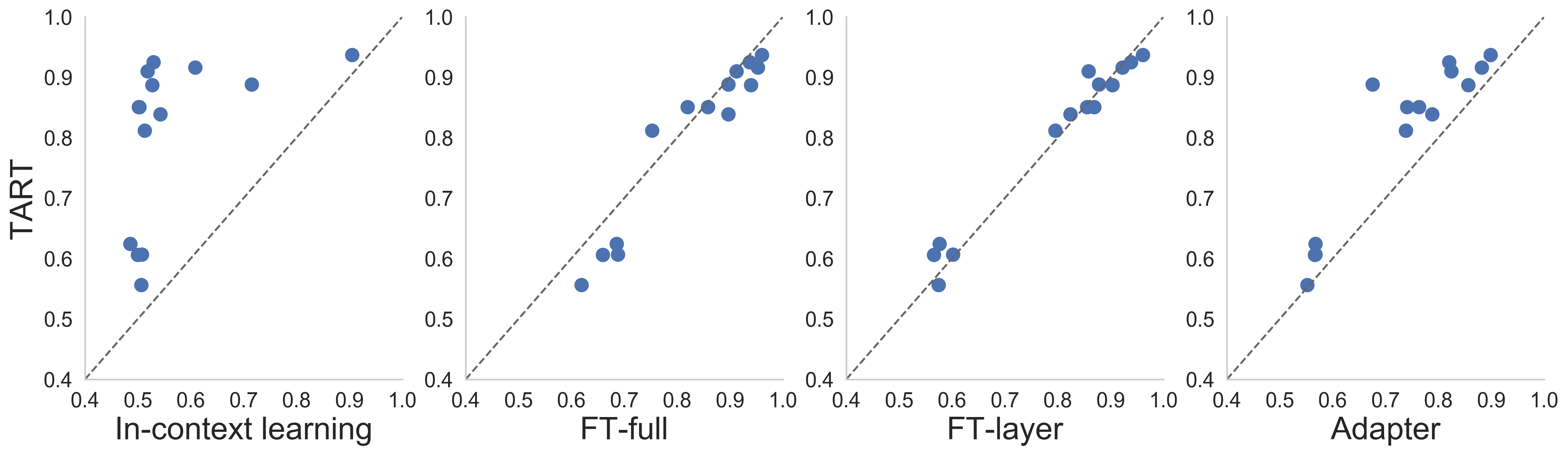}
    \subcaption{Number of examples = 64}
    \end{subfigure}
    \caption{\textbf{Comparison of \alg and task-adaptation approaches (\pythiasmall)}. We see that for \pythiasmall, \alg outperforms in-context learning and adapters and is competitive with full fine-tuning across all $k$.}
    \label{fig:macro-pythia}
\end{figure*}

\begin{figure*}[t!]
    \centering
    \begin{subfigure}[b]{1\textwidth}
        \centering
\includegraphics[width=1\textwidth]{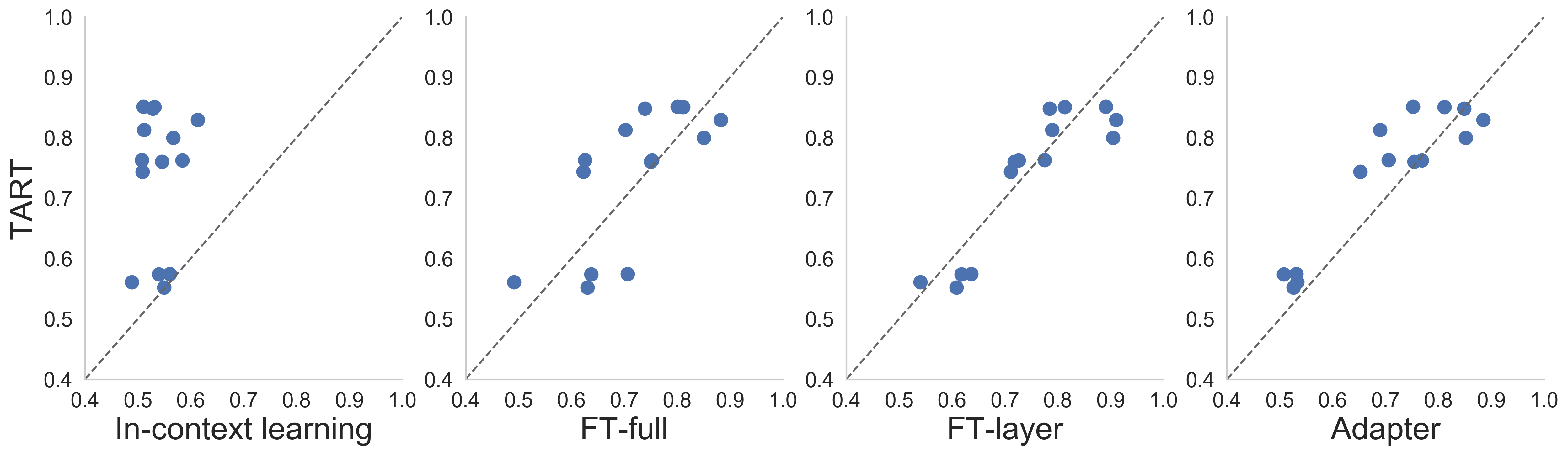}
    \subcaption{Number of examples = 18}
    \vspace{3mm}
    \end{subfigure}
    \begin{subfigure}[b]{1\textwidth}
        \centering
\includegraphics[width=1\textwidth]{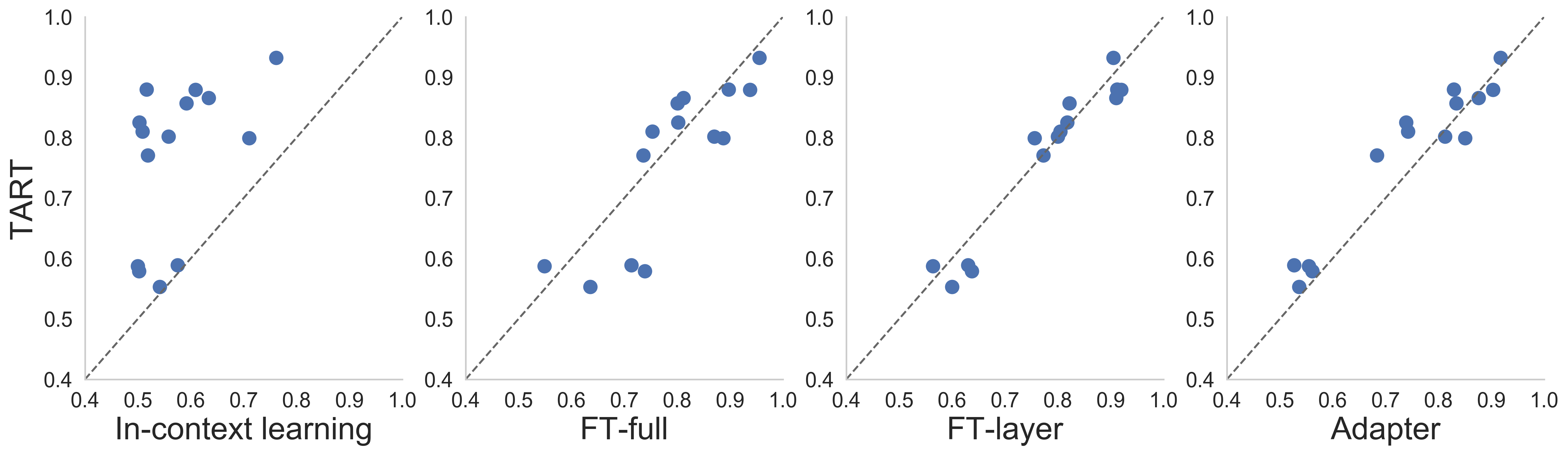}
    \subcaption{Number of examples = 32}
    \vspace{3mm}
    \end{subfigure}
    \begin{subfigure}[b]{1\textwidth}
        \centering
\includegraphics[width=1\textwidth]{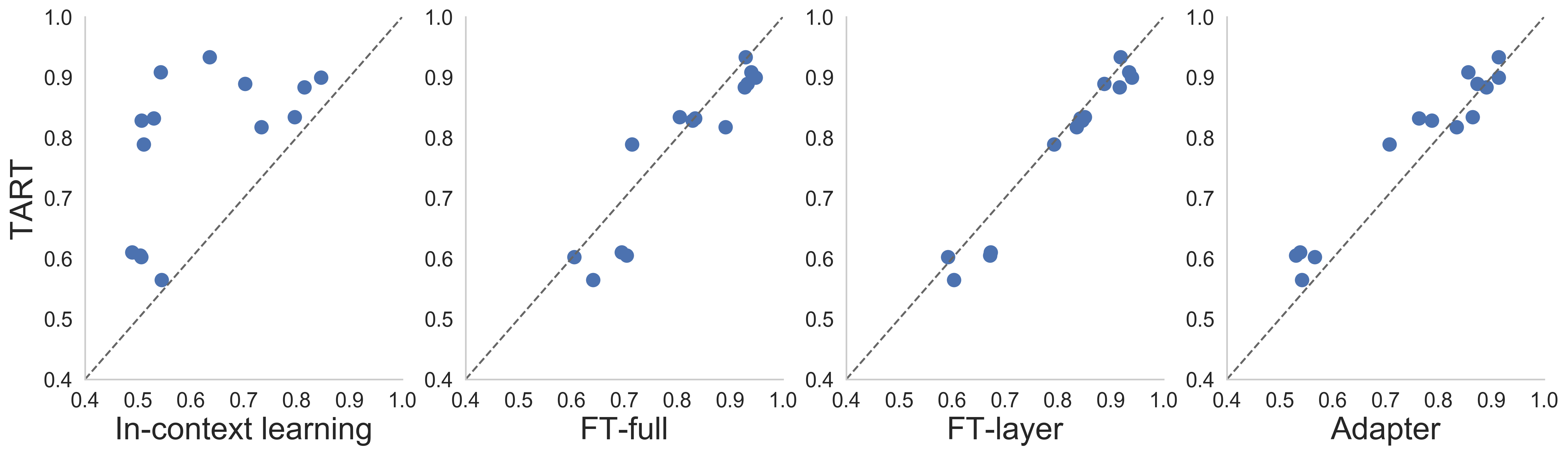}
    \subcaption{Number of examples = 48}
    \vspace{3mm}
    \end{subfigure}
     \begin{subfigure}[b]{1\textwidth}
        \centering
\includegraphics[width=1\textwidth]{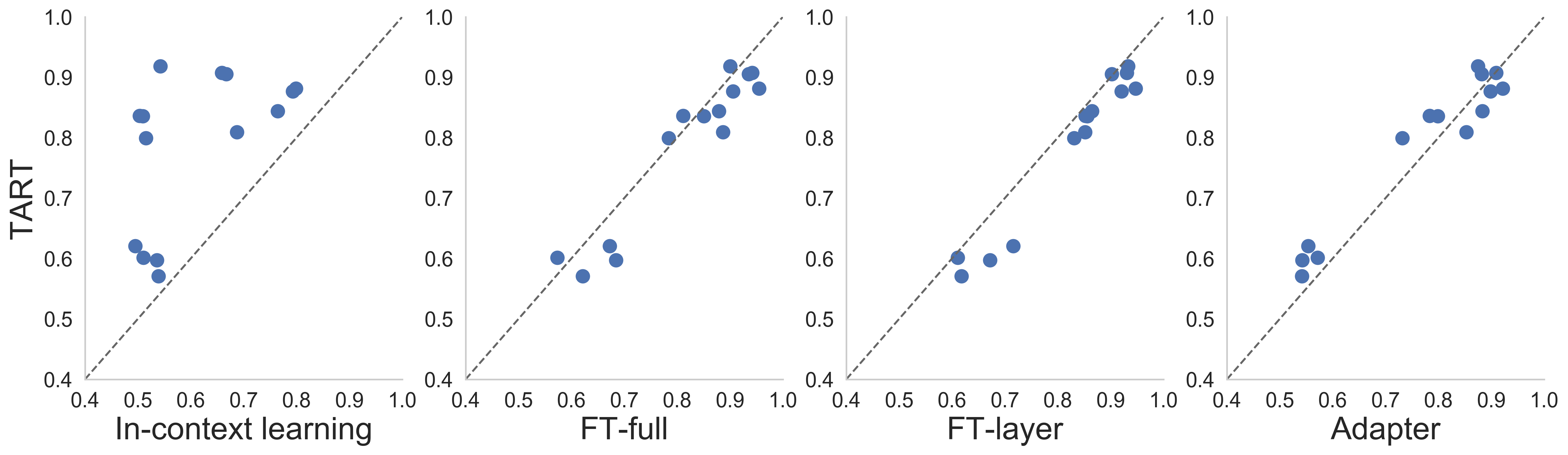}
    \subcaption{Number of examples = 64}
    \end{subfigure}
    \caption{\textbf{Comparison of \alg and task-adaptation approaches (\bloomsmall)}.  We see that for \bloomsmall, \alg outperforms in-context learning and adapters and is competitive with full fine-tuning across all $k$.}
    \label{fig:macro-bloom}
\end{figure*}

\newpage
\subsubsection{Baseline methods}\label{app:ft-arch}
For each dataset, we compare \alg to 4 baseline task-adaptation methods: 1) in-context learning, 2) full fine-tuning, 3) last layer fine-tuning,  and 4) adapters.  The last layer fine-tuning and the adapters are trained as follows:
\begin{itemize}
    \item Last layer fine-tuning: Freeze all layers of transformer but the final transformer block and the language modeling head.
    \item Adapter: Combine a frozen LLM base transformer model with a trainable adapter head---an MLP composed of a single linear layer followed by non-linearity.
\end{itemize}

\paragraph{Hyperparameter search.} For each baseline, we perform an extensive hyperparameter search over number of epochs and learning rate for each dataset in order to optimize performance. We search over a range of learning rates (1e-3, 1e-4, 3e-5, 1e-5, 8e-6), and range of epochs (5, 10, 15, 20, 50). For all models < 1B parameters, we use a batch size of 1. For all models > 1B parameters, we use a batch size of 8. We use these same batch sizes at evaluation time. We perform our hyperparameter searches with a fixed number of train samples (64). We run our hyperparameter searches over 3 random seeds.

\subsection{NL benchmarks}\label{app:nl-eval}
In this section, we provide additional results deferred from Section~\ref{sec:exp} on the NLP benchmark evaluations, RAFT evaluations and demonstration of \alg's data-scalability. 

\subsubsection{Performance on benchmark datasets}
Figure~\ref{fig:macro-models} shows the performance of the baseline methods with \alg averaged across the suite of 14 datasets. \alg, while being task-agnostic, shows similar performance quality to task-specific approaches across the different model families, and consistently outperforms in-context learning. 

Figures~\ref{fig:macro-gpt},~\ref{fig:macro-pythia}, and~\ref{fig:macro-bloom} show the scatter plots of the accuracies of \alg with the baseline methods across datasets and different values of in-context examples $k$. An interesting observation is that as the number of examples $k$ increases from $18$ to $64$, the performance of fine-tuning improves at a better rate than that of \alg.

\subsubsection{Real-world Annotated Few-shot Tasks (RAFT) evaluation} 
For our evaluations on the RAFT benchmark~\citep{alexraft}, we follow the protocol (same train and test sets) used in HELM benchmark. The HELM benchmark~\citep{liang2022holistic} contains the evaluation results for many open and closed models enabling us to accurately compare the performance of \alg with other models. We evaluate \alg on all RAFT binary classification datasets (twitter-complaints, neurips-impact-statement-risks, overulling, ade-corpusv2, tweet-eval-hate, terms-of-service, tai-safety-research) with the exception of systematic-review-inclusion which contains zero positive samples in the train set. \alg requires at least one example of each class in the training set. Table~\ref{tab:raft} contains a detailed performance comparison of \alg with respect to other models. \alg when combined with \gptsmall is able to outperform \bloomhuge  and is competitive with \opthuge and \gpt, all of which have 1000x more parameters.

\begin{figure*}[t!]
    \centering
    \begin{subfigure}[b]{0.31\textwidth}
        \centering
\includegraphics[width=1\textwidth]{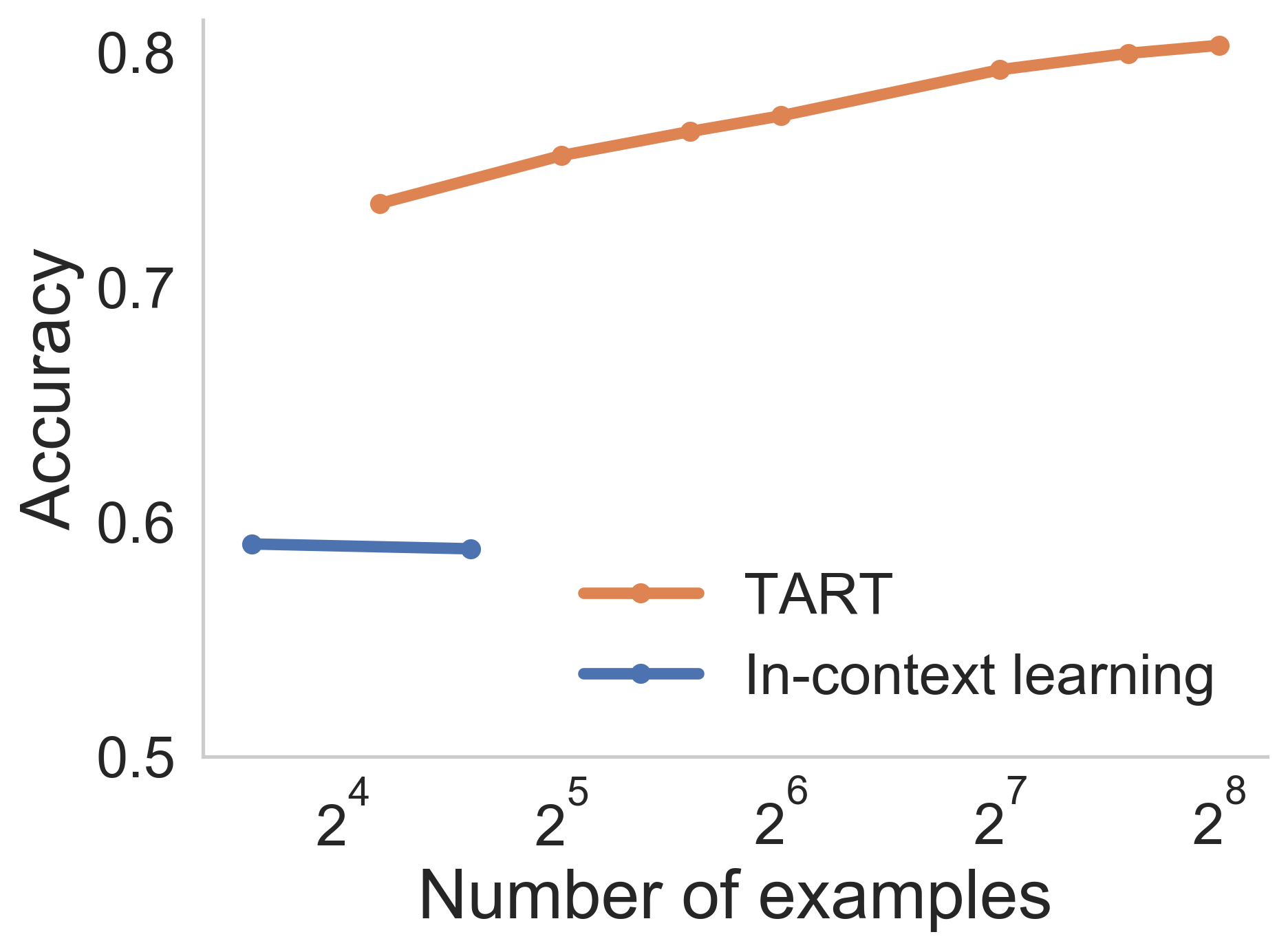}
    \subcaption{\gptsmall}
    \end{subfigure}
    \begin{subfigure}[b]{0.31\textwidth}
        \centering
\includegraphics[width=1\textwidth]{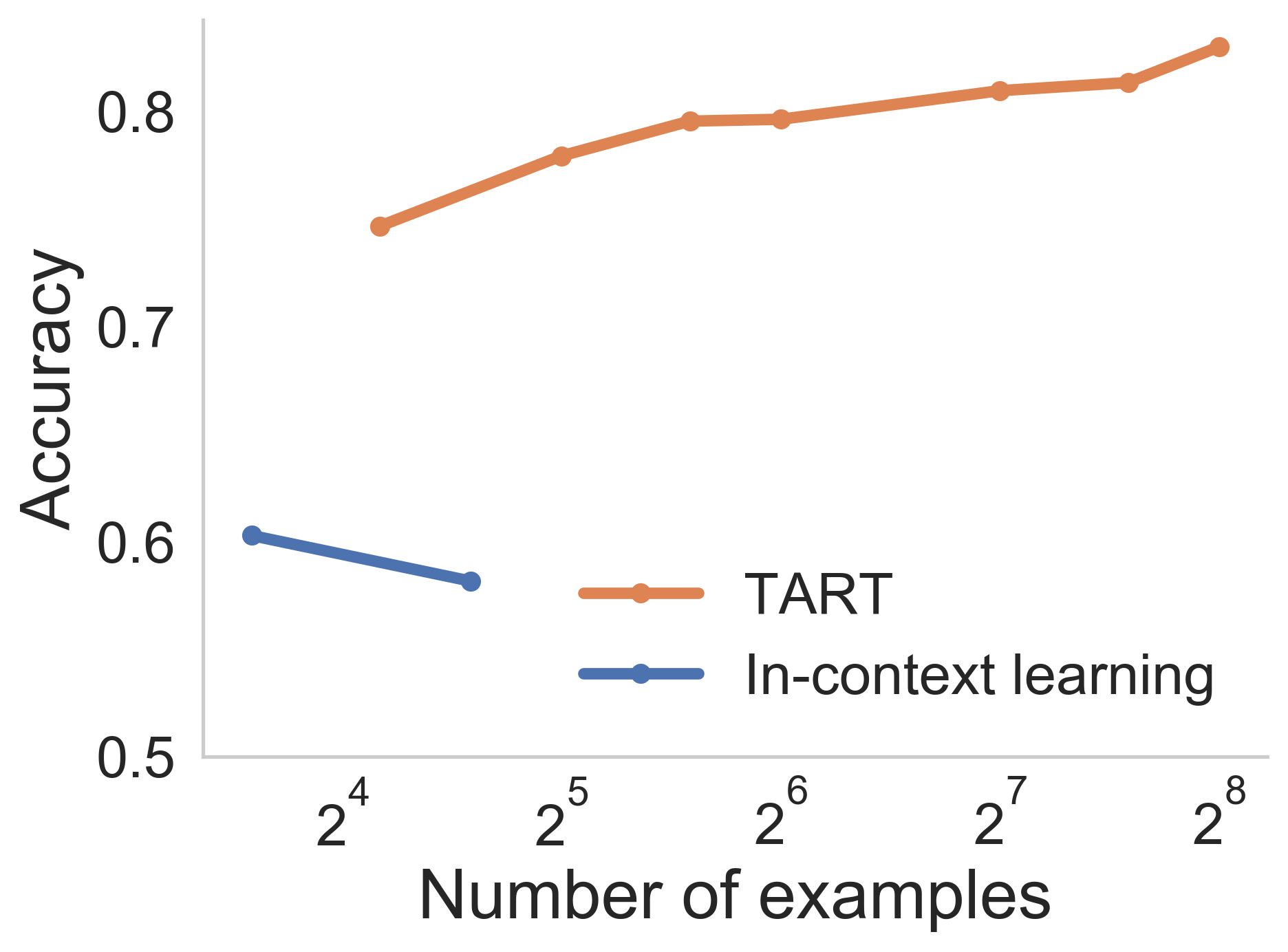}
    \subcaption{\pythiasmall}
    \end{subfigure}
    \begin{subfigure}[b]{0.31\textwidth}
        \centering
\includegraphics[width=1\textwidth]{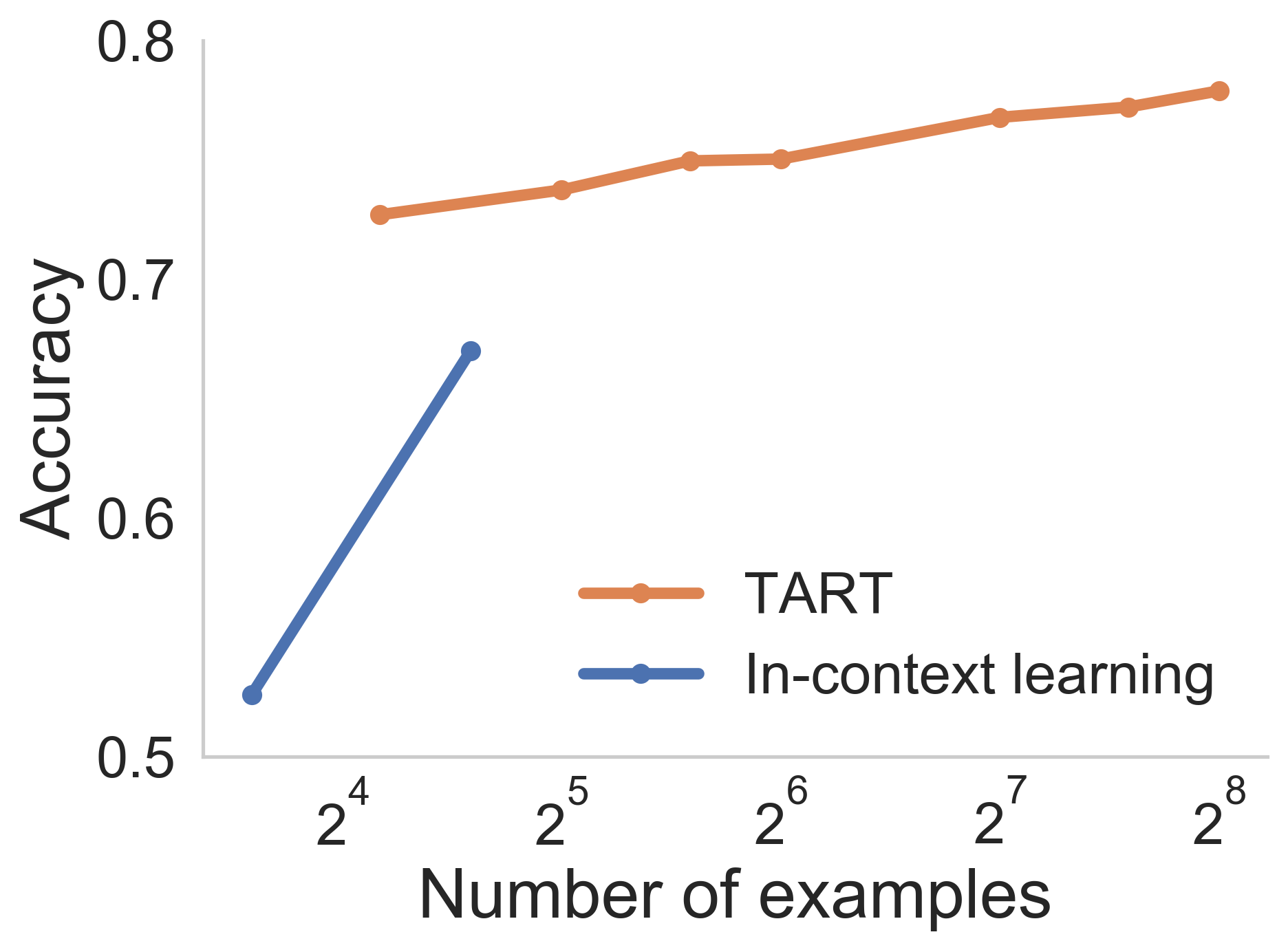}
    \subcaption{\bloomsmall}
    \end{subfigure}
    \caption{\textbf{Beyond context window constraints}. Performance comparison with respect to number of in-context examples. Base in-context learning is bound with respect to total numbers of examples and performance saturates. \alg is not bound by context length, and performance continues to scale as number of examples increases.}
    \label{fig:long-context}
\end{figure*}

\subsubsection{Beyond context length: \alg is data-scalable} 
\paragraph{Setup.} For these evaluations, we use the a subset of 6 datasets: AG-News-0, DBPedia-0, SST, SMS Spam, Youtube and Rotten Tomatoes. We evaluate the performance of \alg over $k$=[18, 32, 48, 64, 128, 192, 256] where $k$ is the number of in-context examples. When evaluating our base models, we evaluate over $k$=[8, 24]---values of $k$ that maximize the context window. We use a lower-bound of 8 given that the maximum input sequence length in the training set for AG News is 256. With such a sequence length, the maximum number of in-context examples that fit in the context-window is 8, hence the lower bound. 

\begin{figure}[t!]
    \centering
\includegraphics[width=0.90\textwidth]{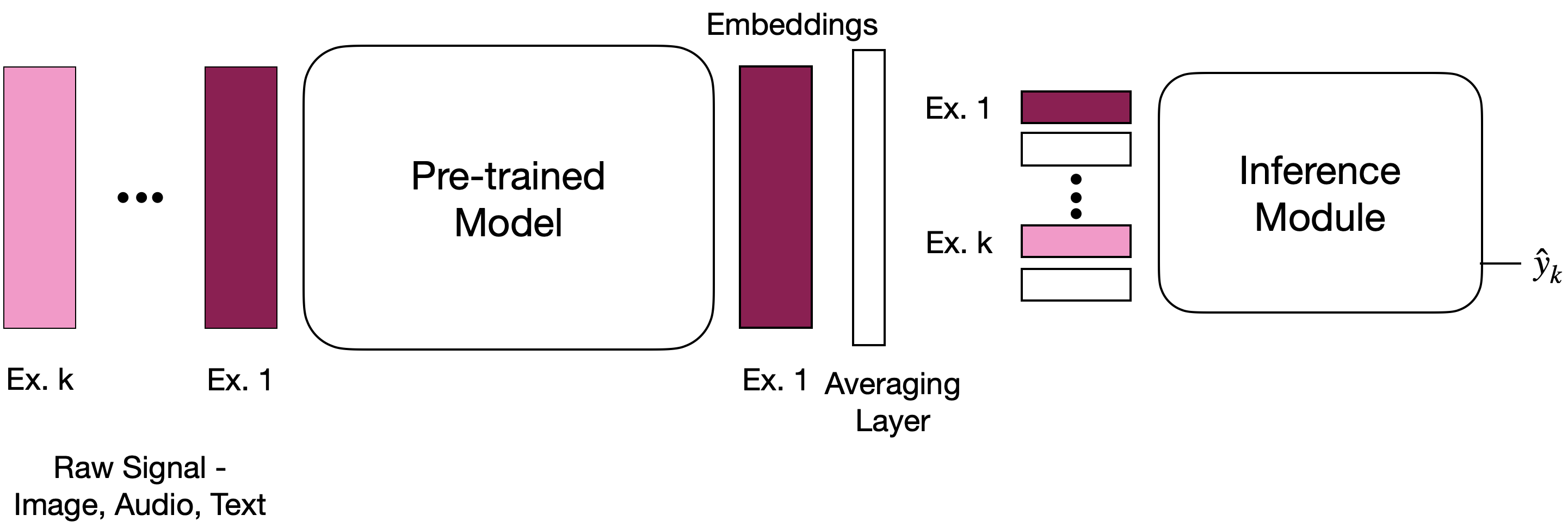}
    \caption{\textbf{Stream embeddings}. Another protocol for generating representations for in-context examples where each example is embedded by the base model separately.}
    \label{fig:stream-embed}
\end{figure}

\paragraph{Embeddings.} For these evaluations, we use what we call ``streaming'' embeddings (see Figure~\ref{fig:stream-embed}). In this setup, we use the context window of the LLM to encode a single example at a time. The final embeddings are then averaged and used in-context with \alg's reasoning module. This is in contrast to the vanilla and LOO embeddings which use multiple examples in-context with the base LLM to obtain the embeddings. 

\paragraph{Evaluation.} Figure~\ref{fig:long-context} shows the performance of base in-context learning with \alg across the three different model families. Observe that while in-context learning is bottlenecked by the context window of the base LLM, \alg is able to learn from 10x more examples and exhibits an increasing trend in accuracy with number of examples across models.



\subsection{Extension to other modalities: \alg is domain-agnostic!}\label{app:modalities}
We begin by providng a description of the datasets we used to evaluate \alg on audio and vision tasks, and then provide additional results comparing our algorithm with baselines.

\subsubsection{Dataset details}
For audio classification, we use the Speech Commands (Version 0.01) dataset~\citep{speechcommandsv2}. Speech Commands is a multi-class classification task where the task is to detect preregistered keywords by classifying utterances into a predefined set of words. We  construct a 3 binary classification task over the keywords ``stop'' and ``go'', ``up'' and ``down'', and ``yes'' and ``no'' (see Table~\ref{tab:modalities} for more details).

For image classification, we use CIFAR-10~\cite{Krizhevsky09learningmultiple} and MNIST~\citep{lecun2010mnist}. Both tasks are multi-class classification tasks. We create 3 binary classification tasks for each of the datasets. For CIFAR-10 the tasks are: airplane vs. bird, bird vs. horse, and ship vs. automobile. For MNIST the tasks are: 0 vs. 8, 1 vs. 6 and 2 vs. 4. See Table~\ref{tab:modalities} for more details.

For both the audio and image datasets, we sample a class-balanced set of 256 samples from the training set. For the test sets, we filter the original test sets to only include samples of the two classes we are learning to predict for (i.e., airplane and bird for CIFAR10 and 0 and 8 for MNIST). 

\begin{table}[t!]
\centering
\begin{tabular}{llrr}
\toprule
                 Dataset & Modality &  Train size &  Test size \\
\midrule
              MNIST (0 vs. 8) &    image &                 256 &               1954 \\
              MNIST (1 vs. 6) &    image &                 256 &               2093 \\
              MNIST (2 vs. 4) &    image &                 256 &               2014 \\
Speech Commands (stop vs. go) &    audio &                 256 &                500 \\
Speech Commands (up vs. down) &    audio &                 256 &                508 \\
Speech Commands (yes vs. no) &    audio &                 256 &                525 \\
    CIFAR-10 (airplane vs. bird) &    image &                 256 &               2000 \\
    CIFAR-10 (bird vs. horse) &    image &                 256 &               2000 \\
    CIFAR-10 (ship vs. automobile) &    image &                 256 &               2000 \\
\bottomrule
\end{tabular}
\vspace{2mm}
\caption{Dataset statistics for all audio and image evaluation datasets.}
\label{tab:modalities}
\end{table}

\subsubsection{Algorithms for comparison}
For these evaluations, we use the ``streaming embeddings'' described in Figure~\ref{fig:stream-embed} to obtain the embedding for \alg. We evaluate over $k$=[18, 32, 48, 64, 128, 256].

We compare against two baseline task-adaptation methods: 1) full fine-tuning and 2) adapters. We use the same architectures as described in Appendix~\ref{app:ft-arch}. For vision tasks, we use Google's 307M parameter pretrained Vision Transformer (ViT) model~\cite{wu2020visual}: \vit. For audio tasks, we use OpenAI's 1.5B parameter pretrained Whisper model~\citep{radford2022whisper}: \whisper.

\paragraph{Hyperparameter search} For each baseline, we perform an extensive hyperparameter search over number of epochs and learning rate for each dataset in order to optimize performance. We search over a range of learning rates (1e-3, 5e-04, 1e-4, 5e-5, 1e-5, and 8e-6) and a range of epochs (5, 10, 15 and 20). For all models we use a batch size of 1. We perform our hyperparameter searches for a fixed number of train samples (128) and run our hyperparameter searches over 3 random seeds. 

\begin{figure*}[t!]
    \centering
    \begin{subfigure}[b]{0.45\textwidth}
        \centering
\includegraphics[width=1\textwidth]{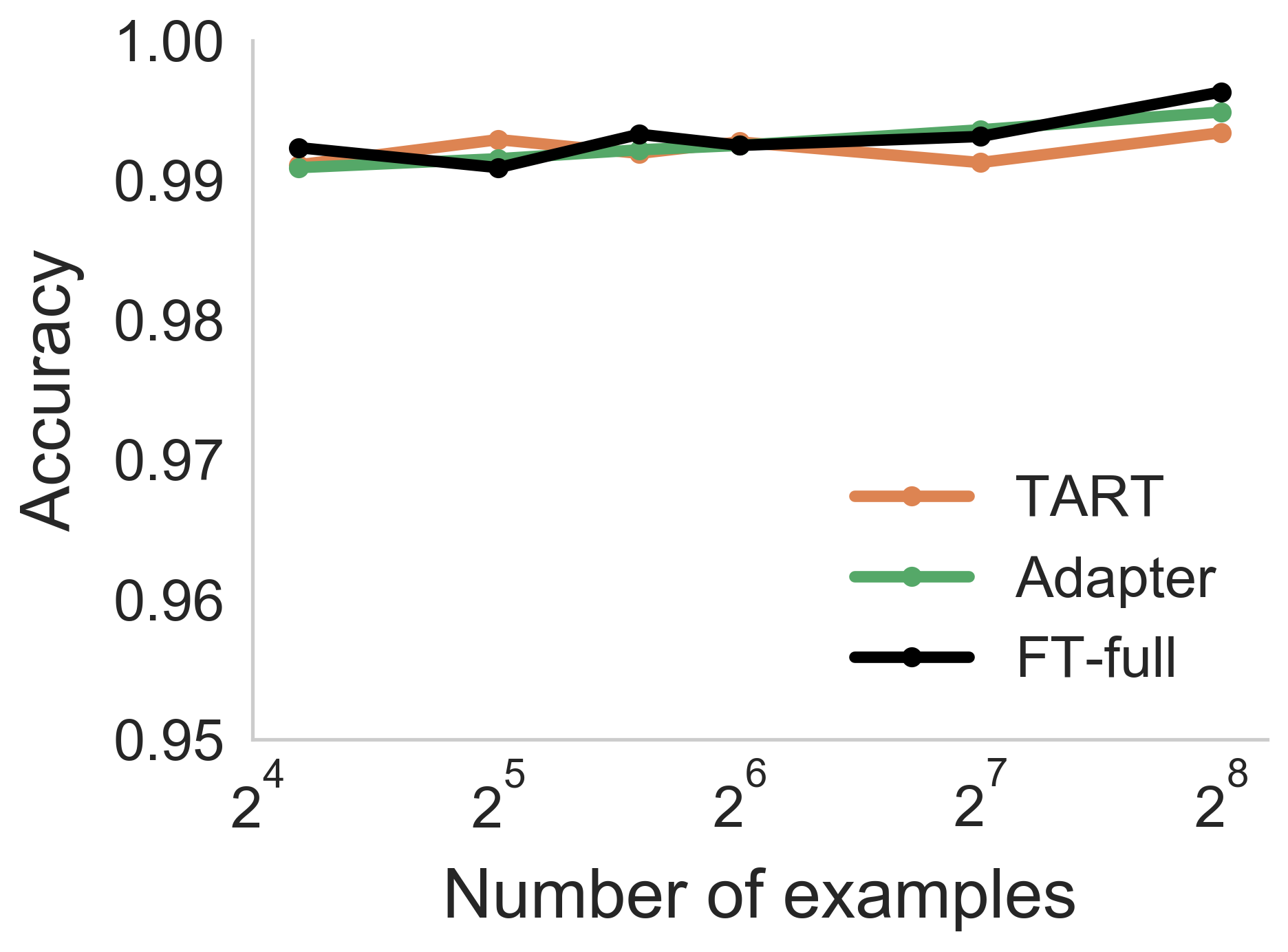}
    \subcaption{MNIST (1 vs. 6)}
    \end{subfigure}
    \begin{subfigure}[b]{0.45\textwidth}
        \centering
\includegraphics[width=1\textwidth]{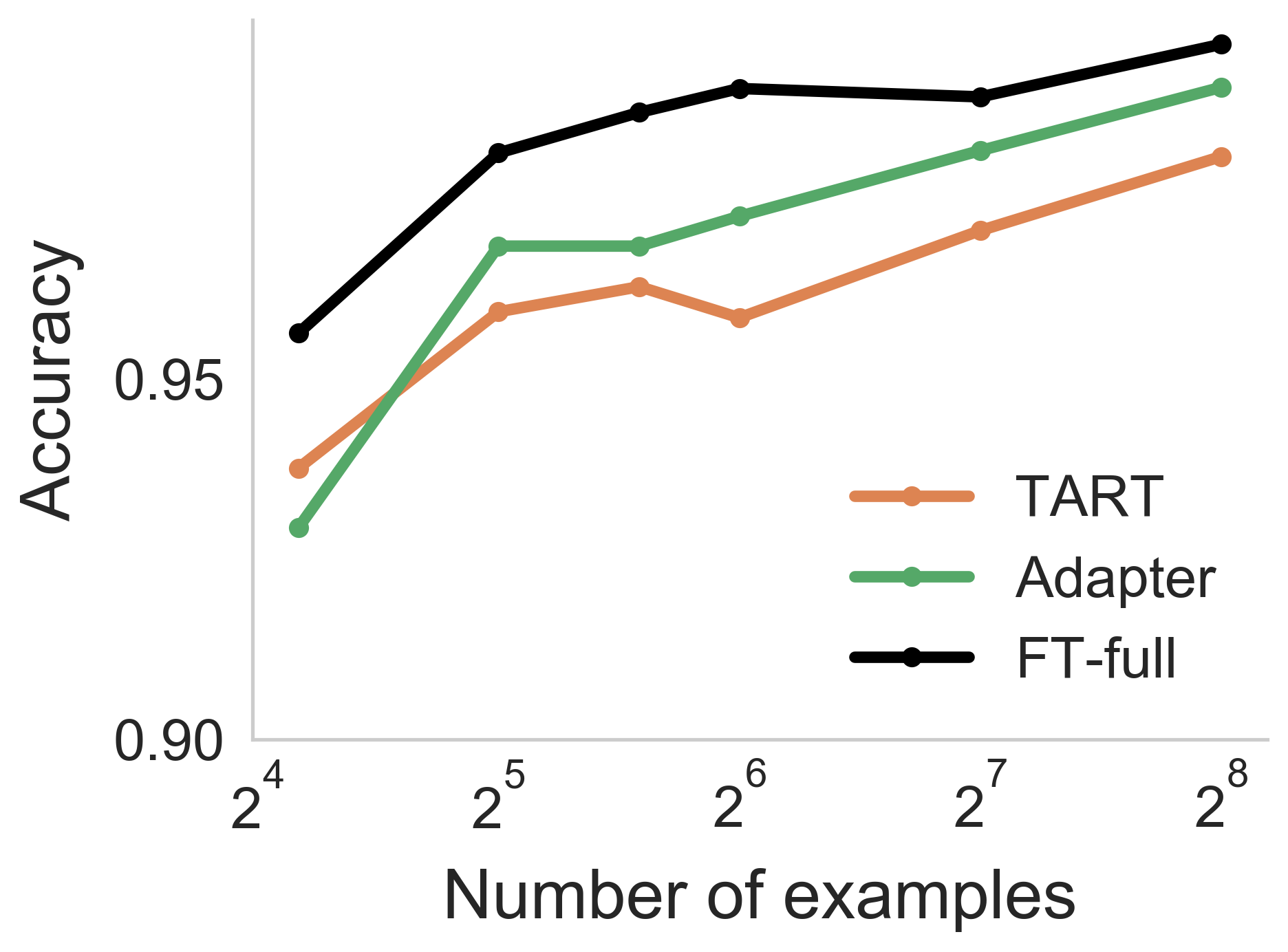}
    \subcaption{MNIST (2 vs. 4)}
    \end{subfigure}
    \caption{\textbf{Additional MNIST binary classification tasks}. TART is competitive with task-specific full fine-tuning and adapters.}
    \label{fig:modalitity-vision-MNIST}
\end{figure*}

\begin{figure*}[t!]
    \centering
    \begin{subfigure}[b]{0.45\textwidth}
        \centering
\includegraphics[width=1\textwidth]{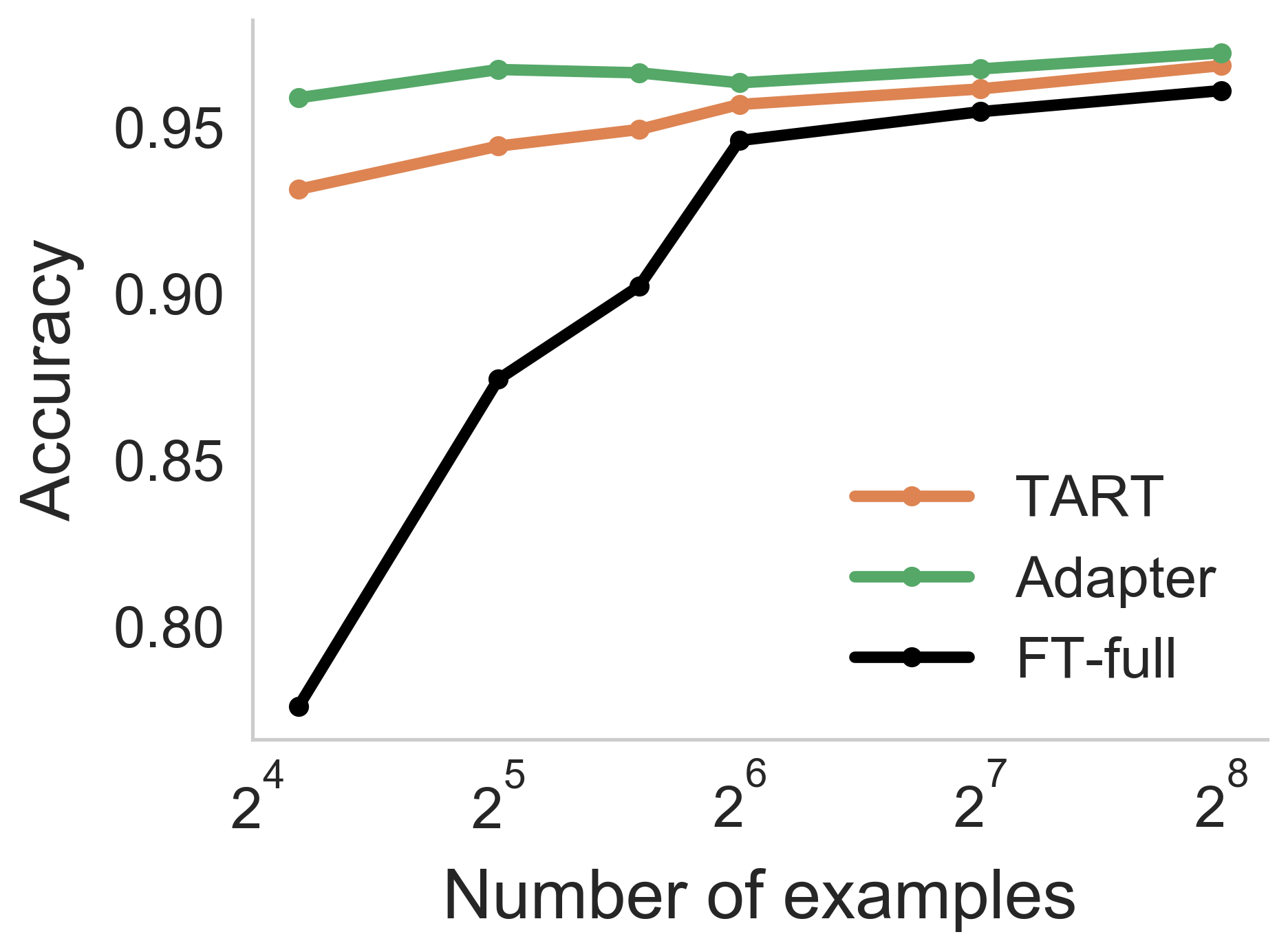}
    \subcaption{CIFAR-10 (ship vs. automobile)}
    \end{subfigure}
    \begin{subfigure}[b]{0.45\textwidth}
        \centering
\includegraphics[width=1\textwidth]{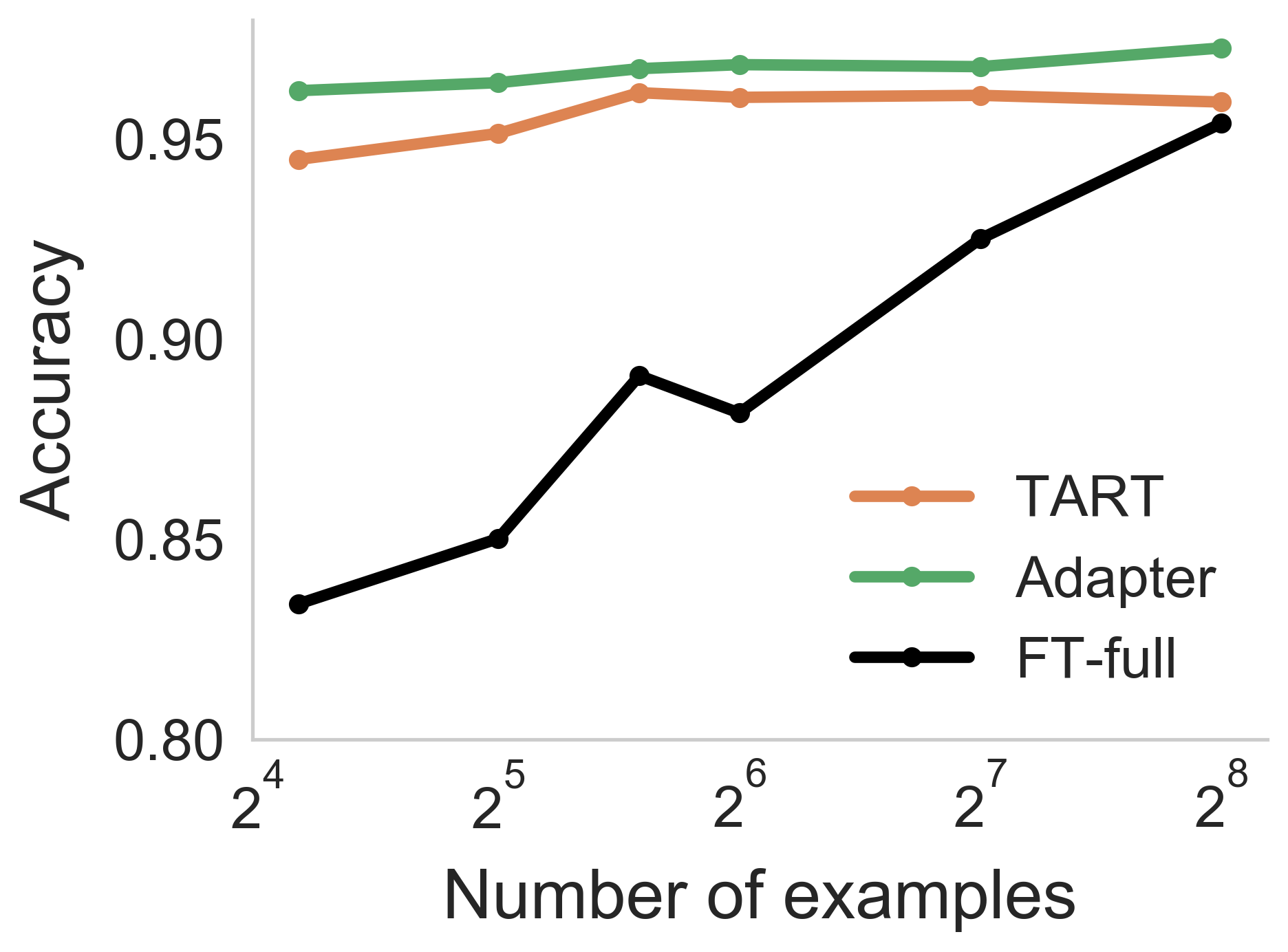}
    \subcaption{CIFAR-10 (bird vs. horse)}
    \end{subfigure}
    \caption{\textbf{Additional CIFAR-10 binary classification tasks}. TART is competitive with task-specific full fine-tuning and adapters.}
    \label{fig:modalitity-vision-CIFAR}
\end{figure*}

\begin{figure*}[t!]
    \centering
    \begin{subfigure}[b]{0.45\textwidth}
        \centering
\includegraphics[width=1\textwidth]{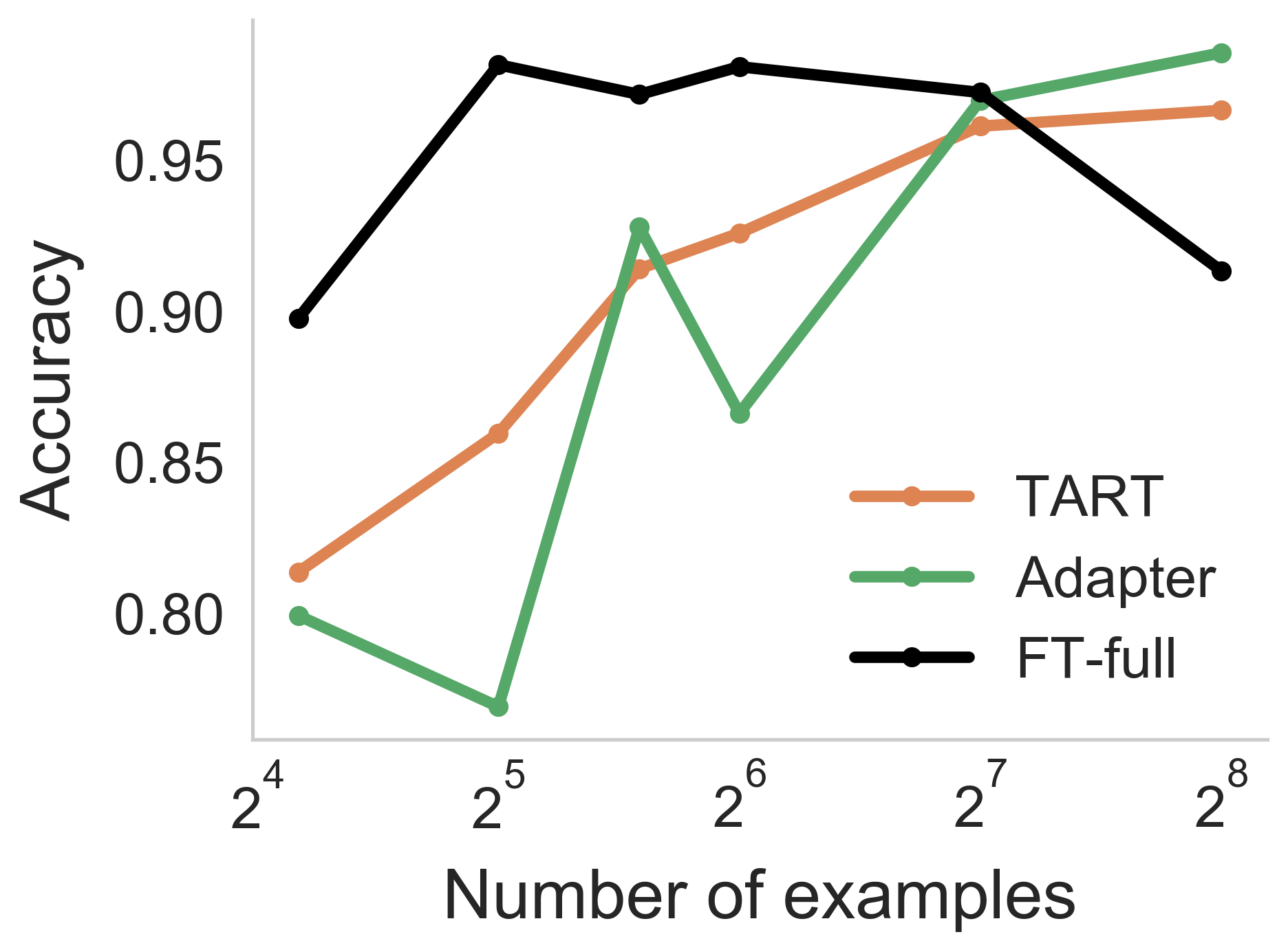}
    \subcaption{Speech Commands (yes vs. no)}
    \end{subfigure}
    \begin{subfigure}[b]{0.45\textwidth}
        \centering
\includegraphics[width=1\textwidth]{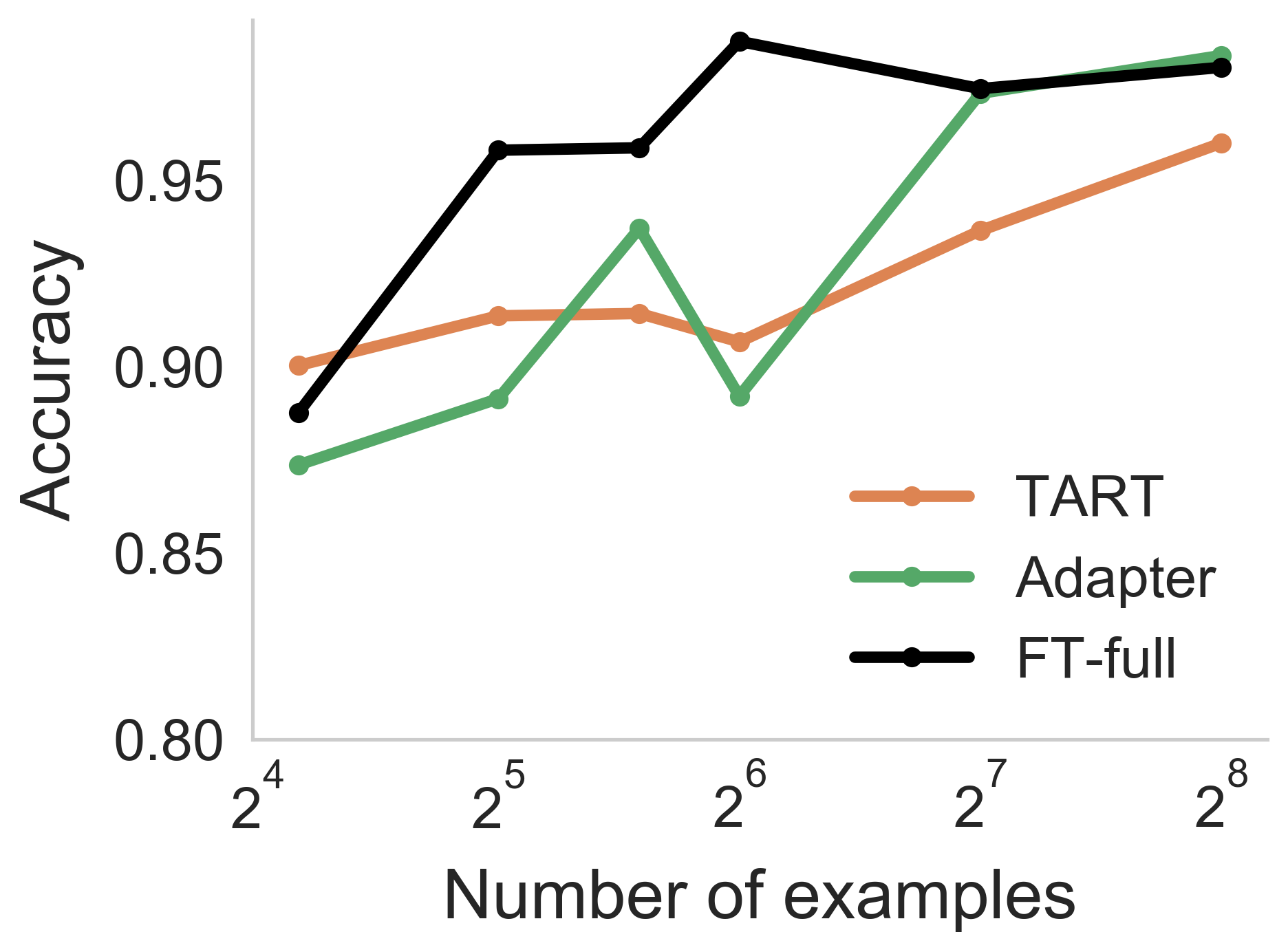}
    \subcaption{Speech Commands  (up vs. down)}
    \end{subfigure}
    \caption{\textbf{Additional Speech Commands binary classification tasks}. TART is competitive with task-specific full fine-tuning and adapters.}
    \label{fig:modalitity-vision-speech}
\end{figure*}

\subsubsection{Evaluation}
We plot the accuracy as a function of the number of examples for \alg, fine-tuning and adapter in Figure~\ref{fig:modalitity-vision-MNIST} (MNIST), Figure~\ref{fig:modalitity-vision-CIFAR} (CIFAR-10), and Figure~\ref{fig:modalitity-vision-speech} (Speech Commands). \alg is competitive with both these baselines, showing how task-agnostic methods can compete with task-specific adaptation methods across different modalities. 

\subsection{Generalization across architectures}

\begin{figure}[t!]
    \centering
    \begin{subfigure}[b]{0.45\textwidth}
        \centering
\includegraphics[width=1\textwidth]{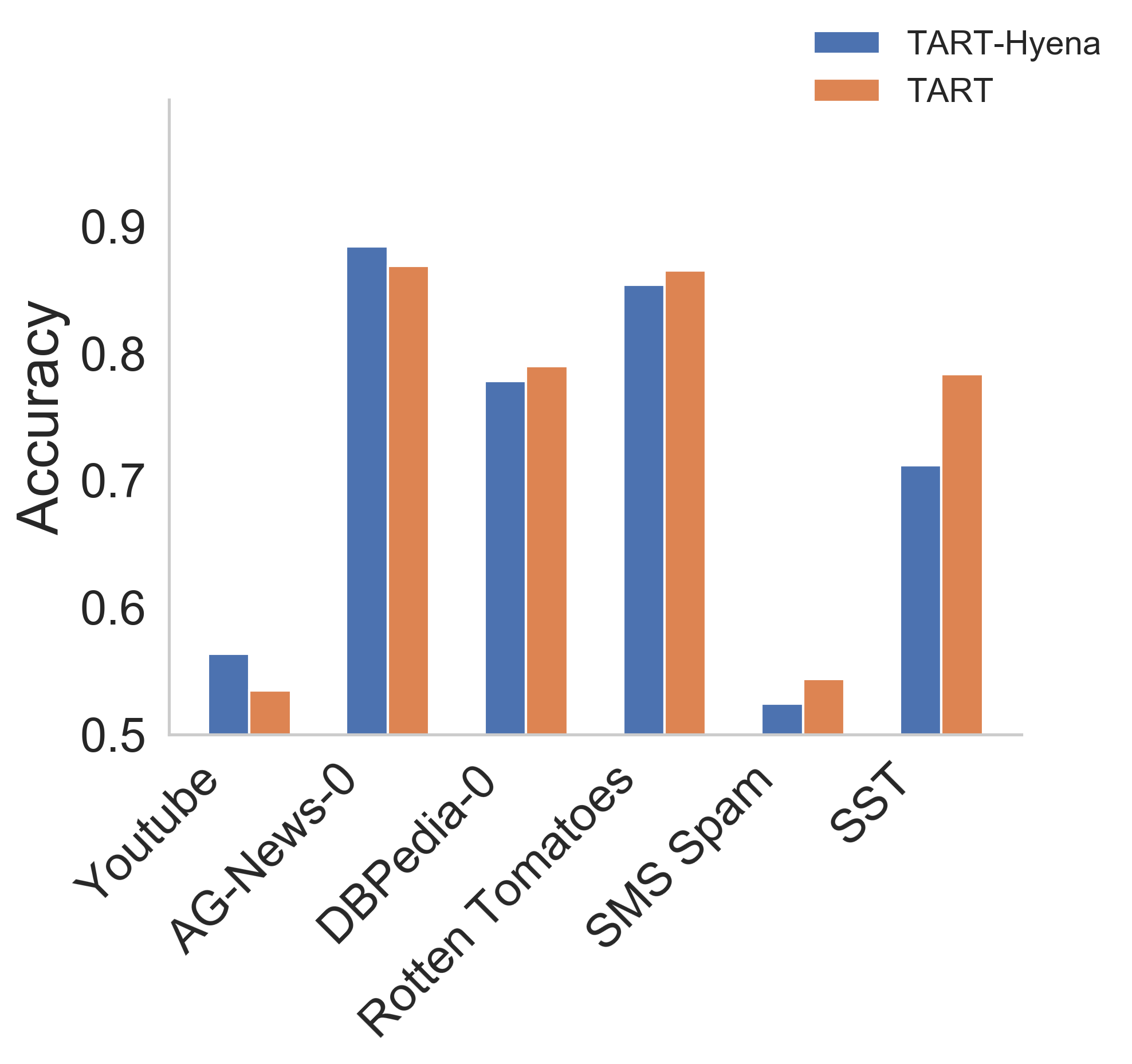}
    \subcaption{k = 18}
    \end{subfigure}
    \begin{subfigure}[b]{0.45\textwidth}
        \centering
\includegraphics[width=1\textwidth]{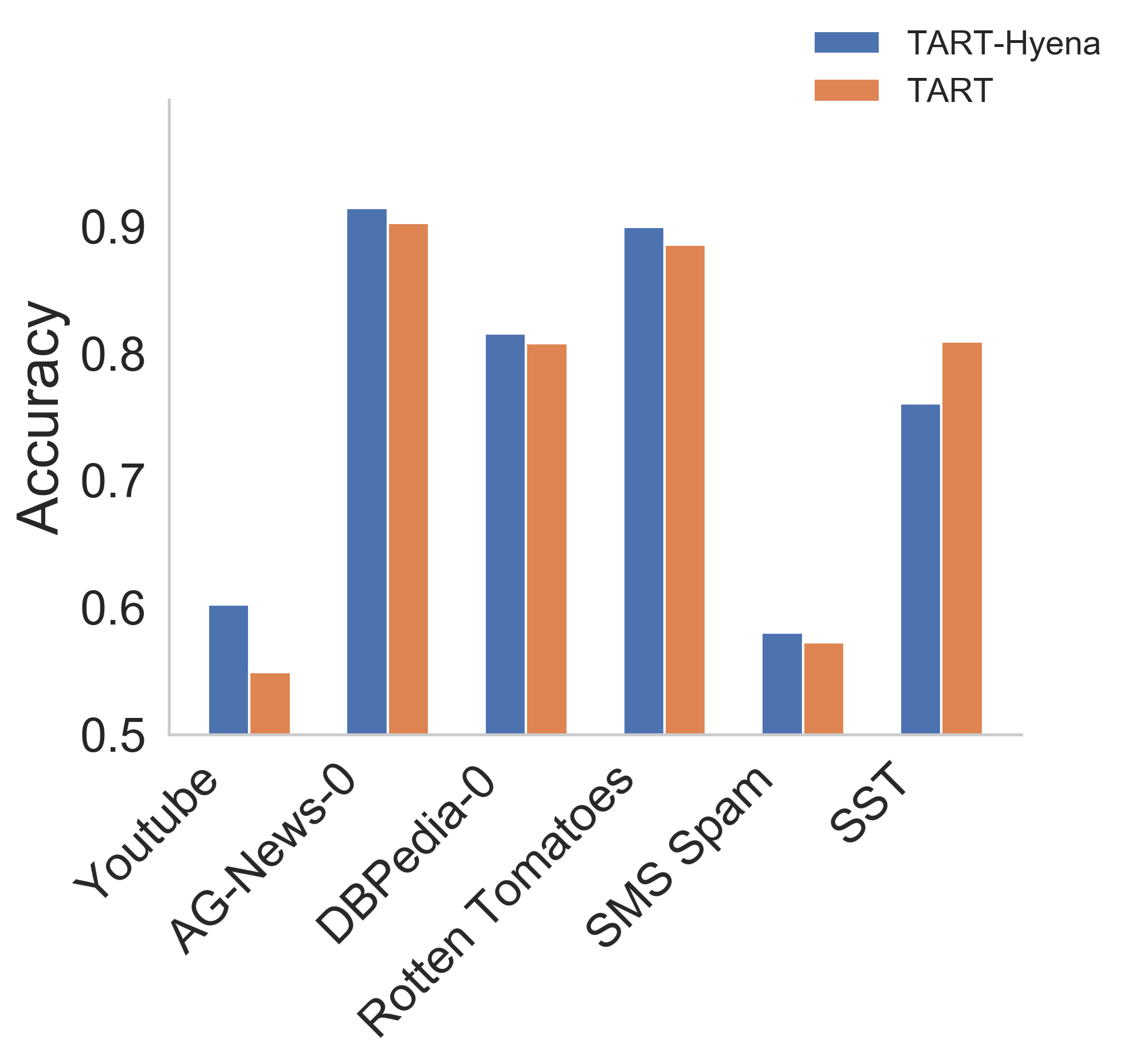}
    \subcaption{k = 32}
    \end{subfigure}
        \begin{subfigure}[b]{0.45\textwidth}
        \centering
\includegraphics[width=1\textwidth]{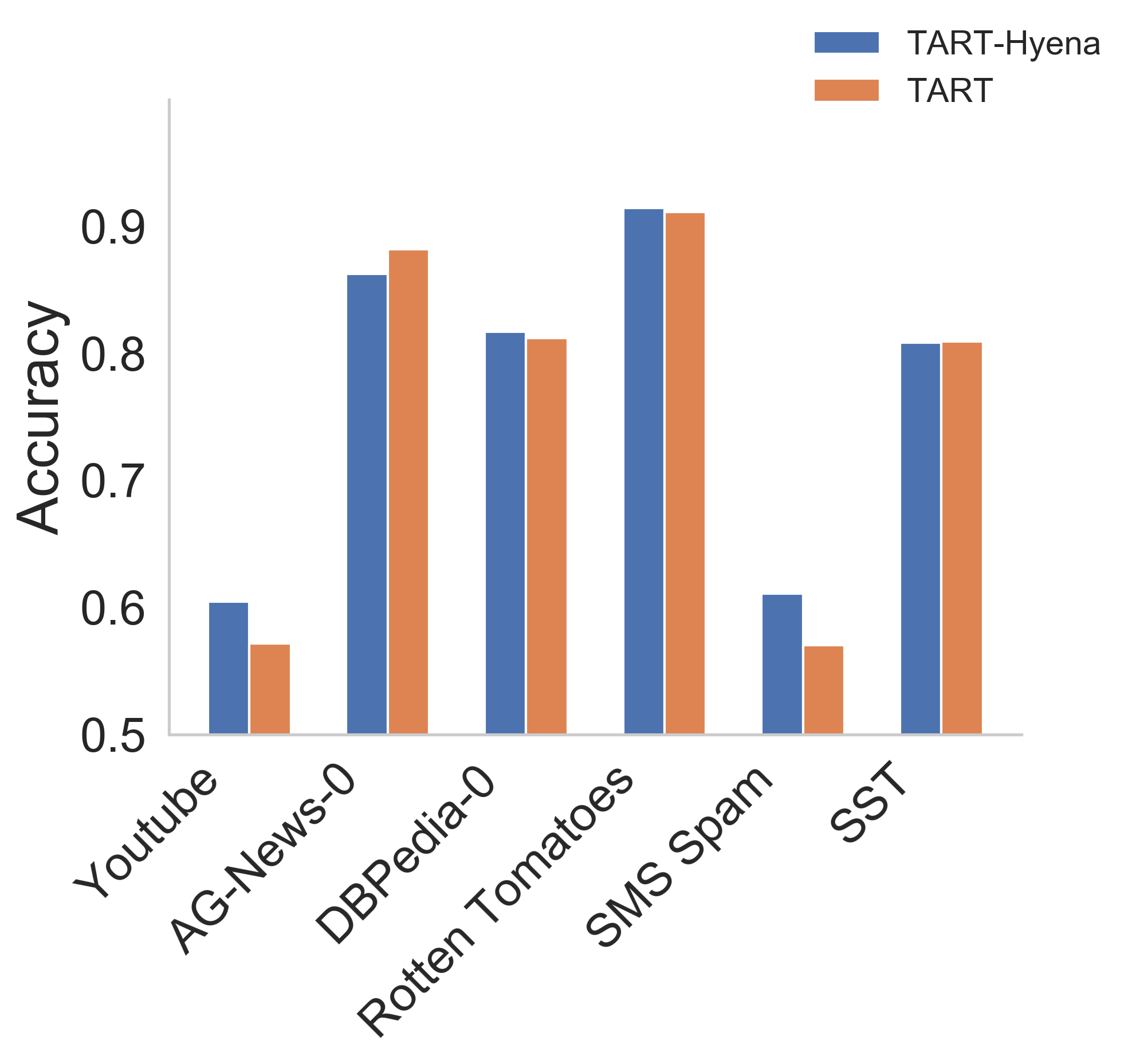}
    \subcaption{k = 48}
    \end{subfigure}
    \begin{subfigure}[b]{0.45\textwidth}
        \centering
\includegraphics[width=1\textwidth]{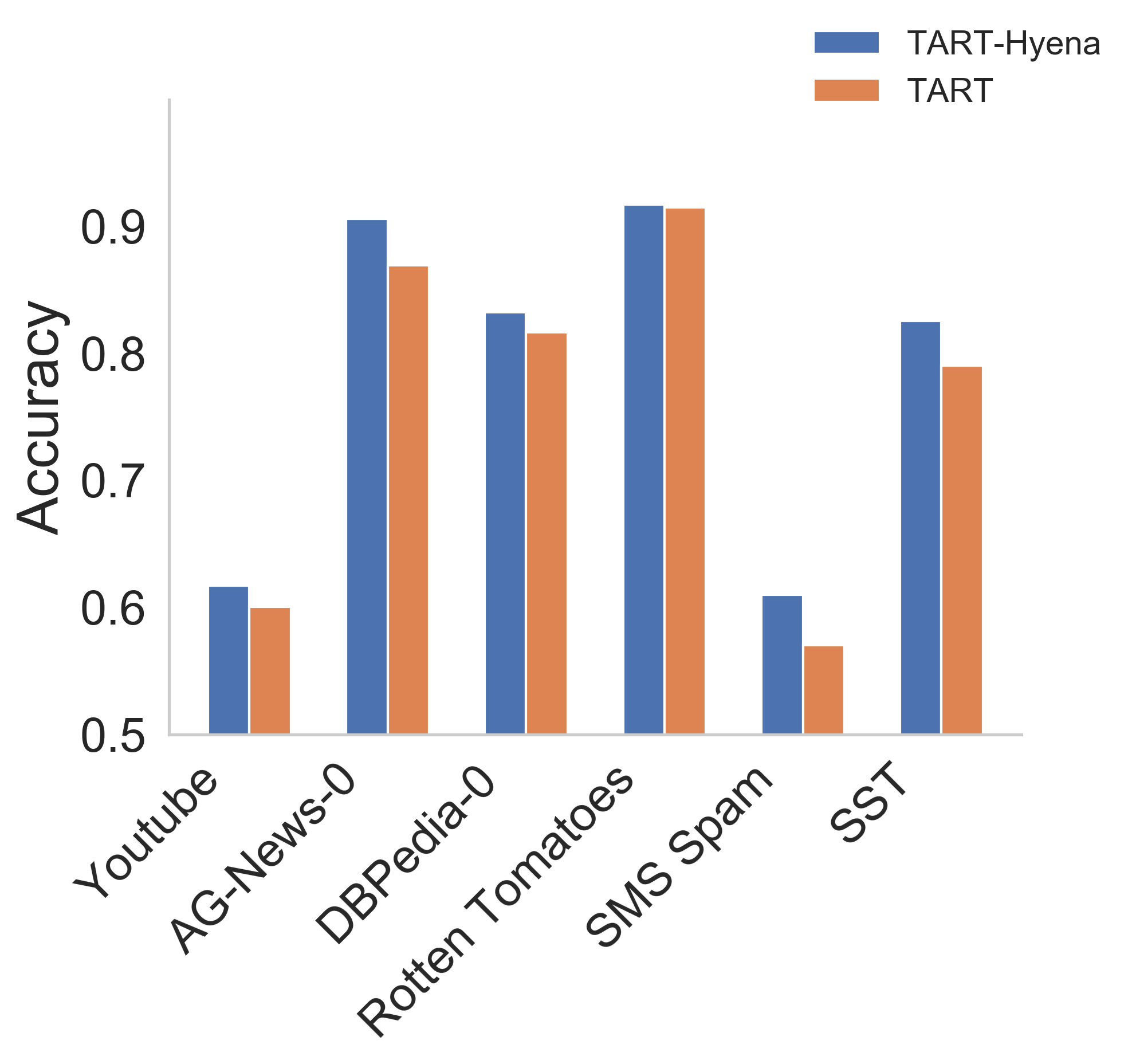}
    \subcaption{k = 64}
    \end{subfigure}
    \caption{\textbf{Comparing TART with a Hyena version of TART.} Instead of using a Transformer module, we use a Hyena model to learn the probabilistic reasoning task. Hyena-TART is comparable to TART across datasets and achieves better performance for larger value of $k$ ($=64$). }
    \label{fig:hyena}
\end{figure}

In this section, we demonstrate that it is possible to train \alg reasoning modules on architectures beyond transformers. More concretely, we train a reasoning module that has the Hyena architecture~\citep{poli2023hyena} using the same synthetic logistic regression tasks. 

\subsubsection{Setup and Hyper-parameters} 
We instantiate a reasoning module with 12 Hyena blocks, a hidden dimension size of 256, and a sequence length of 2050. We train with a batch size of 16, using a learning rate of 5e-05. We sample data with a noise parameter ($\weight$) equal to 1. We train the model for 5000 steps. For our evaluations, we use the final checkpoint (i.e., 5000) of the reasoning module.

\subsubsection{Evaluation} 
We evaluate the Hyena-based reasoning module applied to \gptsmall on 6 datasets: SMS Spam, SST, AG-News-0, DBpedia-14-0, Youtube, and Rotten Tomatoes. As seen in Figure~\ref{fig:hyena}, the Hyena-based reasoning module transfers to natural language tasks and performs competitively with the transformer-based reasoning module

\newpage

\begin{table}
\centering
\begin{tabular}{lccccc}
\toprule
Dataset & \alg & In-context learning & Fine-tuning full & Fine-tuning layer & Adapters\\
\midrule
AG-News-0 & 0.790 $\pm$ 0.036 & 0.552 $\pm$ 0.049 & 0.724 $\pm$ 0.070 & 0.852 $\pm$ 0.019 & 0.824 $\pm$ 0.031 \\
AG-News-1 & 0.828 $\pm$ 0.021 & 0.541 $\pm$ 0.037 & 0.779 $\pm$ 0.137 & 0.903 $\pm$ 0.021 & 0.868 $\pm$ 0.020 \\
AG-News-2 & 0.751 $\pm$ 0.023 & 0.513 $\pm$ 0.014 & 0.626 $\pm$ 0.057 & 0.765 $\pm$ 0.025 & 0.755 $\pm$ 0.012 \\
AG-News-3 & 0.743 $\pm$ 0.031 & 0.502 $\pm$ 0.017 & 0.736 $\pm$ 0.035 & 0.786 $\pm$ 0.025 & 0.727 $\pm$ 0.066 \\
Civil Comments & 0.559 $\pm$ 0.027 & 0.499 $\pm$ 0.002 & 0.520 $\pm$ 0.033 & 0.515 $\pm$ 0.019 & 0.555 $\pm$ 0.036 \\
DBPedia-0 & 0.866 $\pm$ 0.030 & 0.611 $\pm$ 0.091 & 0.802 $\pm$ 0.020 & 0.825 $\pm$ 0.012 & 0.837 $\pm$ 0.022 \\
DBPedia-1 & 0.778 $\pm$ 0.036 & 0.579 $\pm$ 0.100 & 0.766 $\pm$ 0.056 & 0.740 $\pm$ 0.041 & 0.778 $\pm$ 0.044 \\
DBPedia-2 & 0.798 $\pm$ 0.042 & 0.609 $\pm$ 0.136 & 0.862 $\pm$ 0.041 & 0.908 $\pm$ 0.011 & 0.832 $\pm$ 0.048 \\
DBPedia-3 & 0.812 $\pm$ 0.032 & 0.611 $\pm$ 0.135 & 0.817 $\pm$ 0.034 & 0.859 $\pm$ 0.025 & 0.848 $\pm$ 0.028 \\
IMDB & 0.537 $\pm$ 0.022 & 0.507 $\pm$ 0.007 & 0.625 $\pm$ 0.013 & 0.560 $\pm$ 0.021 & 0.556 $\pm$ 0.014 \\
Rotten Tomatoes & 0.535 $\pm$ 0.030 & 0.550 $\pm$ 0.043 & 0.689 $\pm$ 0.019 & 0.541 $\pm$ 0.037 & 0.524 $\pm$ 0.018 \\
SMS Spam & 0.869 $\pm$ 0.063 & 0.736 $\pm$ 0.099 & 0.925 $\pm$ 0.011 & 0.833 $\pm$ 0.015 & 0.886 $\pm$ 0.023 \\
SST & 0.544 $\pm$ 0.021 & 0.542 $\pm$ 0.024 & 0.715 $\pm$ 0.009 & 0.555 $\pm$ 0.026 & 0.547 $\pm$ 0.009 \\
Youtube & 0.784 $\pm$ 0.047 & 0.658 $\pm$ 0.089 & 0.833 $\pm$ 0.089 & 0.768 $\pm$ 0.100 & 0.715 $\pm$ 0.136 \\
\bottomrule
\end{tabular}

\caption{\textbf{Standard deviation of accuracy, number of examples = 18, \gptsmall }}
\label{tab:gpt-std-macro-18}
\end{table}

\begin{table}
\centering
\begin{tabular}{lccccc}
\toprule
Dataset & \alg & In-context learning & Fine-tuning full & Fine-tuning layer & Adapters\\
\midrule
AG-News-0 & 0.805 $\pm$ 0.029 & 0.498 $\pm$ 0.002 & 0.758 $\pm$ 0.127 & 0.601 $\pm$ 0.072 & 0.669 $\pm$ 0.044 \\
AG-News-1 & 0.825 $\pm$ 0.023 & 0.517 $\pm$ 0.019 & 0.916 $\pm$ 0.015 & 0.702 $\pm$ 0.097 & 0.690 $\pm$ 0.034 \\
AG-News-2 & 0.758 $\pm$ 0.027 & 0.500 $\pm$ 0.000 & 0.596 $\pm$ 0.031 & 0.500 $\pm$ 0.001 & 0.588 $\pm$ 0.026 \\
AG-News-3 & 0.754 $\pm$ 0.033 & 0.501 $\pm$ 0.003 & 0.661 $\pm$ 0.093 & 0.552 $\pm$ 0.041 & 0.613 $\pm$ 0.022 \\
Civil Comments & 0.575 $\pm$ 0.019 & 0.500 $\pm$ 0.003 & 0.525 $\pm$ 0.049 & 0.500 $\pm$ 0.000 & 0.515 $\pm$ 0.008 \\
DBPedia-0 & 0.861 $\pm$ 0.018 & 0.508 $\pm$ 0.016 & 0.786 $\pm$ 0.054 & 0.638 $\pm$ 0.109 & 0.704 $\pm$ 0.038 \\
DBPedia-1 & 0.813 $\pm$ 0.020 & 0.499 $\pm$ 0.010 & 0.787 $\pm$ 0.067 & 0.599 $\pm$ 0.078 & 0.710 $\pm$ 0.059 \\
DBPedia-2 & 0.870 $\pm$ 0.035 & 0.502 $\pm$ 0.025 & 0.878 $\pm$ 0.071 & 0.701 $\pm$ 0.113 & 0.767 $\pm$ 0.052 \\
DBPedia-3 & 0.850 $\pm$ 0.050 & 0.502 $\pm$ 0.003 & 0.864 $\pm$ 0.034 & 0.603 $\pm$ 0.114 & 0.734 $\pm$ 0.030 \\
IMDB & 0.550 $\pm$ 0.027 & 0.507 $\pm$ 0.005 & 0.590 $\pm$ 0.046 & 0.500 $\pm$ 0.000 & 0.526 $\pm$ 0.012 \\
Rotten Tomatoes & 0.544 $\pm$ 0.027 & 0.491 $\pm$ 0.013 & 0.589 $\pm$ 0.074 & 0.500 $\pm$ 0.000 & 0.522 $\pm$ 0.023 \\
SMS Spam & 0.901 $\pm$ 0.038 & 0.867 $\pm$ 0.030 & 0.892 $\pm$ 0.052 & 0.867 $\pm$ 0.004 & 0.851 $\pm$ 0.042 \\
SST & 0.572 $\pm$ 0.016 & 0.517 $\pm$ 0.002 & 0.617 $\pm$ 0.074 & 0.517 $\pm$ 0.000 & 0.539 $\pm$ 0.007 \\
Youtube & 0.847 $\pm$ 0.047 & 0.611 $\pm$ 0.084 & 0.810 $\pm$ 0.060 & 0.528 $\pm$ 0.000 & 0.598 $\pm$ 0.103 \\
\bottomrule
\end{tabular}

\caption{\textbf{Standard deviation of accuracy, number of examples = 18, \pythiasmall }}
\label{tab:pythia-std-macro-18}
\end{table}

\begin{table}
\centering
\begin{tabular}{lccccc}
\toprule
Dataset & \alg & In-context learning & Fine-tuning full & Fine-tuning layer & Adapters\\
\midrule
AG-News-0 & 0.813 $\pm$ 0.021 & 0.511 $\pm$ 0.013 & 0.702 $\pm$ 0.102 & 0.790 $\pm$ 0.033 & 0.689 $\pm$ 0.060 \\
AG-News-1 & 0.851 $\pm$ 0.020 & 0.511 $\pm$ 0.014 & 0.801 $\pm$ 0.086 & 0.891 $\pm$ 0.037 & 0.752 $\pm$ 0.032 \\
AG-News-2 & 0.744 $\pm$ 0.034 & 0.509 $\pm$ 0.009 & 0.622 $\pm$ 0.083 & 0.711 $\pm$ 0.070 & 0.652 $\pm$ 0.047 \\
AG-News-3 & 0.763 $\pm$ 0.026 & 0.508 $\pm$ 0.014 & 0.626 $\pm$ 0.016 & 0.775 $\pm$ 0.035 & 0.706 $\pm$ 0.027 \\
Civil Comments & 0.561 $\pm$ 0.029 & 0.489 $\pm$ 0.009 & 0.491 $\pm$ 0.032 & 0.540 $\pm$ 0.029 & 0.533 $\pm$ 0.030 \\
DBPedia-0 & 0.851 $\pm$ 0.009 & 0.531 $\pm$ 0.044 & 0.811 $\pm$ 0.116 & 0.813 $\pm$ 0.057 & 0.812 $\pm$ 0.043 \\
DBPedia-1 & 0.760 $\pm$ 0.037 & 0.546 $\pm$ 0.088 & 0.750 $\pm$ 0.129 & 0.718 $\pm$ 0.067 & 0.754 $\pm$ 0.041 \\
DBPedia-2 & 0.800 $\pm$ 0.032 & 0.567 $\pm$ 0.110 & 0.850 $\pm$ 0.086 & 0.904 $\pm$ 0.018 & 0.851 $\pm$ 0.055 \\
DBPedia-3 & 0.848 $\pm$ 0.025 & 0.528 $\pm$ 0.046 & 0.739 $\pm$ 0.133 & 0.785 $\pm$ 0.132 & 0.849 $\pm$ 0.011 \\
IMDB & 0.552 $\pm$ 0.031 & 0.550 $\pm$ 0.044 & 0.630 $\pm$ 0.016 & 0.608 $\pm$ 0.014 & 0.526 $\pm$ 0.025 \\
Rotten Tomatoes & 0.574 $\pm$ 0.029 & 0.539 $\pm$ 0.031 & 0.638 $\pm$ 0.037 & 0.618 $\pm$ 0.049 & 0.507 $\pm$ 0.010 \\
SMS Spam & 0.830 $\pm$ 0.112 & 0.613 $\pm$ 0.261 & 0.883 $\pm$ 0.130 & 0.911 $\pm$ 0.035 & 0.885 $\pm$ 0.045 \\
SST & 0.574 $\pm$ 0.019 & 0.561 $\pm$ 0.062 & 0.707 $\pm$ 0.024 & 0.636 $\pm$ 0.025 & 0.531 $\pm$ 0.015 \\
Youtube & 0.762 $\pm$ 0.116 & 0.584 $\pm$ 0.144 & 0.753 $\pm$ 0.140 & 0.726 $\pm$ 0.052 & 0.769 $\pm$ 0.084 \\
\bottomrule
\end{tabular}

\caption{\textbf{Standard deviation of accuracy, number of examples = 18, \bloomsmall }}
\label{tab:bloom-std-macro-18}
\end{table}

\begin{table}
\centering
\begin{tabular}{lccccc}
\toprule
Dataset & \alg & In-context learning & Fine-tuning full & Fine-tuning layer & Adapters\\
\midrule
AG-News-0 & 0.808 $\pm$ 0.030 & 0.551 $\pm$ 0.041 & 0.795 $\pm$ 0.049 & 0.874 $\pm$ 0.022 & 0.830 $\pm$ 0.034 \\
AG-News-1 & 0.883 $\pm$ 0.014 & 0.577 $\pm$ 0.055 & 0.852 $\pm$ 0.070 & 0.911 $\pm$ 0.021 & 0.902 $\pm$ 0.011 \\
AG-News-2 & 0.764 $\pm$ 0.019 & 0.540 $\pm$ 0.027 & 0.705 $\pm$ 0.063 & 0.812 $\pm$ 0.019 & 0.782 $\pm$ 0.019 \\
AG-News-3 & 0.798 $\pm$ 0.025 & 0.521 $\pm$ 0.039 & 0.762 $\pm$ 0.046 & 0.813 $\pm$ 0.012 & 0.806 $\pm$ 0.028 \\
Civil Comments & 0.575 $\pm$ 0.054 & 0.498 $\pm$ 0.002 & 0.554 $\pm$ 0.029 & 0.543 $\pm$ 0.022 & 0.579 $\pm$ 0.044 \\
DBPedia-0 & 0.886 $\pm$ 0.021 & 0.632 $\pm$ 0.096 & 0.884 $\pm$ 0.019 & 0.866 $\pm$ 0.013 & 0.859 $\pm$ 0.020 \\
DBPedia-1 & 0.809 $\pm$ 0.031 & 0.593 $\pm$ 0.062 & 0.802 $\pm$ 0.030 & 0.783 $\pm$ 0.022 & 0.813 $\pm$ 0.023 \\
DBPedia-2 & 0.886 $\pm$ 0.011 & 0.586 $\pm$ 0.078 & 0.916 $\pm$ 0.029 & 0.932 $\pm$ 0.013 & 0.899 $\pm$ 0.017 \\
DBPedia-3 & 0.868 $\pm$ 0.022 & 0.565 $\pm$ 0.041 & 0.902 $\pm$ 0.016 & 0.908 $\pm$ 0.012 & 0.868 $\pm$ 0.023 \\
IMDB & 0.537 $\pm$ 0.040 & 0.543 $\pm$ 0.029 & 0.609 $\pm$ 0.029 & 0.556 $\pm$ 0.024 & 0.551 $\pm$ 0.036 \\
Rotten Tomatoes & 0.549 $\pm$ 0.035 & 0.554 $\pm$ 0.036 & 0.667 $\pm$ 0.036 & 0.550 $\pm$ 0.045 & 0.528 $\pm$ 0.020 \\
SMS Spam & 0.903 $\pm$ 0.027 & 0.768 $\pm$ 0.105 & 0.931 $\pm$ 0.014 & 0.861 $\pm$ 0.027 & 0.920 $\pm$ 0.011 \\
SST & 0.573 $\pm$ 0.009 & 0.564 $\pm$ 0.045 & 0.711 $\pm$ 0.022 & 0.594 $\pm$ 0.022 & 0.551 $\pm$ 0.022 \\
Youtube & 0.810 $\pm$ 0.072 & 0.759 $\pm$ 0.074 & 0.854 $\pm$ 0.044 & 0.773 $\pm$ 0.052 & 0.722 $\pm$ 0.113 \\
\bottomrule
\end{tabular}
\vspace{2mm}
\caption{\textbf{Standard deviation of accuracy, number of examples = 32, \gptsmall}}
\label{tab:gpt-std-macro-32}
\end{table}

\begin{table}
\centering
\begin{tabular}{lccccc}
\toprule
Dataset & \alg & In-context learning & Fine-tuning full & Fine-tuning layer & Adapters\\
\midrule
AG-News-0 & 0.835 $\pm$ 0.020 & 0.499 $\pm$ 0.002 & 0.791 $\pm$ 0.100 & 0.855 $\pm$ 0.017 & 0.712 $\pm$ 0.028 \\
AG-News-1 & 0.900 $\pm$ 0.012 & 0.504 $\pm$ 0.004 & 0.911 $\pm$ 0.022 & 0.926 $\pm$ 0.009 & 0.738 $\pm$ 0.028 \\
AG-News-2 & 0.773 $\pm$ 0.012 & 0.510 $\pm$ 0.006 & 0.687 $\pm$ 0.034 & 0.742 $\pm$ 0.044 & 0.683 $\pm$ 0.025 \\
AG-News-3 & 0.823 $\pm$ 0.024 & 0.514 $\pm$ 0.017 & 0.792 $\pm$ 0.022 & 0.833 $\pm$ 0.011 & 0.697 $\pm$ 0.023 \\
Civil Comments & 0.596 $\pm$ 0.024 & 0.499 $\pm$ 0.007 & 0.588 $\pm$ 0.052 & 0.536 $\pm$ 0.020 & 0.542 $\pm$ 0.016 \\
DBPedia-0 & 0.890 $\pm$ 0.016 & 0.514 $\pm$ 0.026 & 0.880 $\pm$ 0.049 & 0.777 $\pm$ 0.055 & 0.782 $\pm$ 0.044 \\
DBPedia-1 & 0.833 $\pm$ 0.018 & 0.522 $\pm$ 0.023 & 0.834 $\pm$ 0.061 & 0.768 $\pm$ 0.024 & 0.760 $\pm$ 0.022 \\
DBPedia-2 & 0.912 $\pm$ 0.010 & 0.522 $\pm$ 0.031 & 0.916 $\pm$ 0.033 & 0.903 $\pm$ 0.012 & 0.859 $\pm$ 0.019 \\
DBPedia-3 & 0.879 $\pm$ 0.021 & 0.520 $\pm$ 0.026 & 0.900 $\pm$ 0.019 & 0.863 $\pm$ 0.018 & 0.812 $\pm$ 0.022 \\
IMDB & 0.541 $\pm$ 0.041 & 0.510 $\pm$ 0.008 & 0.630 $\pm$ 0.017 & 0.575 $\pm$ 0.024 & 0.540 $\pm$ 0.018 \\
Rotten Tomatoes & 0.576 $\pm$ 0.042 & 0.495 $\pm$ 0.007 & 0.659 $\pm$ 0.064 & 0.568 $\pm$ 0.027 & 0.542 $\pm$ 0.019 \\
SMS Spam & 0.937 $\pm$ 0.017 & 0.856 $\pm$ 0.080 & 0.913 $\pm$ 0.044 & 0.953 $\pm$ 0.014 & 0.877 $\pm$ 0.036 \\
SST & 0.602 $\pm$ 0.015 & 0.509 $\pm$ 0.013 & 0.705 $\pm$ 0.021 & 0.605 $\pm$ 0.043 & 0.542 $\pm$ 0.003 \\
Youtube & 0.862 $\pm$ 0.045 & 0.626 $\pm$ 0.081 & 0.874 $\pm$ 0.046 & 0.762 $\pm$ 0.052 & 0.654 $\pm$ 0.050 \\
\bottomrule
\end{tabular}
\vspace{2mm}
\caption{\textbf{Standard deviation of accuracy, number of examples = 32, \pythiasmall }}
\label{tab:pythia-std-macro-32}
\end{table}

\begin{table}
\centering
\begin{tabular}{lccccc}
\toprule
Dataset & \alg & In-context learning & Fine-tuning full & Fine-tuning layer & Adapters\\
\midrule
AG-News-0 & 0.825 $\pm$ 0.031 & 0.503 $\pm$ 0.003 & 0.802 $\pm$ 0.041 & 0.818 $\pm$ 0.014 & 0.739 $\pm$ 0.049 \\
AG-News-1 & 0.880 $\pm$ 0.012 & 0.516 $\pm$ 0.026 & 0.898 $\pm$ 0.043 & 0.912 $\pm$ 0.034 & 0.829 $\pm$ 0.010 \\
AG-News-2 & 0.771 $\pm$ 0.009 & 0.519 $\pm$ 0.025 & 0.736 $\pm$ 0.071 & 0.773 $\pm$ 0.039 & 0.684 $\pm$ 0.040 \\
AG-News-3 & 0.810 $\pm$ 0.016 & 0.509 $\pm$ 0.013 & 0.753 $\pm$ 0.058 & 0.805 $\pm$ 0.031 & 0.742 $\pm$ 0.030 \\
Civil Comments & 0.587 $\pm$ 0.025 & 0.500 $\pm$ 0.011 & 0.549 $\pm$ 0.016 & 0.564 $\pm$ 0.062 & 0.555 $\pm$ 0.009 \\
DBPedia-0 & 0.857 $\pm$ 0.035 & 0.592 $\pm$ 0.083 & 0.801 $\pm$ 0.125 & 0.822 $\pm$ 0.043 & 0.834 $\pm$ 0.038 \\
DBPedia-1 & 0.802 $\pm$ 0.031 & 0.558 $\pm$ 0.047 & 0.870 $\pm$ 0.037 & 0.800 $\pm$ 0.041 & 0.813 $\pm$ 0.034 \\
DBPedia-2 & 0.879 $\pm$ 0.018 & 0.609 $\pm$ 0.025 & 0.938 $\pm$ 0.018 & 0.920 $\pm$ 0.031 & 0.903 $\pm$ 0.021 \\
DBPedia-3 & 0.866 $\pm$ 0.035 & 0.634 $\pm$ 0.110 & 0.812 $\pm$ 0.163 & 0.911 $\pm$ 0.004 & 0.876 $\pm$ 0.037 \\
IMDB & 0.553 $\pm$ 0.046 & 0.541 $\pm$ 0.024 & 0.636 $\pm$ 0.016 & 0.600 $\pm$ 0.018 & 0.536 $\pm$ 0.024 \\
Rotten Tomatoes & 0.589 $\pm$ 0.037 & 0.575 $\pm$ 0.039 & 0.713 $\pm$ 0.029 & 0.630 $\pm$ 0.026 & 0.527 $\pm$ 0.015 \\
SMS Spam & 0.933 $\pm$ 0.017 & 0.762 $\pm$ 0.033 & 0.956 $\pm$ 0.026 & 0.905 $\pm$ 0.056 & 0.917 $\pm$ 0.017 \\
SST & 0.579 $\pm$ 0.032 & 0.502 $\pm$ 0.020 & 0.739 $\pm$ 0.013 & 0.638 $\pm$ 0.035 & 0.562 $\pm$ 0.017 \\
Youtube & 0.799 $\pm$ 0.117 & 0.710 $\pm$ 0.182 & 0.887 $\pm$ 0.033 & 0.756 $\pm$ 0.115 & 0.850 $\pm$ 0.024 \\
\bottomrule
\end{tabular}
\vspace{2mm}
\caption{\textbf{Standard deviation of accuracy, number of examples = 32, \bloomsmall }}
\label{tab:bloom-std-macro-32}
\end{table}

\begin{table}
\centering
\begin{tabular}{lccccc}
\toprule
Dataset & \alg & In-context learning & Fine-tuning full & Fine-tuning layer & Adapters\\
\midrule
AG-News-0 & 0.812 $\pm$ 0.021 & 0.560 $\pm$ 0.046 & 0.832 $\pm$ 0.023 & 0.876 $\pm$ 0.012 & 0.833 $\pm$ 0.016 \\
AG-News-1 & 0.904 $\pm$ 0.009 & 0.623 $\pm$ 0.051 & 0.924 $\pm$ 0.026 & 0.932 $\pm$ 0.007 & 0.923 $\pm$ 0.009 \\
AG-News-2 & 0.786 $\pm$ 0.008 & 0.566 $\pm$ 0.023 & 0.791 $\pm$ 0.016 & 0.824 $\pm$ 0.010 & 0.797 $\pm$ 0.010 \\
AG-News-3 & 0.822 $\pm$ 0.030 & 0.533 $\pm$ 0.029 & 0.800 $\pm$ 0.042 & 0.840 $\pm$ 0.015 & 0.825 $\pm$ 0.034 \\
Civil Comments & 0.591 $\pm$ 0.029 & 0.497 $\pm$ 0.003 & 0.568 $\pm$ 0.038 & 0.554 $\pm$ 0.029 & 0.614 $\pm$ 0.028 \\
DBPedia-0 & 0.911 $\pm$ 0.011 & 0.676 $\pm$ 0.097 & 0.892 $\pm$ 0.017 & 0.898 $\pm$ 0.012 & 0.894 $\pm$ 0.023 \\
DBPedia-1 & 0.823 $\pm$ 0.025 & 0.644 $\pm$ 0.115 & 0.838 $\pm$ 0.058 & 0.819 $\pm$ 0.018 & 0.833 $\pm$ 0.026 \\
DBPedia-2 & 0.902 $\pm$ 0.015 & 0.622 $\pm$ 0.114 & 0.931 $\pm$ 0.032 & 0.940 $\pm$ 0.005 & 0.913 $\pm$ 0.007 \\
DBPedia-3 & 0.883 $\pm$ 0.023 & 0.620 $\pm$ 0.129 & 0.891 $\pm$ 0.015 & 0.911 $\pm$ 0.006 & 0.889 $\pm$ 0.021 \\
IMDB & 0.557 $\pm$ 0.033 & 0.522 $\pm$ 0.025 & 0.641 $\pm$ 0.008 & 0.564 $\pm$ 0.017 & 0.577 $\pm$ 0.020 \\
Rotten Tomatoes & 0.572 $\pm$ 0.014 & 0.548 $\pm$ 0.045 & 0.710 $\pm$ 0.010 & 0.575 $\pm$ 0.048 & 0.572 $\pm$ 0.029 \\
SMS Spam & 0.882 $\pm$ 0.044 & 0.860 $\pm$ 0.036 & 0.946 $\pm$ 0.019 & 0.881 $\pm$ 0.013 & 0.925 $\pm$ 0.010 \\
SST & 0.570 $\pm$ 0.019 & 0.563 $\pm$ 0.032 & 0.705 $\pm$ 0.019 & 0.575 $\pm$ 0.028 & 0.542 $\pm$ 0.035 \\
Youtube & 0.810 $\pm$ 0.039 & 0.792 $\pm$ 0.081 & 0.923 $\pm$ 0.014 & 0.874 $\pm$ 0.029 & 0.832 $\pm$ 0.053\\
\bottomrule
\end{tabular}
\vspace{2mm}
\caption{\textbf{Standard deviation of accuracy, number of examples = 48, \gptsmall }}
\label{tab:gpt-std-macro-48}
\end{table}

\begin{table}
\centering

\begin{tabular}{lccccc}
\toprule
Dataset & \alg & In-context learning & Fine-tuning full & Fine-tuning layer & Adapters\\
\midrule
AG-News-0 & 0.846 $\pm$ 0.018 & 0.496 $\pm$ 0.007 & 0.871 $\pm$ 0.015 & 0.871 $\pm$ 0.005 & 0.732 $\pm$ 0.036 \\
AG-News-1 & 0.923 $\pm$ 0.005 & 0.528 $\pm$ 0.044 & 0.942 $\pm$ 0.004 & 0.935 $\pm$ 0.009 & 0.780 $\pm$ 0.040 \\
AG-News-2 & 0.798 $\pm$ 0.009 & 0.509 $\pm$ 0.004 & 0.731 $\pm$ 0.040 & 0.782 $\pm$ 0.018 & 0.722 $\pm$ 0.019 \\
AG-News-3 & 0.845 $\pm$ 0.012 & 0.515 $\pm$ 0.014 & 0.787 $\pm$ 0.046 & 0.852 $\pm$ 0.011 & 0.718 $\pm$ 0.026 \\
Civil Comments & 0.605 $\pm$ 0.037 & 0.512 $\pm$ 0.009 & 0.607 $\pm$ 0.039 & 0.556 $\pm$ 0.014 & 0.556 $\pm$ 0.018 \\
DBPedia-0 & 0.905 $\pm$ 0.020 & 0.549 $\pm$ 0.057 & 0.924 $\pm$ 0.020 & 0.830 $\pm$ 0.034 & 0.798 $\pm$ 0.021 \\
DBPedia-1 & 0.840 $\pm$ 0.021 & 0.592 $\pm$ 0.062 & 0.867 $\pm$ 0.024 & 0.808 $\pm$ 0.031 & 0.780 $\pm$ 0.011 \\
DBPedia-2 & 0.916 $\pm$ 0.009 & 0.636 $\pm$ 0.081 & 0.937 $\pm$ 0.017 & 0.923 $\pm$ 0.013 & 0.870 $\pm$ 0.022 \\
DBPedia-3 & 0.888 $\pm$ 0.022 & 0.618 $\pm$ 0.096 & 0.927 $\pm$ 0.011 & 0.885 $\pm$ 0.007 & 0.827 $\pm$ 0.026 \\
IMDB & 0.557 $\pm$ 0.034 & 0.503 $\pm$ 0.008 & 0.632 $\pm$ 0.013 & 0.566 $\pm$ 0.030 & 0.545 $\pm$ 0.008 \\
Rotten Tomatoes & 0.601 $\pm$ 0.026 & 0.480 $\pm$ 0.014 & 0.683 $\pm$ 0.033 & 0.556 $\pm$ 0.033 & 0.562 $\pm$ 0.018 \\
SMS Spam & 0.956 $\pm$ 0.008 & 0.869 $\pm$ 0.044 & 0.966 $\pm$ 0.011 & 0.960 $\pm$ 0.008 & 0.891 $\pm$ 0.035 \\
SST & 0.612 $\pm$ 0.024 & 0.500 $\pm$ 0.025 & 0.702 $\pm$ 0.046 & 0.594 $\pm$ 0.042 & 0.560 $\pm$ 0.023 \\
Youtube & 0.872 $\pm$ 0.019 & 0.654 $\pm$ 0.063 & 0.904 $\pm$ 0.033 & 0.826 $\pm$ 0.046 & 0.666 $\pm$ 0.027 \\
\bottomrule
\end{tabular}
\vspace{2mm}
\caption{\textbf{Standard deviation of accuracy, number of examples = 48, \pythiasmall }}
\label{tab:pythia-std-macro-48}
\end{table}

\begin{table}
\centering
\begin{tabular}{lccccc}
\toprule
Dataset & \alg & In-context learning & Fine-tuning full & Fine-tuning layer & Adapters\\
\midrule
AG-News-0 & 0.829 $\pm$ 0.029 & 0.507 $\pm$ 0.005 & 0.829 $\pm$ 0.071 & 0.847 $\pm$ 0.022 & 0.788 $\pm$ 0.028 \\
AG-News-1 & 0.909 $\pm$ 0.010 & 0.543 $\pm$ 0.052 & 0.940 $\pm$ 0.011 & 0.935 $\pm$ 0.004 & 0.856 $\pm$ 0.014 \\
AG-News-2 & 0.789 $\pm$ 0.016 & 0.511 $\pm$ 0.012 & 0.715 $\pm$ 0.057 & 0.793 $\pm$ 0.027 & 0.708 $\pm$ 0.011 \\
AG-News-3 & 0.832 $\pm$ 0.018 & 0.530 $\pm$ 0.038 & 0.834 $\pm$ 0.029 & 0.843 $\pm$ 0.014 & 0.763 $\pm$ 0.015 \\
Civil Comments & 0.602 $\pm$ 0.030 & 0.506 $\pm$ 0.011 & 0.605 $\pm$ 0.044 & 0.592 $\pm$ 0.039 & 0.566 $\pm$ 0.008 \\
DBPedia-0 & 0.889 $\pm$ 0.022 & 0.703 $\pm$ 0.085 & 0.933 $\pm$ 0.021 & 0.888 $\pm$ 0.024 & 0.873 $\pm$ 0.027 \\
DBPedia-1 & 0.817 $\pm$ 0.023 & 0.734 $\pm$ 0.076 & 0.891 $\pm$ 0.028 & 0.836 $\pm$ 0.036 & 0.834 $\pm$ 0.024 \\
DBPedia-2 & 0.900 $\pm$ 0.011 & 0.847 $\pm$ 0.037 & 0.949 $\pm$ 0.013 & 0.940 $\pm$ 0.009 & 0.914 $\pm$ 0.013 \\
DBPedia-3 & 0.884 $\pm$ 0.020 & 0.815 $\pm$ 0.105 & 0.928 $\pm$ 0.024 & 0.917 $\pm$ 0.015 & 0.891 $\pm$ 0.037 \\
IMDB & 0.565 $\pm$ 0.033 & 0.545 $\pm$ 0.033 & 0.641 $\pm$ 0.020 & 0.604 $\pm$ 0.022 & 0.542 $\pm$ 0.015 \\
Rotten Tomatoes & 0.605 $\pm$ 0.015 & 0.505 $\pm$ 0.005 & 0.704 $\pm$ 0.025 & 0.672 $\pm$ 0.042 & 0.531 $\pm$ 0.018 \\
SMS Spam & 0.933 $\pm$ 0.009 & 0.636 $\pm$ 0.130 & 0.929 $\pm$ 0.058 & 0.919 $\pm$ 0.032 & 0.914 $\pm$ 0.027 \\
SST & 0.610 $\pm$ 0.030 & 0.489 $\pm$ 0.004 & 0.695 $\pm$ 0.020 & 0.673 $\pm$ 0.029 & 0.538 $\pm$ 0.011 \\
Youtube & 0.834 $\pm$ 0.049 & 0.797 $\pm$ 0.090 & 0.805 $\pm$ 0.095 & 0.851 $\pm$ 0.020 & 0.865 $\pm$ 0.012 \\
\bottomrule
\end{tabular}
\vspace{2mm}
\caption{\textbf{Standard deviation of accuracy, number of examples = 48, \bloomsmall }}
\label{tab:bloom-std-macro-48}
\end{table}

\begin{table}
\centering
\begin{tabular}{lccccc}
\toprule
Dataset & \alg & In-context learning & Fine-tuning full & Fine-tuning layer & Adapters\\
\midrule
AG-News-0 & 0.817 $\pm$ 0.024 & 0.539 $\pm$ 0.044 & 0.855 $\pm$ 0.030 & 0.877 $\pm$ 0.013 & 0.830 $\pm$ 0.023 \\
AG-News-1 & 0.904 $\pm$ 0.007 & 0.561 $\pm$ 0.048 & 0.923 $\pm$ 0.027 & 0.939 $\pm$ 0.003 & 0.922 $\pm$ 0.010 \\
AG-News-2 & 0.790 $\pm$ 0.026 & 0.576 $\pm$ 0.022 & 0.814 $\pm$ 0.009 & 0.839 $\pm$ 0.009 & 0.816 $\pm$ 0.014 \\
AG-News-3 & 0.833 $\pm$ 0.015 & 0.550 $\pm$ 0.035 & 0.803 $\pm$ 0.017 & 0.852 $\pm$ 0.013 & 0.842 $\pm$ 0.018 \\
Civil Comments & 0.581 $\pm$ 0.039 & 0.499 $\pm$ 0.002 & 0.587 $\pm$ 0.018 & 0.576 $\pm$ 0.039 & 0.605 $\pm$ 0.036 \\
DBPedia-0 & 0.915 $\pm$ 0.005 & 0.652 $\pm$ 0.095 & 0.917 $\pm$ 0.011 & 0.908 $\pm$ 0.019 & 0.909 $\pm$ 0.025 \\
DBPedia-1 & 0.825 $\pm$ 0.037 & 0.633 $\pm$ 0.104 & 0.838 $\pm$ 0.032 & 0.829 $\pm$ 0.012 & 0.852 $\pm$ 0.011 \\
DBPedia-2 & 0.887 $\pm$ 0.020 & 0.606 $\pm$ 0.088 & 0.950 $\pm$ 0.007 & 0.952 $\pm$ 0.011 & 0.916 $\pm$ 0.014 \\
DBPedia-3 & 0.873 $\pm$ 0.033 & 0.611 $\pm$ 0.136 & 0.898 $\pm$ 0.034 & 0.927 $\pm$ 0.007 & 0.909 $\pm$ 0.008 \\
IMDB & 0.558 $\pm$ 0.029 & 0.515 $\pm$ 0.024 & 0.646 $\pm$ 0.010 & 0.563 $\pm$ 0.021 & 0.566 $\pm$ 0.035 \\
Rotten Tomatoes & 0.601 $\pm$ 0.020 & 0.533 $\pm$ 0.053 & 0.708 $\pm$ 0.015 & 0.579 $\pm$ 0.049 & 0.556 $\pm$ 0.033 \\
SMS Spam & 0.869 $\pm$ 0.023 & 0.825 $\pm$ 0.074 & 0.934 $\pm$ 0.016 & 0.898 $\pm$ 0.008 & 0.927 $\pm$ 0.004 \\
SST & 0.570 $\pm$ 0.038 & 0.554 $\pm$ 0.041 & 0.712 $\pm$ 0.033 & 0.609 $\pm$ 0.021 & 0.567 $\pm$ 0.016 \\
Youtube & 0.790 $\pm$ 0.050 & 0.819 $\pm$ 0.050 & 0.926 $\pm$ 0.014 & 0.891 $\pm$ 0.031 & 0.837 $\pm$ 0.055 \\
\bottomrule
\end{tabular}
\vspace{2mm}
\caption{\textbf{Standard deviation of accuracy, number of examples = 64, \gptsmall }}
\label{tab:gpt-std-macro-64}
\end{table}

\begin{table}
\centering

\begin{tabular}{lccccc}
\toprule
Dataset & \alg & In-context learning & Fine-tuning full & Fine-tuning layer & Adapters\\
\midrule
AG-News-0 & 0.851 $\pm$ 0.024 & 0.502 $\pm$ 0.003 & 0.858 $\pm$ 0.009 & 0.869 $\pm$ 0.010 & 0.764 $\pm$ 0.030 \\
AG-News-1 & 0.925 $\pm$ 0.002 & 0.529 $\pm$ 0.030 & 0.937 $\pm$ 0.011 & 0.938 $\pm$ 0.004 & 0.820 $\pm$ 0.021 \\
AG-News-2 & 0.812 $\pm$ 0.007 & 0.513 $\pm$ 0.013 & 0.752 $\pm$ 0.051 & 0.795 $\pm$ 0.010 & 0.738 $\pm$ 0.008 \\
AG-News-3 & 0.851 $\pm$ 0.007 & 0.503 $\pm$ 0.004 & 0.820 $\pm$ 0.020 & 0.855 $\pm$ 0.009 & 0.741 $\pm$ 0.019 \\
Civil Comments & 0.606 $\pm$ 0.029 & 0.500 $\pm$ 0.001 & 0.659 $\pm$ 0.032 & 0.566 $\pm$ 0.023 & 0.566 $\pm$ 0.018 \\
DBPedia-0 & 0.910 $\pm$ 0.013 & 0.518 $\pm$ 0.020 & 0.912 $\pm$ 0.027 & 0.858 $\pm$ 0.019 & 0.825 $\pm$ 0.015 \\
DBPedia-1 & 0.839 $\pm$ 0.027 & 0.542 $\pm$ 0.028 & 0.897 $\pm$ 0.016 & 0.824 $\pm$ 0.031 & 0.788 $\pm$ 0.018 \\
DBPedia-2 & 0.916 $\pm$ 0.011 & 0.609 $\pm$ 0.106 & 0.953 $\pm$ 0.013 & 0.922 $\pm$ 0.012 & 0.882 $\pm$ 0.019 \\
DBPedia-3 & 0.887 $\pm$ 0.028 & 0.527 $\pm$ 0.022 & 0.940 $\pm$ 0.013 & 0.904 $\pm$ 0.007 & 0.857 $\pm$ 0.020 \\
IMDB & 0.556 $\pm$ 0.024 & 0.506 $\pm$ 0.005 & 0.619 $\pm$ 0.030 & 0.574 $\pm$ 0.017 & 0.552 $\pm$ 0.009 \\
Rotten Tomatoes & 0.624 $\pm$ 0.024 & 0.485 $\pm$ 0.020 & 0.686 $\pm$ 0.040 & 0.577 $\pm$ 0.033 & 0.568 $\pm$ 0.019 \\
SMS Spam & 0.937 $\pm$ 0.018 & 0.905 $\pm$ 0.017 & 0.960 $\pm$ 0.021 & 0.961 $\pm$ 0.006 & 0.899 $\pm$ 0.014 \\
SST & 0.606 $\pm$ 0.022 & 0.508 $\pm$ 0.022 & 0.688 $\pm$ 0.047 & 0.602 $\pm$ 0.036 & 0.567 $\pm$ 0.017 \\
Youtube & 0.888 $\pm$ 0.028 & 0.715 $\pm$ 0.097 & 0.897 $\pm$ 0.046 & 0.878 $\pm$ 0.050 & 0.675 $\pm$ 0.019 \\
\bottomrule
\end{tabular}
\vspace{2mm}
\caption{\textbf{Standard deviation of accuracy, number of examples = 64, \pythiasmall }}
\label{tab:pythia-std-macro-64}
\end{table}

\begin{table}
\centering
\begin{tabular}{lccccc}
\toprule
Dataset & \alg & In-context learning & Fine-tuning full & Fine-tuning layer & Adapters\\
\midrule
AG-News-0 & 0.836 $\pm$ 0.018 & 0.509 $\pm$ 0.008 & 0.850 $\pm$ 0.027 & 0.856 $\pm$ 0.008 & 0.799 $\pm$ 0.029 \\
AG-News-1 & 0.918 $\pm$ 0.007 & 0.543 $\pm$ 0.033 & 0.900 $\pm$ 0.024 & 0.933 $\pm$ 0.008 & 0.875 $\pm$ 0.015 \\
AG-News-2 & 0.799 $\pm$ 0.012 & 0.515 $\pm$ 0.019 & 0.784 $\pm$ 0.018 & 0.831 $\pm$ 0.022 & 0.732 $\pm$ 0.017 \\
AG-News-3 & 0.836 $\pm$ 0.011 & 0.504 $\pm$ 0.003 & 0.811 $\pm$ 0.034 & 0.853 $\pm$ 0.011 & 0.784 $\pm$ 0.022 \\
Civil Comments & 0.602 $\pm$ 0.030 & 0.510 $\pm$ 0.012 & 0.573 $\pm$ 0.035 & 0.611 $\pm$ 0.025 & 0.572 $\pm$ 0.013 \\
DBPedia-0 & 0.905 $\pm$ 0.015 & 0.667 $\pm$ 0.052 & 0.936 $\pm$ 0.018 & 0.902 $\pm$ 0.022 & 0.882 $\pm$ 0.020 \\
DBPedia-1 & 0.809 $\pm$ 0.022 & 0.687 $\pm$ 0.117 & 0.887 $\pm$ 0.039 & 0.852 $\pm$ 0.032 & 0.853 $\pm$ 0.008 \\
DBPedia-2 & 0.881 $\pm$ 0.026 & 0.799 $\pm$ 0.075 & 0.955 $\pm$ 0.022 & 0.947 $\pm$ 0.009 & 0.922 $\pm$ 0.014 \\
DBPedia-3 & 0.877 $\pm$ 0.014 & 0.793 $\pm$ 0.106 & 0.906 $\pm$ 0.027 & 0.921 $\pm$ 0.017 & 0.899 $\pm$ 0.024 \\
IMDB & 0.571 $\pm$ 0.033 & 0.539 $\pm$ 0.020 & 0.621 $\pm$ 0.039 & 0.618 $\pm$ 0.013 & 0.542 $\pm$ 0.012 \\
Rotten Tomatoes & 0.597 $\pm$ 0.025 & 0.536 $\pm$ 0.047 & 0.684 $\pm$ 0.053 & 0.672 $\pm$ 0.041 & 0.543 $\pm$ 0.024 \\
SMS Spam & 0.907 $\pm$ 0.045 & 0.659 $\pm$ 0.133 & 0.942 $\pm$ 0.024 & 0.931 $\pm$ 0.030 & 0.909 $\pm$ 0.033 \\
SST & 0.620 $\pm$ 0.039 & 0.495 $\pm$ 0.015 & 0.672 $\pm$ 0.039 & 0.716 $\pm$ 0.032 & 0.554 $\pm$ 0.020 \\
Youtube & 0.844 $\pm$ 0.059 & 0.765 $\pm$ 0.137 & 0.879 $\pm$ 0.058 & 0.865 $\pm$ 0.033 & 0.883 $\pm$ 0.016 \\
\bottomrule
\end{tabular}
\vspace{2mm}
\caption{\textbf{Standard deviation of accuracy, number of examples = 64, \bloomsmall }}
\label{tab:bloom-std-macro-64}
\end{table}

\end{document}